\documentclass[10pt]{article}
\usepackage{amsmath,amsfonts,amssymb}
\usepackage{empheq}
\usepackage{caption}
\usepackage{url}
\usepackage{hyperref}
\usepackage{xcolor}
\def\qed{\hfill$\square$}
\def\fqed{\eqno{\square}}
\def\wt{\widetilde}
\def\ul{\underline}

\def\mn{{\mathfrak{n}}}

\def\dg{\mathrm{diag}}

\def\SG{{\cal SG}}

\def\wt#1{\widetilde{#1}}

\def\ov#1{\overline{#1}}

\def\mR{{{\mathfrak{R}}}}
\def\mr{{{\mathfrak{r}}}}

\def\mZ{{\mathfrak{Z}}}
\def\mA{{\mathfrak{A}}}
\def\mB{{\mathfrak{B}}}
\def\mX{{\mathfrak{X}}}
\def\mY{{\mathfrak{Y}}}

\def\three?{3}
\def\four?{4}
\def\ten?{10}

\def\Argmin{\mathop{\hbox{\rm Argmin}}}
\def\beq{\begin{equation}}
\def\eeq{\end{equation}}
\newtheorem{observation}{Observation}[section]

\def\norm2to2{{\|\cdot\|_{2,2}}}
\def\Prob{\mathrm{Prob}}

\def\bE{{\mathbf{E}}}

\def\inter{\hbox{\rm  int}}

\def\Diag{\hbox{\rm  Diag}}

\def\mR{{\mathfrak{R}}}

\def\Opt{\hbox{\rm Opt}}

\def\Tr{{\mathop{\hbox{\rm  Tr}}}}
\def\cA{{\cal A}}
\def\cB{{\cal B}}

\def\cH{{\cal H}}

\def\cK{{\cal K}}
\def\cL{{\cal L}}

\def\cN{{\cal N}}

\def\cP{{\cal P}}

\def\cR{{\cal R}}
\def\cS{{\cal S}}
\def\cT{{\cal T}}

\def\cV{{\cal V}}
\def\cW{{\cal W}}
\def\cX{{\cal X}}
\def\cY{{\cal Y}}
\def\cZ{{\cal Z}}

\def\C{{\cal C}}

\def\Argmin{\mathop{\hbox{\rm  Argmin}}}

\def\abs{\mbox\hbox{\rm  abs}}

\def\bS{{\mathbf{S}}}

\def\abs{{\hbox{\rm abs}}}

\def\mypict3{\epsfxsize=220pt\epsfysize=80pt\epsfbox}

\def\crd{\color{red}}

\def\bR{{\mathbf{R}}}

\def\cH{{\cal H}}
\def\Col{{\mathrm{Col}}}

\def\Risk{{\hbox{\rm Risk}}}

\newcommand{\be}{\begin{eqnarray}}
\newcommand{\ee}[1]{\label{#1}\end{eqnarray}}
\newcommand{\nn}{\nonumber \\}
\newcommand{\ese}{\end{align*}}
\newcommand{\bse}{\begin{align*}}
\newcommand{\rf}[1]{~(\ref{#1})}
\newcommand{\wh}[1]{{\widehat{#1}}}
\def\mR{{\mathfrak{R}}}

\newcommand{\hide}[1]{{}}

\newcommand{\anc}[2]{{\color{violet} #2}}
\newtheorem{lemma}{Lemma}[section]
\newtheorem{proposition}{Proposition}[section]
\definecolor{MyDarkBlue}{rgb}{0,0.08,0.45}
\def\mcP{{\mathfrak{P}}}
\newcommand{\half}{\tfrac{1}{2}}
\newcommand{\aic}[2]{{\color{violet}~#2}}
\newcommand{\Antimonotonicity}{{{$H$-monotonicity}}}
\makeatletter
\newcommand{\manuallabel}[2]{\protected@write\@auxout{}{\string\newlabel{#1}{{#2}{\thepage}}}}
\makeatother

\title{Recovering linear images of sparse signals from indirect observations}
\author{
Anatoli Juditsky
\thanks{\scriptsize LJK, Universit\'e Grenoble Alpes, 700 Avenue Centrale,  38401 Domaine Universitaire de Saint-Martin-d'Hères, France
$^*${\tt anatoli.juditsky@univ-grenoble-alpes.fr}}
\and Arkadi Nemirovski
\thanks{\scriptsize Georgia Institute
 of Technology, Atlanta, Georgia
30332, USA, {\tt nemirovs@isye.gatech.edu}}}
\date{}
\begin{document}
\maketitle
\begin{abstract}
In this paper, we develop and analyze techniques for recovering a linear image $Bx$ of an unknown signal $x$ from indirect noisy  observation $\omega=Ax+\xi$. It is {\em a priori} known that $x\in \cX$, a given convex compact set, and that $x$ is $s$-sparse---has at most $s$ nonvanishing entries. The proposed estimates belong to a large family of recovery routines by $\ell_1$-minimization. However, unlike the classical result describing performance of such estimates, we do not make any special (and hard to check) assumptions about the sensing matrix $A$ such as nullspace or Restricted Isometry condition and the like. As a consequence, parameters of the estimates and the upper bounds on their risks are not available in a closed analytic form, but are delivered instead by efficient computation as solutions to explicit convex optimization problems. \end{abstract}

\section{Introduction}\label{sect1}
\paragraph{The problem} we are interested in this work is as follows:
\begin{quotation} \noindent
($\mcP$) Given {\em sensing matrix} $A\in\bR^{m\times n}$, and observation
\begin{equation}\label{obs}
\omega=Ax+\xi_x\qquad\qquad\hbox{[$\xi_x$: observation noise]}
\end{equation}
of unknown signal $x$ known
\begin{enumerate}
\item[$\mcP.1$] to belong to a given  convex compact set $\cX\subset\bR^n$, and
\item[$\mcP.2$] to possess certain sparsity properties, specifically, to have $s$-sparse (i.e., with at most $s$ nonzero entries) image $Cx$,
with known $s$ and $C=[c_1,..,c_p]^T\in\bR^{p\times n}$,
\end{enumerate}
 we aim at
\begin{itemize}
\item[--] recovering the value at $x$ of a given linear form $z\mapsto g(z)=g^Tz$ (problem $(\mcP.g)$)
\item[--] estimating the image $Bx\in \bR^\nu$ of $x$ under a given linear mapping (problem $(\mcP.B$).
\end{itemize}
\end{quotation}
\noindent
To streamline the presentation in the introduction,
\footnote{In ``the main body'' of the paper, along with sub-Gaussian, we also consider general $C$ and other observation models, e.g.,  discrete and Poisson observations, see Section \ref{sec:obss}.} we consider here the situation in which the signal $x$ is sparse (i.e., $C$ is the identity) and the observation noise $\xi_x$ is sub-Gaussian,\footnote{We say that the random vector $\xi\in \bR^m$ is sub-Gaussian with parameters $\mu\in\bR^m$ and $\Sigma\in\bS^+_m$, denoted $\xi\sim \SG(\mu,\Sigma)$, if $\bE\left\{e^{t^T\xi}\right\}
\leq \exp\big\{t^T\mu+\half t^T\Sigma t\big\}$ for all $t\in \bR^m$.} $\xi_x\sim \SG(0,\sigma^2I)$.
In this paper we consider the estimate $\wh x=\wh x_H(\omega)$ of $x$ yielded by $\ell_1$-minimization which in the present situation takes the form \begin{equation}\label{iestx}
\widehat{x}=\widehat{x}_H(\omega)\in\Argmin_{v\in\cX} \left\{\|v\|_1: \|H^T(\omega-Av)\|_\infty\leq 1\right\}
\end{equation}
of $x$ (cf. Dantzig selector), and then recovers $g^Tx$ ($Bx$)  as $g^T\widehat{x}_H(\omega)$ (resp., $B\widehat{x}_H(\omega)$). Here the {\em contrast matrix} $H\in\bR^{m\times M}$ is the principal parameter of the construction. Our goal is to build, in a computationally efficient way, upper confidence bounds on the recovery loss
$|g^T\widehat{x}_H(\omega)-g^Tx|$ (or $\|B\widehat{x}_H(\omega)-Bx\|$ where $\|\cdot\|$ is a given norm).
\paragraph{Related work.}
The problem of estimating linear functionals of a sparse signal from noisy observation \rf{obs} has received much attention in statistical literature with $\ell_1$-minimization techniques such as Lasso and Dantzig Selector \cite{CT1,efron2004,HT1996}, providing the basis of the estimation routines. However, $\ell_1$-minimization is considered unsuitable to be
 used directly for the corresponding inference because of the bias incurred by the corresponding estimates. To
overcome this issue, inferences based on debiasing (or ``desparsifying'') the Lasso have been developed in \cite{javanmard2014confidence,javanmard2014hypothesis,li2020debiasing,van2014asymptotically,zhang2014confidence} which allowed constructing asymptotically normal estimates of individual entries of unknown $x$ with optimal covariance. This approach was then extended to build  asymptotically minimax optimal and adaptive estimates of general linear functionals \cite{cai2017confidence}, see also the detailed survey  of the present state-of-art \cite{cai2023statistical}. We can also mention a highly influential parallel framework which is referred to as double/debiased machine learning or post-selection inference for linear functionals (like treatment effects) independently developed in the econometrics literature \cite{belloni2014inference,chernozhukov2018double}.

Available results on estimating linear forms of sparse signals and deriving the corresponding confidence bounds, as well as general results
on performance of estimators based on $\ell_1$-minimization rely on specific hypotheses about sensing (regressor) matrix $A$. Classical assumptions of this kind include Restricted Isometry property (RIP) of $A$ \cite{CT1A}, Restricted Eigenvalue (RE) condition \cite{bickel2009simultaneous}, or Compatibility condition \cite{van2009conditions} which are operational when proving accuracy bounds for Lasso and Dantzig Selector estimates.\footnote{A simply-looking {\em necessary and sufficient condition} for the matrix $A$ to be {\em $s$-good}, that is, such that a minimizer of $\|v\|_1$ under the linear constraints $Av=Ax$ recovers exactly every $s$-sparse signal
 $x$, is provided by  the {\em nullspace property} \cite{DonohoHuo}. This condition underlies all accuracy guarantee for recovery by Lasso or Dantzig Selector for ``individual" matrices $A$  known to us.} as well as precision guarantees for debiased estimates \cite{van2014asymptotically}. It is known \cite{CT1A} that such conditions hold  with overwhelming probability in the large range of sparsity parameter $s$ for large random sensing matrices.

 Another line of results on the properties of such estimates, \cite{javanmard2014confidence,javanmard2014hypothesis,cai2017confidence} and construction of confidence bounds for sparse recovery \cite{dezeure2017high,nickl2014confidence,zhang2017simultaneous} rely on the explicit hypothesis of random nature of the regressor matrix $A$, e.g., on the assumption that $A$ is drawn from the Gaussian ensemble, that resampling from the same distribution is possible, etc.

The difficulty here is that available confidence bounds for linear functional and signal recovery are not valid unless
RIP, or RE, or Compatibility, or, at the least, nullspace condition holds, and these conditions cannot be efficiently verified (which essentially means that such assumptions cannot be checked even for an individual matrix $A$ of a moderate size). On the other hand, all presently known verifiable conditions (e.g., Mutual Incoherence (MI) condition) are quite conservative.

\paragraph{Our contribution.}  Our objective in this paper is to derive the estimates of linear functionals and linear images of unknown $x$ without making any assumptions about the matrix $A$.
This is done through building {\em a computationally tractable outer approximation $\ov\mZ_H$} for the ``difficult'' set $\mZ$ which localizes the recovery error $\zeta=\wh x-x$ with ``high probability." The principles of construction of such approximation are explained in detail in Section \ref{Sec1} and rely upon the ideas developed in \cite{juditsky2011verifiable}. This approximation allows to establish efficiently computable error bounds by maximizing the value of $g^Tz$ over $z\in \mZ_H$; it also allows to solve efficiently the problem of contrast optimization by minimizing these bounds with respect to matrix $H$ of contrasts. The same approximation is then used to derive the accuracy guarantees and optimize the contrast construction for the recovery of the sparse signal and its image.

Because we do not make any specific assumptions about matrix $A$ and the geometry of the set $\cX$, our results are of the ``purely computational nature." The latter means that the bounds for the risk we provide as well as the essential estimate parameter---contrast matrix $H$---are not available in the closed analytical form of ``rates of convergence" usually provided to quantify the estimation accuracy. On the other hand, given the problem data---matrix $A$, signal set $\cX$ and the information about the family of noise distributions, the optimized contrast $H$ and the corresponding risk bound are readily available ``by computation." In this perspective, the present work continues the line of research on ``operational approach'' to statistical estimation initiated by the work \cite{Don95}.

\paragraph{Organization of the paper.} We start in Section \ref{Sec1} with deriving efficiently computable confidence bounds for the error of estimating a linear functional $g^Tx$ of unknown sparse $x$, and then optimize these bounds with respect to the available problem data---sensing matrix $A$, the signal set $\cX$ and available information about the distribution of the noise $\xi_x$. Section \ref{sec:3} deals with the problem of the sparse signal recovery. Here we discuss the approach to building the accuracy guarantees and optimal contrast synthesis  based on the results of Section \ref{Sec1} which allows handling the case of an arbitrary convex compact signal set $\cX$. Then in Section \ref{Sec2b} we expose an alternative approach which deals with the situation in which the symmetrization $\cX-\cX$ of the signal set is assumed to be a basic ellitope (see Section \ref{Sec2b} for definition), e.g., finite and bounded intersections of ellipsoids/elliptic cylinders centered at the origin, or $\|\cdot\|_p$-balls, $2\leq p\leq\infty$. It should be added that aside from the just outlined case of sub-Gaussian observation, the derived results are valid also in the case of discrete and Poisson repeated observations described in Section \ref{sec:obss}.
\section{Preliminaries}\label{sectprelim}
We start with specifying some components of the stated in the introduction problem of our interest.

\subsection{Observation schemes}\label{sec:obss} We allow the distribution $P_x$ of the observation noise $\xi_x$   to depend on $x$  and assume that for every $x\in\cX$ it belongs to some family $\cP_x$. The prior information on $\cP_x$ we have is as follows: we are given a family of norms $\pi_\delta(\cdot)$ on $\bR^m$  parameterized by $\delta\in(0,1)$ such that
for every {$x\in\cX$ and} $P_x\in\cP_x$  it holds
\begin{equation}\label{eq1}
\forall  (\delta\in  (0,1),\,x\in\cX,\,h\in\bR^m):\quad \Prob_{\xi_x\sim P_x}\{\xi:\,|h^T\xi_x|>\pi_\delta(h)\}  \leq\delta,
\end{equation}
and $\pi_\delta(\cdot)$ is nonincreasing in $\delta$.
\paragraph{Examples.}
\begin{enumerate}
\item {\sl Uncertain-but-bounded noise:} $\cP$ is composed of all distributions on $\bR^m$ supported on a compact {symmetric   w.r.t. the origin} convex set $\cN\subset \bR^m$.\\
    Here for all $\delta\in(0,1)$ and $h\in\bR^m$
    $
    \pi_\delta(x,\cdot)=\pi(\cdot)
    $
    is the support function of $\cN$, \[\pi(h)=\sup_{\eta\in \cN} \eta^Th.\]
    \item {\sl Gaussian noise:} ${\cP_x}=\{\cN(0,\rho^2I_m),\;0\leq \rho\leq\sigma\}$.
    In this case we set
    \[
    \pi_{\delta}(h) ={\sigma\chi_{\delta}}\|h\|_2\quad \forall x\in \cX
    \]
    where $\chi_\alpha$ is the {\em inverse complementary error function}, i.e., the $(1-\alpha/2)$-quantile of $\cN(0,1)$.
    \item {\sl Sub-Gaussian noise:} $\cP_x\subset\cP_\sigma$, where  $\cP_\sigma$ is the family $\SG(0,\rho^2I_m)$ of sub-Gaussian distributions with parameters $(0, \rho^2I_m)$ with $\rho\in[0,\sigma]$. As is immediately seen, here one can set
         \[
    \pi_{\delta}(h) =\sigma\sqrt{2\ln(2/\delta)}\|h\|_2.
    \]
    \item {\sl Sub-Gaussian mixtures.} Assume that $\cX\subset\{x\in\bR^n_+:\sum_ix_i=1\}$ and that we are given $\sigma>0$, $\mu_i\in\bR^m$, $\rho_i,\;0\leq \rho_i\leq \sigma$, $i=1,...,n$, and $n$ sub-Gaussian distributions $P_i$  with parameters $\mu_i$ and $\rho_i^2I_m$. The realization $\omega$ of the observation $\omega_x\sim P_x$  associated with $x\in\cX$ is generated  as follows: we pick a random index $\iota\in\{1,...,n\}$, the probability to pick a  particular $i$  being $x_i$, and draw $\omega$ from the distribution $P_\iota$.
        \par
        As is immediately seen, the situation fits \rf{obs} with $A=[\mu_1,...,\mu_n]$ and zero-mean $\xi_x=\omega-Ax$.
        \par
        As is shown in the appendix (cf. Lemma \ref{lem:a1}), for properly selected $\varrho=\varrho(\cX,\mu_1,...,\mu_n)$ (e.g., $\varrho={2\over\sqrt{3}}\max_{i,j}\|\mu_i-\mu_j\|_2$), for every $x\in\cX$,  $\xi_x$   is sub-Gaussian with parameters $0$ and $[\sigma^2+\varrho^2]I_m$. In other words, the case of the sub-Gaussian mixture noise reduces to the sub-Gaussian one.
   \item {\sl Poisson observations.} Suppose that $A$ and $\cX$  satisfy $Ax\geq0$ for all $x\in\cX$. The $i$-th entry of the observation $\omega$ is drawn, independently across $i\leq m$, from Poisson distribution with parameter $[Ax]_i$,  that is, entries of $\xi_x$  are the differences $[\omega-Ax]_i$. As shown in \cite[Section 5.4.1]{PUP}, here one can set
\be \pi_{\delta}(h)=\left(4\ln\left(\tfrac{2}{\delta}\right)\max_{x\in \cX}
\sum_i[Ax]_ih_i^2+\tfrac{{16}}{9}\ln^2\left(\tfrac{2}{\delta}\right)\|h\|_\infty^2\right)^{1/2}.
\ee{Poisson_1}
        \end{enumerate}
        {\bf Repeated observations.}
        As stated so far, the problem of interest is to recover a linear form of $x$ via a {\sl single} observation (\ref{obs}) stemming from $x$. It makes sense to consider also the situation where we observe a collection
    \begin{equation}\label{ind}
        \omega_k=Ax+\xi_x^k,\,1\leq k\leq \cK,
        \end{equation}
        of $\cK$ individual observations, with independent across $k$ noises $\xi_k^i\sim P_x\in\cP_\sigma$.
        We propose to reduce this situation to a single-observation one, the aggregated observation being
        \begin{equation}\label{aggr}
        \omega={1\over \cK}\sum_{k=1}^\cK\omega_k=Ax+\underbrace{{1\over \cK}\sum_{k=1}^\cK\xi_x^k}_{\xi_x^{(\cK)}}.
        \end{equation}
        Note that in the Gaussian and sub-Gaussian cases with noise intensity $\sigma$ in individual observations
        this aggregation keeps the situation Gaussian/sub-Gaussian, the noise intensity in the aggregated observation being $\sigma/\sqrt{\cK}$. Similarly, by the stability property of the Poisson distribution, in the case of the Poisson observation, passing from the ``individual'' to $\cK$-repeated observation amounts to replacing \rf{Poisson_1} with
          \[
          \pi^{\cK}_{\delta}(h)=\frac{1}\cK\left(4\ln\left(\tfrac{2}{\delta}\right)\cK\max_{x\in \cX}\sum_i[Ax]_ih_i^2
          +\tfrac{16}{9}\ln^2\left(\tfrac{2}{\delta}\right)\|h\|_\infty^2\right)^{1/2}.
          \]
          Besides this, repeated observations allow to handle another important observation scheme.
        \begin{enumerate}
        \item[6] {\sl Discrete observations.} Here individual observation $\omega_k$ is drawn from a probability distribution on an $m$-element set $Y$, with the $i$-th element drawn with probability $[Ax]_i$. Needless to say, $A$ and $\cX$ are assumed to ensure that $Ax$ is a probability vector (i.e., nonnegative with entries summing up to 1) for every $x\in\cX$. Encoding the $m$ elements of $Y$ by the vectors $e_1,...,e_m$ of the canonic basis of $\bR^m$, the $k$-th observation takes the form
            \[
            \omega_k=Ax+\xi_x^k,
            \]
            with independent across $k$ zero mean noises $\xi_x^k$ taking values $e_i-Ax$ with probabilities $[Ax]_i$.
            The resulting ``single-observation'' scheme obeys  (\ref{eq1}) with $\pi_{\delta}(h)=\pi^\cK_{\delta}(h)$ (cf. \cite[Section 5.1.3.2]{PUP}),
            \[
            \pi^\cK_{\delta}(h)=\frac{1}\cK\left(4\ln(\tfrac{2}\delta)\cK\max_{x\in \cX}
            \sum_i[Ax]_ih_i^2+\tfrac{64}9\ln^2(\tfrac{2}\delta)\|h\|_\infty^2\right)^{1/2}
            \]
                           \end{enumerate}

\par\noindent{\bf Risk.}
Given $\epsilon\in(0,1)$, we quantify the performance of a candidate estimate $\widehat{g}(\omega)$ by its $(s,\epsilon)$-risk
\[
\Risk_\epsilon[\widehat{g}|\cX^s]=\inf\{\rho: \Prob_{\xi_x\sim P_x}
\{|\widehat{g}(Ax+\xi_x)-g^Tx|>\rho\}\leq\epsilon\;\forall x\in\cX^s\}
\]
where $\cX^s$ is the set of all $x\in\cX$ with $s$-sparse vectors $Cx$.

\subsection{Polyhedral estimates}\label{sect11}
Consider the situation obtained from the one described in the beginning of the introduction section by
\begin{itemize} \item lifting the sparsity assumption: the observed signal $x$ can be a whatever point of  a given convex compact set $\cX$
\item passing from recovering a linear form $g^Tx$ to recovering the image $Gx$ of $x$  under a given linear mapping $x\mapsto Gx:\bR^n\to\bR^\nu$, the performance of a candidate estimate $\widehat{G}(\cdot):\bR^m\to\bR^\mu$ quantified by its $(\epsilon,\|\cdot\|)$-risk
    $$
    \Risk_{\epsilon,\|\cdot\|}[\widehat{G}|\cX]=\inf\left\{\rho:\Prob_{\xi_x\sim P_x}\{\|\widehat{G}(Ax+\xi_x)-Gx\| >\rho\}\leq\epsilon\,\forall x\in \cX\right\}
    $$
    where  $\|\cdot\|$ is a given norm on $\bR^\nu$.
\end{itemize}
One of the approaches to the resulting estimation problem is to use a properly designed {\sl polyhedral estimate} introduced in \cite{juditsky2020polyhedral,PUP}.
\par Let us fix the risk tolerance $\epsilon\in(0,1)$.
\paragraph{Polyhedral estimates.} Given an $m\times  M$ matrix $H$ and $x\in\cX$, we set
$$
\Xi[x,H]=\{\xi\in\bR^m: \|H^T\xi\|_\infty\leq1\}.
$$
We call an $m\times M$ matrix $H$ {\sl $(1-\epsilon)$-admissible},  {if its columns $h_j$ satisfy $\pi_{\epsilon/M}(h_j)\leq1$, $j=1,...,M$.} Note  that due to the origin of $\pi_\delta$ and the union bound,
\begin{quotation}\noindent
(!) {\sl  When $H$ is $(1-\epsilon)$-admissible, for every $x\in\cX$ the event $\xi_x\in\Xi[x,H]$ takes place with $P_x$-probability at least $1-\epsilon$.}
\end{quotation}
Given $(1-\epsilon)$-admissible $m\times M$ matrix $H$ (referred to as the {\sl contrast matrix}), we associate with it {\sl polyhedral estimate $\wh x_H(\cdot)$ of signal $x$  underlying observation (\ref{obs})} defined as
\begin{equation}\label{polyest}
\wh x_H(\omega)\in\Argmin_u \left\{\|Cu\|_1:u\in\cV[\omega]:=\{u\in\cX: \|H^T[\omega-Au]\|_\infty \right\}
\end{equation}
(the estimate is undefined when $\cV[\omega]=\emptyset$). This estimate gives rise to estimates $\wh{G}_H(\cdot)=G\wh x_H(\cdot)$ of linear images $Gx$ of the signal $x$ underlying observations, and we refer to these estimates as {\sl associated with contrast matrix $H$ polyhedral estimates} of the images of $x$.
\par
Our interest in polyhedral estimates stems from the fact that in many interesting cases they are provably near-minimax-optimal. Specifically, assuming the observation noise to be Gaussian and independent of the signal
\begin{itemize}
\item it is shown in  \cite{juditsky2020polyhedral,PUP} that for a rather wide variety of signal sets $\cX$ (e.g., intersections of  $K$ centered at the origin ellipsoids/elliptic cylinders) and norms $\|\cdot\|$ (e.g., for $\|\cdot\|_p$-norms with $p\leq 2$),  given a contrast matrix $H$, one can efficiently compute
     "presumably tight'' upper bound $\mr[H]$ on the $(\epsilon,\|\cdot\|)$-risk of the estimate $\wh G_H$  (the risk  itself usually is difficult to compute). This bound  can be efficiently optimized over $(1-\epsilon)$-admissible  contrast matrices $H$, and the resulting bound is within  a ``moderate''---logarithmic in $K$ and $1/\epsilon$---factor of the minimax risk;
    \item from the results of \cite{Don95,linform} it follows that {\sl when recovering a linear form}, for every convex compact set $\cX$ which is computationally tractable, one can compute efficiently a {\sl single-column} contrast $H$  in such a way that the corresponding risk bound $\mr$ is within an absolute constant factor of the minimax risk.
        \end{itemize}
\subsection{Starting points}\label{Sec0}
We now introduce some notation and make several observations that will be used throughout the remainder of the paper.
\paragraph{\ref{Sec0}.1. Notation.} In the sequel
\begin{enumerate}
\item  $\|y\|_{s,p}$, $p\in[1,\infty]$, stands for the norm on $\bR^n$ defined as the $\ell_p$-norm of the vector composed of $s\leq n$ largest in magnitude entries of $y\in\bR^n$; (e.g., $\|y\|_{s,1}$ is the sum of magnitudes of the $s$ largest in magnitude entries in $y$).
\item  $\ov\cX=\cX-\cX$, $\cX^s=\left\{x\in\cX:\,\hbox{$x$ is $s$-sparse}\right\}$

\item We set
\[
\ul\cZ:=\{z\in\ov\cX: \|Cz\|_1\leq 2\|Cz\|_{s,1}\}
\subset \cZ:=\bigcup\limits_{\ell\leq p}\left[\cZ_\ell^+ \cup,\cZ_\ell^-\right]
\]
where for $\ell=1,...,p$
\begin{equation}\label{sets}
\begin{array}{rcl}
\cZ_\ell^+&=&\left\{z\in\ov\cX: \,\|Cz\|_1\leq 2s[Cz]_\ell,\,|[Cz]_j|\leq [Cz]_\ell\;\forall j \right\},\\
\cZ_\ell^-&=&\left\{z\in\ov\cX: \,\|Cz\|_1\leq -2s[Cz]_\ell,\,|[Cz]_j|\leq -[Cz]_\ell\;\forall j \right\}.
\end{array}
 \end{equation}
 \item Let $g\in\bR^n$. Given an $m\times M$ matrix $H$, we set for $\ell\leq p,\,\chi\in\{+,-\}$
 \begin{equation}\label{ZofH}
 \cZ_\ell^\pm[H]=\{z\in\cZ_\ell^\pm: \|H^TAz\|_\infty\leq 2\},
 \end{equation}
\be
\mr_{\ell,\chi}[g,H]&=&\max_{z}\{g^Tz:\,z\in\cZ_\ell^\chi[H]\},
\ee{mrofgHa}
and
 \begin{align}
\mr_{\ell}[g,H]&=\max\left[\mr_{\ell,-}[g,H],\,\mr_{\ell,+}[g,H]\right],\nn
\mr[g,H]&=\max_{\ell}\mr_{\ell}[g,H]\label{mrofgH}
\end{align}
For $G\in\bR^{J\times n}$ with rows $g_1^T,...,g_J^T$ we set
\begin{align}
\mr_{\ell j}^{\pm}[G,H]&=\mr_{\ell}^{\pm}[g_j,H]=\max_z\left\{[Gz]_j:\,z\in\cZ_\ell^\pm,\,\|H^TAz\|_\infty\leq 2\right\},\quad j\leq J,\nn
\mr_{\ell j}[G,H]&=\max\left[r_{\ell j}^{-}[Q,H],\,r_{\ell j}^{+}[Q,H]\right], \quad\ell\leq p,\,j\leq J,\nn
\mr^{(\ell)}[G,H]&=\left[\mr_{\ell 1}[G,H];...;\mr_{\ell J}[G,H]\right],\quad\ell\leq p, \label{mrofGH}\\
\mr_\ell[G,H]&=\max_{j\leq J}\mr_{\ell j}[G,H],\quad\ell\leq p,\nn
\mr[G,H]&=\max_{\ell\leq p}\mr_\ell[G,H].\nonumber
\end{align}
\end{enumerate}

\paragraph{\ref{Sec0}.2} Our first observation is as follows:
\begin{observation}\label{simpleobs}

Let $s\in\{1,...,p\}$, $H$ be a $(1-\epsilon)$-admissible $m\times M$  contrast matrix,
$x\in\cX^s$ and $\xi_x\in \Xi[x,H]$. Then $\wh x=\wh x_H(Ax+\xi_x)$ is well defined, and the recovery error $\zeta=\wh x -x$ belongs to $\underline\cZ$ and satisfies $\|H^TA\zeta\|_\infty\leq2$, that is,
\[
\zeta\in\ul\cZ[H]:=\left\{z\in\ul\cZ:\|H^TAz\|_\infty\leq 2\right\},
\]
so that
\be
\zeta\in\mZ[H]:=\bigcup\limits_{\ell\leq p}\left[\cZ_\ell^+[H]\cup\cZ_\ell^-[H]\right]
\ee{thatis}
Besides this, one has
\begin{equation}\label{eq14}
\|C\zeta\|_p\leq 2^{1/p}\|C\zeta\|_{s,p}\leq (2s)^{1/p}\|C\zeta\|_\infty,\quad1\leq p\leq\infty.
\end{equation}
Finally, the sets $\cZ_\ell^\pm[H]$ ``are decreasing as $H$ grows": when $H'$ is a submatrix of $H$, we have $\cZ_\ell^\pm[H]\subset \cZ_\ell^\pm[H']$; in the sequel, we refer to this property as {\em \Antimonotonicity.}
\end{observation}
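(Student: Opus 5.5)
The argument is the standard $\ell_1$-recovery reasoning, made relative to the contrast $H$. The constraint in (\ref{polyest}) is read as $\|H^T[\omega-Au]\|_\infty\leq1$.

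\emph{Well-posedness.} With $\omega=Ax+\xi_x$ and $\xi_x\in\Xi[x,H]$, the true signal satisfies $\|H^T(\omega-Ax)\|_\infty=\|H^T\xi_x\|_\infty\leq1$. Hence $x\in\cV[\omega]$, so $\cV[\omega]\neq\emptyset$. The set $\cV[\omega]$ is the intersection of the compact set $\cX$ with a closed polyhedral set, so it is compact. The objective $\|Cu\|_1$ is continuous, so a minimizer $\wh x$ exists.

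\emph{The bound $\|H^TA\zeta\|_\infty\leq2$.} Both $x$ and $\wh x$ lie in $\cV[\omega]$, and $A\zeta=(\omega-Ax)-(\omega-A\wh x)$. The triangle inequality gives $\|H^TA\zeta\|_\infty\leq1+1=2$. Also $\zeta=\wh x-x\in\cX-\cX=\ov\cX$.

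\emph{The cone condition $\|C\zeta\|_1\leq2\|C\zeta\|_{s,1}$.} Since $x$ is feasible, $\|C\wh x\|_1\leq\|Cx\|_1$. Let $S$ be the support of $Cx$, so $|S|\leq s$, and write $w=C\zeta$, so that $C\wh x=Cx+w$. Then
\[
\|Cx\|_1\geq\|Cx+w\|_1\geq\|Cx\|_1-\|w_S\|_1+\|w_{S^c}\|_1,
\]
which gives $\|w_{S^c}\|_1\leq\|w_S\|_1$. Therefore $\|w\|_1\leq2\|w_S\|_1\leq2\|w\|_{s,1}$. This shows $\zeta\in\ul\cZ[H]$.

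\emph{Inclusion in $\mZ[H]$.} Pick an index $\ell$ with $|w_\ell|=\|w\|_\infty$. Since $\|w\|_{s,1}\leq s\|w\|_\infty$, we get $\|w\|_1\leq2s|w_\ell|$, and $|w_j|\leq|w_\ell|$ for all $j$. If $w_\ell\geq0$, then $\zeta\in\cZ_\ell^+$; otherwise $\zeta\in\cZ_\ell^-$. Together with $\|H^TA\zeta\|_\infty\leq2$, this yields (\ref{thatis}). If $w=0$, any $\ell$ works with the sign $+$.

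\emph{The norm bounds (\ref{eq14}).} Let $T$ be the index set of the $s$ largest-magnitude entries of $w$, and $T^c$ its complement. The cone condition gives $\|w_{T^c}\|_1\leq\|w_T\|_1$. Every entry of $w_{T^c}$ is bounded in magnitude by $\min_{i\in T}|w_i|$. For $p<\infty$ this gives
\[
\|w_{T^c}\|_p^p\leq\|w_{T^c}\|_\infty^{p-1}\|w_{T^c}\|_1\leq\min_{i\in T}|w_i|^{p-1}\sum_{i\in T}|w_i|\leq\sum_{i\in T}|w_i|^p=\|w\|_{s,p}^p.
\]
Adding $\|w_T\|_p^p=\|w\|_{s,p}^p$ yields $\|w\|_p^p\leq2\|w\|_{s,p}^p$, i.e., $\|w\|_p\leq2^{1/p}\|w\|_{s,p}$. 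The second inequality, $\|w\|_{s,p}\leq s^{1/p}\|w\|_\infty$, is immediate. The case $p=\infty$ is trivial. This estimate of the tail against the head is the only step requiring any care.

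\emph{\Antimonotonicity.} If $H'$ consists of some columns of $H$, then $\|H'^TAz\|_\infty\leq\|H^TAz\|_\infty$. The constraint defining $\cZ_\ell^\pm[H]$ therefore implies the one defining $\cZ_\ell^\pm[H']$, so $\cZ_\ell^\pm[H]\subset\cZ_\ell^\pm[H']$.

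Finally, the admissibility of $H$ is not needed for this deterministic statement. It enters only through (!), which guarantees that $\xi_x\in\Xi[x,H]$ holds with probability at least $1-\epsilon$.
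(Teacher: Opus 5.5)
Your proof is correct and follows the paper's argument essentially step by step. The shared steps are: feasibility of $x$ in $\cV[\omega]$, the bound $\|H^TA\zeta\|_\infty\leq 2$ from two feasible points, the cone condition from $\ell_1$-minimality (the content of Lemma~\ref{simple1}), and choosing $\ell$ as the index of the largest-magnitude entry of $C\zeta$. The one minor deviation is in the tail-versus-head estimate. You bound tail entries by the smallest head entry, whereas the paper uses the head average followed by the power-mean inequality; your version is equally valid and slightly simpler.
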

{\bf Proof.} Under the premise of the observation, $x\in \cV[Ax+\xi_x]$, implying that the latter set is nonempty,  whence $\wh x$ is well defined and belongs to $\cV[Ax+\xi_x]$. Thus, under the circumstances, $\zeta\in\cV[Ax+\xi_x]-\cV[Ax+\xi_x]\subset \cX-\cX$ and therefore
\begin{equation}\label{eq13}
\zeta\in\ov\cX\;\mbox{and}\;\|H^TA\zeta\|_\infty\leq 2.
\end{equation}
Besides this, $y:=Cx$ is $s$-sparse and whenever $\wh x$ is well defined, for  $\wh y:=C\wh x$ one has $\|\wh y\|_1\leq \|y\|_1$. By Lemma \ref{simple1} these
relations imply (\ref{eq14}). The latter relation with $p=1$ means that
\[\zeta\in \ul\cZ\cap \{z:\,\|H^TAz\|_\infty\leq 2\}\subset\{z\in\ov\cX:\,\|Cz\|_1\leq 2s\|Cz\|_\infty,\,\|H^TAz\|_\infty\leq 2\}.
\]
Now, let $\ell\in\{1,...,p\}$ be the index of the largest in magnitude entry in $C\zeta$,
and $\chi\in\{-,+\}$ be the sign of this entry. We have
\[
\zeta\in\{z\in\ov\cX: \,\|Cz\|_1\leq 2s\|Cz\|_\infty,\, \|Cz\|_\infty=\chi[Cz]_\ell\}=\cZ_\ell^{\chi}
\] which combines with $\|H^TA\zeta\|_\infty\leq 2$ to imply that
$\zeta\in\cZ_\ell^\chi[H]$. {\Antimonotonicity} is evident. \qed
\paragraph{\ref{Sec0}.3} Our next observation goes back to  Observation 5.3 in \cite[Section 5.1.4.3]{PUP}:
\begin{observation}\label{sk1}
Given
$e\in \bR^n$, $A\in\bR^{m\times n}$, a norm $s(\cdot)$ on $\bR^m$, a convex and compact set $\cW\subset\bR^n$ containing the origin, and $\kappa>0$,
consider the  saddle point problem
\begin{equation}\label{problms}
\Opt=\inf_{f\in\bR^m}\max_{w\in \cW}\left\{\psi(f,w)=[e-A^Tf]^Tw+\kappa s(f)\right\}
\end{equation}
together with the induced primal and dual problems
\begin{align*}
\Opt(P)&=\inf_{f\in\bR^m}\left\{\ov\psi(f)=\max_{w\in \cW}\psi(f,w)\right\}
=\inf_{f\in\bR^m}\left\{\kappa s(f)+\max_{w\in \cW}[e-A^Tf]^Tw\right\},\tag{\em P}
\end{align*}
\begin{align*}
\Opt(D)&=\sup_{w\in\cW}\left\{\underline\psi(w)=\inf_{f\in \bR^m}\psi(f,w)\right\}
=\max_{w\in \cW}\left\{e^Tw+\inf_{f\in\bR^m}\left[\kappa s(f)-w^TA^Tf\right]\right\}.\tag{\em D}
\end{align*}
Both problems $(P)$ and $(D)$ are solvable with equal optimal values: $\Opt=\Opt(P)=\Opt(D)$, and when
$f_*$ is an optimal solution to $(P)$, and $h_*$ is such that $s(h_*)=1$ and $f_*=s(f_*)h_*$
\par{\rm (i)} One has
\begin{equation}\label{eq55}
\Opt\geq\max_w\{e^Tw: \,w\in\cW,\, h_*^TA
w\leq\kappa\}
\end{equation}
\par{\rm (ii)} Whenever $H$ is a matrix with columns $h_j$ satisfying $s(h_j)\leq 1$, one has
\begin{equation}\label{eq666}
\Opt\leq\max_w\{e^Tw: \,w\in\cW,\, \|H^TAw\|_\infty\leq\kappa\}
\end{equation}
\end{observation}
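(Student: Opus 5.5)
The plan is to argue in three stages: convex duality for the existence and equality claims, then (i) from a Lagrangian bound at the optimal $f_*$, then (ii) from the optimality of $f_*$ against a restricted family of candidates.

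\textbf{Existence and equality of values.} The function $\psi(f,w)$ is convex in $f$ (linear plus a norm) and linear, hence concave, in $w$. The set $\cW$ is convex and compact. Sion's minimax theorem therefore gives $\inf_f\max_w\psi=\max_w\inf_f\psi$, so $\Opt=\Opt(P)=\Opt(D)$.

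Solvability of $(D)$ holds because $\underline\psi$ is concave and upper semicontinuous (an infimum of continuous functions) on the compact set $\cW$. Concretely,
\[
\inf_f[\kappa s(f)-w^TA^Tf]=\begin{cases}0, & s_*(Aw)\leq\kappa,\\ -\infty, & \mbox{otherwise},\end{cases}
\]
where $s_*$ is the dual norm. Hence $\Opt(D)=\max\{e^Tw:\,w\in\cW,\,s_*(Aw)\leq\kappa\}$. This set is nonempty since $0\in\cW$, and it is compact.

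Solvability of $(P)$ follows because $\ov\psi$ is convex, finite and continuous, and coercive. Indeed $\ov\psi(f)\geq\kappa s(f)+[e-A^Tf]^T0=\kappa s(f)$, using $0\in\cW$, so the level sets of $\ov\psi$ are bounded and a minimizer $f_*$ exists.

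\textbf{Part (i).} Take any $w\in\cW$ with $h_*^TAw\leq\kappa$. Then
\[
\Opt=\ov\psi(f_*)\geq\psi(f_*,w)=e^Tw-s(f_*)\,h_*^TAw+\kappa s(f_*)\geq e^Tw,
\]
because $s(f_*)\geq0$ and $h_*^TAw\leq\kappa$. Taking the maximum over such $w$ gives (\ref{eq55}).

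\textbf{Part (ii).} This is the step where care is needed; I expect it to be the main obstacle. The plan is to restrict the primal problem to $f$ in the cone generated by $\pm h_j$ and use LP duality. Since $\Opt$ is the infimum over all $f$,
\[
\Opt\leq\inf_{\lambda\in\bR^M}\max_{w\in\cW}\{[e-A^TH\lambda]^Tw+\kappa s(H\lambda)\}.
\]
By the triangle inequality and $s(h_j)\leq1$ we have $s(H\lambda)\leq\|\lambda\|_1$, so the right-hand side is at most
\[
\inf_\lambda\max_{w\in\cW}\{e^Tw-\lambda^TH^TAw+\kappa\|\lambda\|_1\}.
\]
Applying Sion's theorem once more to this function, which is convex in $\lambda$ and linear in $w$ over compact $\cW$, this equals
\[
\max_{w\in\cW}\{e^Tw+\inf_\lambda[\kappa\|\lambda\|_1-\lambda^TH^TAw]\}.
\]
The inner infimum is $0$ if $\|H^TAw\|_\infty\leq\kappa$ and $-\infty$ otherwise. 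This yields (\ref{eq666}).

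The points to verify are that the weakened objective, with $\kappa\|\lambda\|_1$ in place of $\kappa s(H\lambda)$, still upper-bounds $\Opt$, and that the minimax swap is legitimate; both follow from the convexity and compactness noted above.
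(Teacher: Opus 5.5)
Your proof is correct. The solvability and equality of values part, and part (i), coincide with the paper's argument: Sion--Kakutani, the explicit dual $\Opt(D)=\max\{e^Tw:\,w\in\cW,\,s_*(Aw)\le\kappa\}$, coercivity from $0\in\cW$, and the identity $e^Tw=\psi(f_*,w)+s(f_*)[h_*^TAw-\kappa]$.

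Part (ii) is where your route differs. You restrict the primal to $f=H\lambda$, bound $s(H\lambda)$ by $\|\lambda\|_1$, and apply Sion a second time to identify the resulting min--max value with the right-hand side of (ii). The paper instead reuses the dual formula you had already derived in your first stage. If $\bar w$ is optimal in $(D)$, then $s_*(A\bar w)\le\kappa$, hence $|h_j^TA\bar w|\le s(h_j)\,s_*(A\bar w)\le\kappa$ for every column $h_j$. So $\bar w$ is feasible for the right-hand side problem, whose value is therefore at least $e^T\bar w=\Opt$.

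The step you flagged as the main obstacle is thus a one-line consequence of the inclusion $\{w\in\cW:\,s_*(Aw)\le\kappa\}\subseteq\{w\in\cW:\,\|H^TAw\|_\infty\le\kappa\}$. No second minimax theorem is needed.

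What your longer route buys is a primal reading of the bound. It shows that the right-hand side of (ii) equals $\inf_\lambda\max_{w\in\cW}\{[e-A^TH\lambda]^Tw+\kappa\|\lambda\|_1\}$. This is problem $(P)$ restricted to $f$ in the span of the columns of $H$, with $\kappa s(H\lambda)$ replaced by the larger penalty $\kappa\|\lambda\|_1$, which makes it transparent why it dominates $\Opt$.
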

For the sake of completeness, here is the {\bf proof of the observation.}
As
\[
\inf_{f\in\bR^m}[-w^TA^Tf+\kappa s(f)]=\left\{\begin{array}{cl}-\infty,&s^*(Aw)>\kappa,\\
0,&s^*(Aw)\leq1\end{array}\right.
\]
where $s^*(u)=\sup_{f}\{u^Tf:\,s(f)\leq 1\}$ is the conjugate norm of $s(\cdot)$, we have
\begin{equation}\label{eq777}
\Opt(D)=\max_{w\in \cW} \left\{e^Tw:\,s^*(Aw)\leq \kappa\right\}.
\end{equation}
Since $\cW$ is compact we have $ \Opt(P)=\Opt(D)=\Opt$ by the Sion-Kakutani theorem. Besides this, $(D)$ is clearly solvable, and $(P)$ is
solvable as well because $\ov\psi(\cdot)$ is continuous  due to the compactness of $\cW$, and satisfies $\ov\psi(f)\geq s(f)$ as
$0\in\cW$, so that $\ov\psi(\cdot)$ has bounded level sets.\\
Next, let $w$ be a feasible solution to the problem in the right-hand side of \rf{eq55}. We have
\begin{align*}
e^Tw&=\big[(e-A^Tf_*)^Tw+\kappa s(f_*)\big]+[f_*^TAw-\kappa s(f_*)]\\&=
\big[(e-A^Tf_*)^Tw+\kappa s(f_*)\big]+s(f_*)[h_*^TAw-\kappa]\leq \Opt,
\end{align*}
implying \rf{eq55}.
Now let $\ov w$ be an optimal solution to the optimization problem in (\ref{eq777}), so that $s^*(A\ov w)\leq\kappa$,
and therefore $\ov w$, for every $H$ satisfying the premise of (ii), is a feasible solution to the right-hand side problem in (\ref{eq666}). Consequently, the maximum in the right-hand side of
(\ref{eq666}) is at least $e^T\ov w=\Opt(D)=\Opt(P)=\Opt$, and (\ref{eq666}) follows.
\qed
\paragraph{\ref{Sec0}.4}
Recall that we have fixed, once and for all, a tolerance $\epsilon\in(0,1)$. Given  an $m\times M$ $(1-\epsilon)$-admissible  contrast matrix $H$,
we have associated with it  the sets
$\mZ[H]=\bigcup_{\ell\leq p}\cZ_\ell^\pm[H]$, see (\ref{thatis}).
By Observation \ref{simpleobs}, when $x\in\cX^s$ and $\xi_x\in\Xi[x,H]$, the estimate $\widehat{x}=\wh x_H(Ax+\xi_x)$ (see
(\ref{polyest})) is well defined,
and the recovery error $\zeta=\wh x-x$ belongs to $\mZ[H]$,  that is, belongs to one of the sets $\cZ_\ell^\pm[H]$ (see \rf{thatis}). Given $H$  and $x\in \cX^s$,
we say that {\em  case $(\ell,\chi)$ occurs}
if the recovery error $\zeta$ is well defined and belongs to $\cZ_\ell^\chi[H]$. When $\xi_x\in\Xi[x,H]$, by Observation \ref{simpleobs}, this happens if and only if $\ell$ is among the indexes of the largest in magnitude entries of $\zeta$, and $\chi$ is the sign of this entry. Let us denote $\cL$ the set of couples $(\ell, \chi)$, $\ell\leq p$, $\chi\in\{-,+\}$;  by Observation \ref{simpleobs},
\begin{quotation}
(!!) {\sl Whenever $x\in\cX^s$ and $\xi_x\in\Xi[x,H]$, at least one of the cases $(\ell, \chi)\in \cL$ does occur. }
\end{quotation}
We conclude this section with the following
\begin{observation}\label{obsenough} Given a $J\times n$ matrix $Q$ and a $(1-\epsilon)$-admissible $m\times M$ contrast matrix $H$  denote $\wh x=\wh x_H(Ax+\xi_x)$ (see (\ref{polyest})),
 $\wh Q=Q\wh x$, and $\zeta=\wh x -x$. Whenever $x\in\cX^s$ and $\xi_x\in\Xi[x,H]$, assuming that one of the cases $(\ell,\pm)$ occurs for some $\ell=\ell_*$, it holds
 \be
 |[Q\zeta]_j|\leq {\mr}_{\ell_*j}[Q,H],\quad j\leq J.
 \ee{itholds}
(for notation, see \rf{mrofGH}).
 Furthermore, let $\delta\in(0,1)$, and let vectors  $f_{\ell j}^{Q,\delta,\chi}$  be optimal solutions to problems (\ref{problms}) with
 $e=\Col_j[Q^T]$, $B=A$, $\cW=\cZ_\ell^\chi$, and $\kappa=2$.  Denoting by $\rho_{\ell j}^{Q,\delta,\chi}$ the optimal values in these problems,
 let $h_{\ell j}^{Q,\delta,\chi}$ be {such that}
 \[
 \pi_\delta(h_{\ell j}^{Q,\delta,\chi})=1,\quad f_{\ell j}^{Q,\delta,\chi}=\pi_\delta(f_{\ell j}^{Q,\delta,\chi})h_{\ell j}^{Q,\delta,\chi}.
 \]
 Assuming that $H$ contains as columns vectors $h_{\ell_* j}^{Q,\delta,\chi}$, $j\leq J$, $\chi\in\{-,+\}$,
 one has
 \be
 |[Q\zeta]_j|&\leq&\rho_{\ell_*j}[Q,\delta]:=\max\left[\rho_{{\ell_*} j}^{Q,\delta,-},\rho_{{\ell_*} j}^{Q,\delta,+}\right], \;j\leq J.
\ee{eq66}
 \end{observation}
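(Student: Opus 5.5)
The claim splits into two parts. For \rf{itholds} we only need Observation \ref{simpleobs} and the definitions \rf{mrofGH}. For \rf{eq66} we combine this with Observation \ref{sk1}(ii) and \Antimonotonicity.

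\emph{Step 1: the bound \rf{itholds}.} Let $x\in\cX^s$ and $\xi_x\in\Xi[x,H]$. By Observation \ref{simpleobs}, $\wh x$ is well defined and $\zeta=\wh x-x\in\mZ[H]$. By assumption, case $(\ell_*,\chi)$ occurs for some $\chi\in\{-,+\}$, that is, $\zeta\in\cZ_{\ell_*}^\chi[H]$. We then bound $[Q\zeta]_j$ from both sides.
\begin{itemize}
\item Upper bound: since $\zeta$ is feasible for the maximization defining $\mr^\chi_{\ell_* j}[Q,H]$, we get $[Q\zeta]_j\leq \mr^\chi_{\ell_*j}[Q,H]\leq\mr_{\ell_*j}[Q,H]$.
\item Lower bound: note that $z\mapsto -z$ maps $\ov\cX=\cX-\cX$ onto itself. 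It also swaps the defining inequalities of $\cZ_\ell^+$ and $\cZ_\ell^-$ in \rf{sets}, and it preserves $\|H^TAz\|_\infty\leq2$. Hence $-\zeta\in\cZ_{\ell_*}^{-\chi}[H]$.
\item Applying the upper bound to $-\zeta$ gives $-[Q\zeta]_j\leq\mr^{-\chi}_{\ell_*j}[Q,H]\leq \mr_{\ell_*j}[Q,H]$.
\end{itemize}
Together these give \rf{itholds}.

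\emph{Step 2: the bound \rf{eq66}.} Fix $j$ and $\chi$, and apply Observation \ref{sk1} with $e=\Col_j[Q^T]$, $\cW=\cZ_{\ell_*}^\chi$, $\kappa=2$ and $s(\cdot)=\pi_\delta(\cdot)$. The set $\cZ_{\ell_*}^\chi$ is convex and compact, being a subset of $\ov\cX$ cut out by finitely many linear inequalities, and it contains $0$. Let $H'$ be the single-column matrix $h_{\ell_*j}^{Q,\delta,\chi}$, which satisfies $\pi_\delta(h)=1$.

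We need a \emph{lower} bound on $\rho^{Q,\delta,\chi}_{\ell_*j}$ in terms of this column, so the useful inequality is (i), i.e. \rf{eq55}, not (ii):
\[
\max_z\left\{[Qz]_j:\,z\in\cZ_{\ell_*}^\chi,\ |h^TAz|\leq 2\right\}\leq \rho^{Q,\delta,\chi}_{\ell_*j}.
\]
The left-hand side is $\mr^\chi_{\ell_*j}[Q,H']$. Since the constraint in \rf{eq55} is one-sided, $h^TAz\leq 2$, replacing it by $|h^TAz|\leq 2$ only shrinks the feasible set, so the inequality survives.

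The degenerate case $f_*=0$ needs a remark: there $h_*$ is arbitrary, and the argument proving (i) still goes through for any $h_*$ with $s(h_*)=1$. I take $h^{Q,\delta,\chi}_{\ell_*j}$ to be the column actually included in $H$.

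\emph{Step 3: conclusion.} Since $H'$ is a submatrix of $H$, \Antimonotonicity\ gives $\cZ_{\ell_*}^\chi[H]\subset\cZ_{\ell_*}^\chi[H']$, and hence
\[
\mr^\chi_{\ell_*j}[Q,H]\leq\mr^\chi_{\ell_*j}[Q,H']\leq \rho^{Q,\delta,\chi}_{\ell_*j}.
\]
Taking the maximum over $\chi$ and combining with \rf{itholds} yields \rf{eq66}.

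The main obstacle is Step 2: choosing the correct direction of Observation \ref{sk1}, namely (i) rather than (ii), and handling the one-sided versus two-sided constraint and the case $f_*=0$. Everything else is bookkeeping with the definitions.
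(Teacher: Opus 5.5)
Your proposal is correct and follows essentially the paper's own argument. The bound (\ref{itholds}) comes from $\zeta\in\cZ_{\ell_*}^\chi[H]$ together with the symmetry $\cZ_\ell^-=-\cZ_\ell^+$, and (\ref{eq66}) comes from part (i) of Observation~\ref{sk1} (with its one-sided constraint $h^TAz\leq 2$) plus the fact that including $h_{\ell_*j}^{Q,\delta,\chi}$ as a column of $H$ only shrinks the feasible set; you phrase this last step via $H$-monotonicity with a single-column $H'$, and your remarks on the two-sided constraint and the case $f_*=0$ are extra care the paper leaves implicit.
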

 {\bf Proof.} Under the premise of the observation we have $\zeta
 {\in}[\cZ_{\ell_*}^-[H]\cup\cZ_{\ell_*}^+[H]]$,
 that is, $\zeta\in[\cZ_{\ell_*}^-\cup\cZ_{\ell_*}^+]$ and $\|H^TA\zeta\|_\infty\leq 2$, and as $\cZ_\ell^-=-\cZ_\ell^+$, this implies that
\begin{align*}
 |[Q\zeta]_j|&\leq\max\left[\max_z\{[Qz]_j:\,z\in \cZ_{\ell_*}^-,\,\|H^TAz\|_\infty\leq 2\},
 \,\max_z\{[Qz]_j:\,z\in \cZ_{\ell_*}^+\,\|H^TAz\|_\infty\leq 2\}\right]\\&={\mr}_{\ell_*j}[Q,H], \quad j\leq J,
 \end{align*}
 as required in  \rf{itholds}. To prove the ``furthermore" part of the claim, note that by Observation \ref{sk1}
 we have
 $$
 \rho_{\ell_* j}^{Q,\delta,\chi}\geq \max_z\left\{[Qz]_j:\,z\in \cZ_{\ell_*}^\chi, \,[h_{\ell_*j}^{Q,\delta,\chi}]^TAz\leq 2\right\},\;j\leq J,\,\chi\in\{-,+\},
 $$
 and when $H$ contains $h_{\ell_*j}^{Q,\delta,\chi}$ as a column, the right-hand side in this inequality upper-bounds the quantity
 \[{\mr}_{\ell_*,j}^\chi [Q,H] = \max_z\left\{[Qz]_j:\,z\in \cZ_{\ell_*}^\chi, \,\|H^TAz\|_\infty\leq 2\right\}.
 \]
 Thus, under the premise of the observation, we have $\rho_{\ell_* j}^{Q,\delta,\chi} \geq {\mr}_{\ell_*j}^\chi [Q,H]$, whence
 $\rho_{\ell_* j}[Q,\delta] \geq {\mr}_{\ell_*j} [Q,H]$, and (\ref{eq66}) follows from (\ref{itholds}). \qed

\section{Estimating linear functionals}\label{Sec1}
Our subject in this section is the design of a ``presumably good" polyhedral estimate for the problem $(\mcP.g)$ stated in the introduction.
\subsection{Risk analysis}\label{sect2}
 The following statement is an  immediate consequence of Observation \ref{simpleobs}:
\begin{proposition}\label{obslin}
Given linear form $g^Tx$ on $\bR^n$, sparsity level $s$, and $(1-\epsilon)$-admissible $m\times M$ contrast matrix $H$, let  $x\in\cX^s$ and $\xi_x\in\Xi[x,H]$, and let $\wh g_H=g^T\wh x_H$  be the polyhedral estimate of $g^Tx$ associated with $H$, see (\ref{polyest}).
One has
\begin{equation}\label{linbound}
|\wh g_H(Ax+\xi_x)-g^Tx|=|g^T\underbrace{[\wh x_H(Ax+\xi_x)-x]}_{\zeta}|\leq \mr[g,H]
\end{equation}
(see \rf{mrofgH} for notation), whence
\begin{equation}\label{risklinest}
\Risk_\epsilon[\wh g_H|\cX^s]\leq \mr[g,H].
\end{equation}
\end{proposition}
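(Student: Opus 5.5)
The plan is to apply Observation \ref{simpleobs} directly and then pass to the risk via the admissibility property (!). First, fix $x\in\cX^s$ and $\xi_x\in\Xi[x,H]$. By Observation \ref{simpleobs}, the estimate $\wh x=\wh x_H(Ax+\xi_x)$ is well defined and the error $\zeta=\wh x-x$ lies in $\mZ[H]$. Membership in this union means there exist $\ell\leq p$ and $\chi\in\{-,+\}$ with $\zeta\in\cZ_\ell^\chi[H]$, namely $\ell$ indexing a largest-magnitude entry of $C\zeta$ and $\chi$ its sign.

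Next, I bound $|g^T\zeta|$ from both sides. For the upper side, $\zeta$ is feasible for the maximization defining $\mr_{\ell,\chi}[g,H]$ in \rf{mrofgHa}, so $g^T\zeta\leq\mr_{\ell,\chi}[g,H]\leq\mr[g,H]$. For the lower side I use the symmetry $\cZ_\ell^-=-\cZ_\ell^+$, which follows from the definitions \rf{sets} because $\ov\cX=\cX-\cX$ is symmetric about the origin. Since the constraint $\|H^TAz\|_\infty\leq2$ is also symmetric, this gives $\cZ_\ell^-[H]=-\cZ_\ell^+[H]$. Hence $-\zeta\in\cZ_\ell^{-\chi}[H]$, and so $-g^T\zeta\leq\mr_{\ell,-\chi}[g,H]\leq\mr[g,H]$. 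Together these give \rf{linbound}, because $\wh g_H(Ax+\xi_x)-g^Tx=g^T\zeta$. This is the same argument as the first part of Observation \ref{obsenough}, with $Q=g^T$ and $J=1$.

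Finally, for \rf{risklinest}, fix $x\in\cX^s$. Since $H$ is $(1-\epsilon)$-admissible, (!) gives $\Prob\{\xi_x\notin\Xi[x,H]\}\leq\epsilon$. On the complementary event, \rf{linbound} holds. Hence $\Prob\{|\wh g_H(Ax+\xi_x)-g^Tx|>\mr[g,H]\}\leq\epsilon$ for every $x\in\cX^s$, so $\rho=\mr[g,H]$ is admissible in the infimum defining $\Risk_\epsilon$.

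There is no real obstacle here. The only point needing care is the two-sidedness of the bound, which is handled by the symmetry $\cZ_\ell^-[H]=-\cZ_\ell^+[H]$ noted above.
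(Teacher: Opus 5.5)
Your proof is correct and follows the paper's own argument. Observation \ref{simpleobs} places $\zeta$ in some $\cZ_\ell^\chi[H]$; the symmetry $\cZ_\ell^-[H]=-\cZ_\ell^+[H]$ turns the one-sided maxima defining $\mr[g,H]$ into a bound on $|g^T\zeta|$; and property (!) converts this into the risk bound \rf{risklinest}.
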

{\bf Proof.} By Observation \ref{simpleobs}, under the premise of Proposition \ref{obslin}
one has
\[\zeta\in\bigcup_{\ell\leq p}\left[\cZ_\ell^+[H]\cup\cZ_\ell^-[H]\right],
\] see (\ref{thatis}), whence
\begin{align*}
|\wh g_H(Ax+\xi_x)-g^Tx|&\leq\max_{\ell\leq p}\max\left[\max_{z\in \cZ_\ell^+[H]}|g^Tz|,\,\max_{z\in \cZ_\ell^-[H]}|g^Tz|\right]\\
&=
\max_{\ell}\max\left[\max_{z\in \cZ_\ell^+[H]}g^Tz,\,\max_{z\in \cZ_\ell^-[H]}g^Tz\right]=\mr[g,H]
\end{align*}
where the first equality is due to $\cZ_\ell^-=-\cZ_\ell^+$; we have verified (\ref{linbound}). Invoking (!) completes the proof. \qed
\subsection{Designing the estimate}\label{sec:designlin1}
 Given a $(1-\epsilon)$-admissible contrast matrix $H$, the $(s,\epsilon)$-risk of the associated polyhedral estimate of $g^Tx$ is upper-bounded by the right-hand side of (\ref{risklinest}). To design the estimate, we optimize this bound over $m\times M$  contrast matrices with columns $h_j$ satisfying $\pi_{\epsilon/M}(h_j)\leq1$ (and therefore $(1-\epsilon)$-admissible) with $M\geq2p$.
Optimization in $H$ is implemented using the following construction going back to \cite[Section 5.1]{PUP}.
\begin{quotation}
\noindent Given
$b\in \bR^n$, $B\in\bR^{m\times n}$, a norm $s(\cdot)$ on $\bR^m$, a convex and compact set $\cW\subset\bR^n$ containing the origin, and $\kappa>0$, our recipe for building $H$ is as follows:
we set
$$
\delta=\epsilon/(2p)
$$
and solve $2p$ convex optimization problems
$$
\Opt_\ell^\pm[g]=\min_f\left\{2\pi_\delta(f)+\max_z\{z^T[g-A^Tf]:z\in\cZ_\ell^\pm\}\right\}.
$$
Denoting the respective optimal solutions $f_\ell^\pm$, we find $\pi_\delta$-unit vectors $h_\ell^\pm$
such that $f_\ell^\pm=\pi_\delta(f_\ell^\pm)h_\ell^\pm$ and specify $H$ as the matrix $\ov H$ with the $2p$
columns $h_\ell^-$, $h_\ell^+$, $\ell\leq p$, and claim that the associated estimate $\widehat{g}_{\ov H}$ satisfies the risk bound
\begin{equation}\label{claimthat}
\Risk_\epsilon[\widehat{g}_{\ov H}|\cX^s]\leq \Opt[g]:=\max_{\ell\leq p}\max\left[\Opt_\ell^+[g],\,\Opt_\ell^+[g]\right].
\end{equation}
\end{quotation}
To justify (\ref{claimthat}), note that by Observation \ref{sk1} one has
$\mr_{\ell,\chi}[g,[h_\ell^\pm]]\leq \Opt_\ell^\pm[g]$ for all $\ell$, whence $\mr_{\ell,\chi}[g,\ov H]\leq \Opt_\ell^\chi[g]$.
As a result, bound (\ref{risklinest}) as applied with  $H=\ov H$ implies (\ref{claimthat}).
\par
Note that invoking Lemma \ref{sk1}, we immediately see that the proposed design optimizes the right-hand side of \rf{risklinest} over all contrast matrices with columns $h_j$ satisfying $\pi_{\epsilon/(2p)}(h_j)\leq1$. Taking into account that, for our observation schemes, $\pi_\delta$ is ``nearly independent'' of $\delta$, we conclude that our design is as good as it can be -- provided that we quantify the performance of a polyhedral estimate by the efficiently computable upper bound \rf{risklinest} on its risk rather than by its ``computationally intractable" actual risk.

\subsection{Linearly corrected (debiased) estimation}

Consider the situation as follows. Suppose that we are given a contrast matrix $H\in\bR^{m\times M}$ which is $(1-\varepsilon)$-admissible for some $\varepsilon\in(0,1)$, and {let} $\wh x=\wh x_H(\omega)$ {be} the polyhedral estimate \rf{polyest} of
 $x\in \cX^s$ {underlying} observation $\omega=Ax+\xi_x$. Given $g\in \bR^{n}$ we want to estimate $g^Tx$, and our objective is to ``improve" the biased ``plug-in" estimate $g^T\wh x$.
This {can} be done, for instance, by adding to  $g^T\wh x_H(\omega)$ a correction term which is linear in the observation, thus arriving at the estimate of the form
\be
\wh g_{(H,f)}(\omega)=g^T\wh x+f^T(\omega-A\wh x)
\ee{linplus}
where $f\in\bR^{m}$.

Let (cf. \rf{mrofgHa})
\[\mr=\max_{\ell\leq p}\mr_{\ell,\pm},\quad\mr_{\ell,\pm}=\max_{z} \left\{[Cz]_i:\,z\in \cZ^\pm_\ell,\,\|H^TAz\|_\infty\leq 2\right\},\]
and
\[
\ov\cZ:=\{z\in \ov{\cX}:\,\|Cz\|_\infty\leq
\mr,\,\|Cz\|_{1}\leq 2s\mr\}.
\]
Given $\upsilon\in(0,1)$, consider the optimization problem
\be
\wt\rho^g:=\min_{f\in\bR^m}\left\{\psi^g(f)=\max_{z\in \ov\cZ}[g-A^Tf]^Tz+\pi_{\upsilon}(f)\right\}.
\ee{frho}

\begin{proposition}\label{prop:lincorr} In the situation of this section, the problem in \rf{frho} is solvable; let $\ov f$ be an optimal solution. Let $\epsilon=\varepsilon+\upsilon\in(0,1)$, and let $\wh g_{(H,\ov f)}(\cdot)$ be the estimate defined in \rf{linplus} with $f=\ov f$. Then the $(s,\epsilon)$-risk of $\wh g_{(H,\ov f)}(\cdot)$ satisfies
\[\Risk_\epsilon[\widehat{g}|\cX^s]\leq \wt\rho^g.
\]
\end{proposition}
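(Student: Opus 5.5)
The plan is to decompose the error of the corrected estimate into a "deterministic" part controlled by the localization of the recovery error and a "stochastic" part controlled by the noise bound \rf{eq1}.

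Writing $\omega=Ax+\xi_x$ and $\zeta=\wh x-x$, we have $\omega-A\wh x=\xi_x-A\zeta$. Hence
\[
\wh g_{(H,\ov f)}(\omega)-g^Tx=g^T\zeta+\ov f^T(\xi_x-A\zeta)=[g-A^T\ov f]^T\zeta+\ov f^T\xi_x .
\]

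\textbf{Solvability.} First I will check that the problem in \rf{frho} is solvable, following the proof of Observation \ref{sk1}. The set $\ov\cZ$ is convex and compact, being a closed subset of the compact set $\ov\cX$, and it contains the origin. Therefore $\psi^g$ is a finite convex, hence continuous, function. Moreover $\psi^g(f)\geq\pi_\upsilon(f)$ because $0\in\ov\cZ$, and $\pi_\upsilon$ is a norm, so the level sets of $\psi^g$ are bounded and a minimizer $\ov f$ exists.

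\textbf{Good event.} Next, fix $x\in\cX^s$ and consider the event $E=\{\xi_x\in\Xi[x,H]\}\cap\{|\ov f^T\xi_x|\leq\pi_\upsilon(\ov f)\}$. By (!), the first event fails with probability at most $\varepsilon$. By \rf{eq1} applied with $h=\ov f$ and $\delta=\upsilon$ (note $\ov f$ is deterministic, since it depends only on the problem data), the second fails with probability at most $\upsilon$. So $E$ has probability at least $1-\epsilon$.

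\textbf{Bound on $E$.} On $E$, Observation \ref{simpleobs} shows that $\wh x$ is well defined and that $\zeta\in\cZ_\ell^\chi[H]$ for some $(\ell,\chi)$. We want to conclude that $\zeta\in\ov\cZ$. We have $\zeta\in\ov\cX$. By the choice of $\ell$ as the index of the largest-magnitude entry of $C\zeta$, we get $\|C\zeta\|_\infty=\chi[C\zeta]_\ell\leq\mr_{\ell,\chi}\leq\mr$. Reading $\mr_{\ell,\pm}$ as the maximum of $\pm[Cz]_\ell$ over $\cZ_\ell^\pm[H]$, the quantity maximized is exactly $\|Cz\|_\infty$ on that set. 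Then \rf{eq14} with $p=1$ gives $\|C\zeta\|_1\leq 2s\|C\zeta\|_\infty\leq 2s\mr$. Thus $\zeta\in\ov\cZ$, and
\[
|\wh g_{(H,\ov f)}(\omega)-g^Tx|\leq \max_{z\in\ov\cZ}|[g-A^T\ov f]^Tz|+\pi_\upsilon(\ov f)=\psi^g(\ov f)=\wt\rho^g,
\]
where the absolute value may be dropped because $\ov\cZ$ is symmetric ($\ov\cX=-\ov\cX$ and the norm constraints are symmetric). This yields the risk bound.

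\textbf{Main obstacle.} The delicate step is the membership $\zeta\in\ov\cZ$, which depends on interpreting the somewhat garbled definition of $\mr$ (the index $i$ there should be $\ell$, with sign $\pm$) so that it upper-bounds $\|C\zeta\|_\infty$ on the relevant case set. I will state this interpretation explicitly. The rest is routine.
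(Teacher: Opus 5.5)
Your proof is correct and follows the paper's argument essentially step by step. It uses the same decomposition $[g-A^T\ov f]^T\zeta+\ov f^T\xi_x$ and the same good event $\Xi[x,H]\cap\{|\ov f^T\xi_x|\leq\pi_\upsilon(\ov f)\}$ of probability at least $1-\varepsilon-\upsilon$. It then obtains $\zeta\in\ov\cZ$ via Observation~\ref{simpleobs} and (\ref{eq14}) with $p=1$, invokes the symmetry of $\ov\cZ$, and gets solvability from continuity and bounded level sets as in the proof of Observation~\ref{sk1}. Your explicit reading of $\mr_{\ell,\pm}$ as the maximum of $\pm[Cz]_\ell$ over $\cZ_\ell^\pm[H]$, which makes $\mr$ coincide with the bound in (\ref{bred}), is exactly what the paper's step $\|C\zeta\|_\infty\leq\mr$ tacitly uses.
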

{\bf Proof.} Given $f\in \bR^{m}$ and $x\in \cX^s$, denote $\Xi_f(x)$ the set of $\xi_x$ such that $|[f^T\xi_x]|\leq \pi_{\upsilon}(f)$; by the definition of $\pi_{\upsilon}(\cdot)$, the $P_x$-probability of $\Xi_f(x)$ is $\geq1-\upsilon$. Let now $x\in \cX^s$ and $\xi_x\in \Xi[x,H]\cap\Xi_f({x})$ be fixed. As we know from Observation \ref{simpleobs} (cf. \rf{eq14} applied with $p=1$),  in this case the error $\zeta=\wh x_H(Ax+\xi_x)-x$ satisfies
\[
\zeta\in\ov\cX,\;\|C\zeta\|_\infty \leq \mr,\;\|C\zeta\|_1\leq 2s\mr,\;\|H^TA\zeta\|_\infty\leq2,
\]
implying, in particular, that $\zeta\in \ov\cZ$.
Thus, the error $\wh g-g^Tx$ of $\wh g=\wh g_{(H,f)}(Ax+\xi_x)$ satisfies
\[
\wh g-g^Tx= [g-A^Tf]^T(\wh x-x)+f^T\xi_{x}\leq \max_{z\in \ov\cZ}[g-A^Tf]^Tz+f^T\xi_x\leq \max_{z\in \ov\cZ}[g-A^Tf]^Tz+\pi_{\upsilon}(f)
\]
(recall that $\xi_x\in \Xi_f[x]$), and also, due to the symmetry of $\ov\cZ$,
\[
\wh g-g^Tx\geq  \min_{z\in \ov\cZ}[g-A^Tf]^Tz-\pi_{\upsilon}(f)= -\left[\max_{z\in \ov\cZ}[g-A^Tf]^Tz+\pi_{\upsilon}(f)\right].
\]
We conclude that
\[|\wh g-g^Tx|\leq \psi^{g}(f):=\max_{z\in {\ov\cZ}}[g-A^Tf]^Tz+\pi_{\upsilon}(f).
\]
Because the set $\Xi[x,H]\cap\Xi_f(x)$ is of $P_x$-probability at least $1-\varepsilon-\upsilon=1-\epsilon$, this implies
that the $(s,\epsilon)$-risk of the estimate $\wh g$ is bounded by $\psi^{g}(f)$.
\par
One easily checks that the function $\psi^{g}(\cdot)$ is continuous and has bounded level sets (for this we refer to the proof of  Observation \ref{sk1}), so the optimization problem in \rf{frho} is indeed solvable and $\ov f$ is well defined. By the above,
the $(s,\epsilon)$-risk of the  estimate $\wh g_{(H,\ov f)}(\cdot)$ is bounded by $\wt\rho^g$.\qed
\paragraph{Remark.}  {The ``linearly corrected estimate'' introduced in this section can be viewed as the counterpart, in our setting, of the debiased estimate of \cite{cai2023statistical,van2014asymptotically}.}

The simple argument at the core of the proof of Proposition \ref{prop:lincorr} is the same which underlies the error bound of the affine estimate when estimating the linear functional $g(x)=g^Tx$ of unknown $x\in \cX$ from the observation
$\omega=Ax+\xi_x$ in the case of Gaussian, Poisson and Discrete observation schemes when $\cX$ is a convex and compact set
\cite{Don95,linform}.  Moreover, when the set $\cX$ is computationally tractable, parameters $f_*$ and $c_*$ of the optimal affine estimate $\wh g_*(\omega)=f_*^T\omega +c_*$ (which minimize the $\epsilon$-risk of the resulting estimate) are obtained by an efficient computation, as solutions to a tractable convex optimization problem. In the latter situation (when $\cX$ is convex), the  affine estimate $\wh g_*$ is also ``nearly minimax optimal,'' meaning that for, e.g., $\epsilon\in(0,\frac{1}4)$, the $\epsilon$-risk of  $\wh g_*$  {\em is close to the minimax estimation risk up to a moderate absolute constant factor.}

\section{Recovering linear image of a sparse signal}\label{sec:3}
In this section, we focus on designing ``presumably good" polyhedral estimates for problem $(\mcP.B)$ of recovering the linear image $Bx$ of the signal $x$ underlying observation  (\ref{obs}).

\subsection{From estimating linear forms to signal estimation}\label{Sec2a}
\subsubsection{Preliminaries}\label{4.1.1}
{Suppose we are given a $J\times n$ matrix $G=[g_1,...,g_J]^T$; our objective is to use the polyhedral estimate to recover the image $Gx$ of the signal $x\in\cX^s$. To this end, given a $(1-\epsilon)$-admissible matrix $H\in \bR^{m\times M}$ we compute $\wh x_H(\omega)$ according to \rf{polyest} and put
$\wh G_H(\omega)=G\wh x_H(\omega)$.

Notice that admissibility of $H$ implies that for all $x\in \cX$ $\Prob_{\xi_x\sim P_x}(\Xi[x, H])\geq 1-\epsilon$.
Now, let $x\in \cX^s$ and $\xi_x\in \Xi[x,H]$, so that $x$ is feasible to \rf{polyest}, and so $\wh x=\wh x_H(\omega)$ is well defined; we put $\zeta=\wh x_H(Ax+\xi_x)-x$.
Now, let $\ell\leq p$. Assuming that one of the cases $(\ell,\pm)$ occurs,  Observation \ref{obsenough} (see \rf{itholds}) implies that
\[\left|[G\zeta]_j\right|\leq  \mr_{\ell j}[G,H],
\]
(for notation, see \rf{mrofGH}), and therefore
\be
\abs\big[G\zeta\big]\leq \mr^{(\ell)}[G,H]
\ee{forlvec}
(we denote $
\abs\big[[x_1;...;x_k]\big]=[|x_1|;...;|x_k|]$).
We conclude that, when estimating $Gx$ by $\wh G_H(Ax+\xi_x)$,
the entries of $\mr^{(\ell)}[G,H]$ are upper bounds on the magnitudes of the coordinates of the estimation error
$G\zeta=\wh G_H(Ax+\xi_x)-Gx$ (and absolute norms of $\mr^{(\ell)}[G,H]$ also bound the corresponding norms of $\wh G_H(Ax+\xi_x)-Gx$).
When combining this observation with (!) and (!!) we conclude that, for any absolute norm $\|\cdot\|$, the $(s,\epsilon,\|\cdot\|)$-risk of the estimate $\wh G_H$ of $Gx$ satisfies
\[\Risk_{\epsilon,\|\cdot\|}[\widehat{G}_H|\cX^s]\leq\max_{\ell\leq p}\left\|\mr^{(\ell)}[G,H]\right\|.
\]}
Next, given  $G=[g_1,...,g_J]^T\in \bR^{J\times n}$ and $\delta\in(0,1)$,  consider $2pJ$ optimization problems
\begin{equation}\label{rhoelljm}
\rho_{\ell j}^{G,\delta,\pm}=\min_f\left\{\max_{z}\left[(g_j-A^Tf)^Tz\right]+{2}\pi_\delta(f):\,z\in \cZ_\ell^\pm\right\}\end{equation}
where $\cZ^\pm_\ell$ are defined in \rf{sets}. We set
\begin{subequations}\label{rhos}
\begin{align}
\rho_{\ell j}[G,\delta]&=\max\left[\rho_{\ell j}^{G,\delta,-},\rho_{\ell j}^{G,\delta,+}\right],\label{rhos.a}\\
\rho^{(\ell)}[G,\delta]&=\left[\rho_{\ell 1}[G,\delta];...;\rho_{\ell J}[G,\delta]\right],\label{rhos.b}\\
\rho_\ell[G,\delta]&=\max_{j}\rho_{\ell,j}[G,\delta],\label{rhos.c}\\
\rho[G,\delta]&=\max_{\ell,j}\rho_{\ell j}[G,\delta].\label{rhos.d}
\end{align}
\end{subequations}
We denote  by $f_{\ell j}^{G,\delta,\pm}$  optimal solutions to problems (\ref{rhoelljm}) and define $h_{\ell j}^{G,\delta,\pm} $
satisfying
\be
 \pi_\delta(h_{\ell j}^{G,\delta,\pm})=1\;\;\mbox{and}\;\;f_{\ell j}^{G,\delta,\pm}=\pi_\delta(f_{\ell j}^{G,\delta,\pm})h_{\ell j}^{G,\delta,\pm}.
 \ee{eq30}

 The following statement is readily implied by  Observation \ref{obsenough}
\begin{proposition}\label{bxinf}
Let $G=[g_1,...,g_J]^T,\; g_j\in\bR^n$, $\epsilon\in(0,1)$,
 signal $x\in\cX^s$ underlying observations, and $\ell\leq p$  be fixed. Suppose that a $(1-\epsilon)$-admissible $m\times M$ contrast matrix $H$ contains, for some $\delta\in(0,1)$,
 a submatrix composed of the $2J$ vectors $h_{\ell j}^{G,\delta,\pm}$, $j\leq J$.

 Then,
 \emph{whenever one of the $(\ell,\pm)$-cases occurs}, the recovery error $\zeta=\widehat{x}_H(Ax+\xi_x)-x$
 of the estimate $\widehat{x}_H$ given by (\ref{polyest})  satisfies
\[
|g_j^T\zeta|\leq\rho_{\ell j}[G,\delta], \quad j\leq J,
\]
and thus for all $j\leq J$
\[
|[G\zeta]_j|\leq \rho_{\ell j}[G,\delta]\leq\rho[G,\delta].
\]
\end{proposition}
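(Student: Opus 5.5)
The plan is to obtain the claim as a specialization of the ``furthermore'' part of Observation \ref{obsenough}, taking $Q=G$ and $\ell_*=\ell$. The first step is to check that the vectors $h_{\ell j}^{G,\delta,\pm}$ from \rf{eq30} coincide with the vectors $h_{\ell j}^{Q,\delta,\chi}$ used in Observation \ref{obsenough}. Problem \rf{rhoelljm} is exactly problem \rf{problms} with $e=g_j=\Col_j[G^T]$, the sensing matrix $A$, $\cW=\cZ_\ell^\pm$, $s(\cdot)=\pi_\delta(\cdot)$ and $\kappa=2$. Hence $f_{\ell j}^{G,\delta,\pm}$ and $\rho_{\ell j}^{G,\delta,\pm}$ are the optimal solution and optimal value considered there, and the $h$'s are normalized in the same way. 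Since $\cZ_\ell^\pm\subset\ov\cX$ is compact and convex, and contains the origin ($z=0$ satisfies all the defining inequalities), the premises of Observation \ref{sk1} hold. In particular, the optimal solutions exist.

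Next, I will invoke Observation \ref{sk1}(i) with $h_*=h_{\ell j}^{G,\delta,\chi}$. This gives
\[
\rho_{\ell j}^{G,\delta,\chi}\geq\max_z\left\{g_j^Tz:\,z\in\cZ_\ell^\chi,\,[h_{\ell j}^{G,\delta,\chi}]^TAz\leq 2\right\}.
\]
Because $H$ contains $h_{\ell j}^{G,\delta,\chi}$ as a column, the constraint $\|H^TAz\|_\infty\leq2$ implies the single constraint above. The right-hand side therefore dominates $\mr_{\ell j}^{\chi}[G,H]$, so $\rho_{\ell j}[G,\delta]\geq\mr_{\ell j}[G,H]$.

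Finally, suppose one of the $(\ell,\pm)$-cases occurs. By definition this means $\zeta\in\cZ_\ell^+[H]\cup\cZ_\ell^-[H]$, and the bound \rf{itholds} of Observation \ref{obsenough} gives $|g_j^T\zeta|\leq\mr_{\ell j}[G,H]\leq\rho_{\ell j}[G,\delta]$. Here I use $\cZ_\ell^-=-\cZ_\ell^+$, which turns the two one-sided maxima into a bound on the absolute value. The bound $\rho_{\ell j}[G,\delta]\leq\rho[G,\delta]$ is immediate from \rf{rhos.d}.

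The only delicate point is bookkeeping. The scaling convention with $\kappa=2$ has to match the factor $2$ in $\|H^TAz\|_\infty\leq 2$, and $\pi_\delta$ has to be a genuine norm so that the normalization $\pi_\delta(h)=1$ is possible. If $f_{\ell j}^{G,\delta,\chi}=0$, any $\pi_\delta$-unit $h$ works: then $\rho_{\ell j}^{G,\delta,\chi}=\max_{z\in\cZ_\ell^\chi}g_j^Tz$, which already dominates $\mr_{\ell j}^\chi[G,H]$. Admissibility of $H$ is not needed for this deterministic statement; it enters only through the assumption that a case occurs.
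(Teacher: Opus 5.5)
Your proposal is correct and follows the paper's route. The paper simply states that the proposition is readily implied by the ``furthermore'' part of Observation~\ref{obsenough} with $Q=G$ and $\ell_*=\ell$. Your argument is that observation's proof specialized to this setting: you identify (\ref{rhoelljm}) with (\ref{problms}), invoke Observation~\ref{sk1}(i), use that the $h_{\ell j}^{G,\delta,\pm}$ are columns of $H$, and use $\cZ_\ell^-=-\cZ_\ell^+$ to bound $|g_j^T\zeta|$.
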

Here are some immediate consequences of this result.
\subsubsection{Recovering the sparse signal $y=Cx$}\label{4.1.2}
Applying the derivations of Section \ref{4.1.1} with $G=C$
and taking into account that, by (\ref{eq14}),
whenever $x\in\cX^s$ and  $\xi_x\in\Xi[x,H]$,  the recovery error $\zeta$ by Lemma \ref{simple1} satisfies
\begin{equation}\label{aboutC}
\|C\zeta\|_\theta\leq 2^{1/\theta}\|C\zeta\|_{s,\theta},
\end{equation}
we arrive at the following conclusion:
\begin{proposition}\label{propai11}
Given a $(1-\epsilon)$-admissible $m\times M$ contrast matrix $H$, let $x\in \cX^s$, $\xi_x\in\Xi[x,H]$, and $\ell\leq p$. Then the vector $\wh x_H(Ax+\xi_x)$ (see  \rf{polyest}) is well defined along with $\zeta=\wh x_H(Ax+\xi_x)-x$. Furthermore, if one of the cases $(\ell,\pm)$ occurs then the error $C\zeta$ of the recovery $\widehat{C}_H=C\wh{x}_H(Ax+\xi_x)$ of $Cx$ satisfies (cf. \rf{forlvec})
\[
\abs\big[C\zeta\big]\leq \mr^{(\ell)}[C,H]\quad \mbox{and}\quad \|C\zeta\|_\theta\leq 2^{1/\theta}\|C\zeta\|_{s,\theta},\quad\theta\in[1,\infty],
\]
where $\mr^{(\ell)}[G,H]$ is defined in \rf{mrofGH}. Hence,  for all $\theta\in[1,\infty]$,
\[
\|C\zeta\|_\theta\leq 2^{1/\theta}\left\|\mr^{(\ell)}[C,H]\right\|_{s,\theta},
\]
implying that
\[
\Risk_{\epsilon,\|\cdot\|_\theta}[\widehat{C}_H|\cX^s]\leq\max_{\ell\leq p} 2^{1/\theta}
\left\|\mr^{(\ell)}[C,H]\right\|_{s,\theta}.
\]
\end{proposition}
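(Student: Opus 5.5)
The plan is to combine three ingredients already available: Observation \ref{simpleobs} (well-definedness and the cone-type property of the error), Observation \ref{obsenough} (entrywise bounds in a given case), and the union bound (!) together with (!!).

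First, fix $x\in\cX^s$ and $\xi_x\in\Xi[x,H]$. Then $x$ itself belongs to $\cV[Ax+\xi_x]$, so this set is nonempty. Since $\cX$ is compact and $\|Cu\|_1$ is continuous, the minimum in \rf{polyest} is attained. Hence $\wh x_H(Ax+\xi_x)$ and $\zeta$ are well defined, exactly as in Observation \ref{simpleobs}.

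Second, suppose the case $(\ell,\pm)$ occurs. Observation \ref{obsenough} with $Q=C$ gives $|[C\zeta]_j|\leq\mr_{\ell j}[C,H]$ for all $j\leq p$, i.e., $\abs[C\zeta]\leq\mr^{(\ell)}[C,H]$. Independently of the case, Lemma \ref{simple1} in the form \rf{eq14}/\rf{aboutC} gives $\|C\zeta\|_\theta\leq 2^{1/\theta}\|C\zeta\|_{s,\theta}$ for every $\theta\in[1,\infty]$. This uses that $Cx$ is $s$-sparse and $\|C\wh x\|_1\leq\|Cx\|_1$, the latter because $x$ is feasible.

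Third, chain the two bounds. The norm $\|\cdot\|_{s,\theta}$ is monotone on absolute values: if $|u_j|\leq v_j$ for all $j$, then the $s$ largest entries of $|u|$ are dominated entrywise, after sorting, by the $s$ largest entries of $v$. Hence $\|C\zeta\|_{s,\theta}\leq\|\mr^{(\ell)}[C,H]\|_{s,\theta}$. To justify the sorting step, let $S$ be the index set of the $s$ largest $|u_j|$. Then $\sum_{j\in S}|u_j|^\theta\leq\sum_{j\in S}v_j^\theta$, which is at most the sum of the $s$ largest $v_j^\theta$; the case $\theta=\infty$ is the same argument with maxima. Multiplying by $2^{1/\theta}$ gives the displayed bound on $\|C\zeta\|_\theta$.

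Finally, the risk statement. By (!), admissibility of $H$ gives $\xi_x\in\Xi[x,H]$ with probability at least $1-\epsilon$. By (!!), on this event some case $(\ell,\chi)$ occurs, so $\|C\zeta\|_\theta\leq\max_{\ell\leq p}2^{1/\theta}\|\mr^{(\ell)}[C,H]\|_{s,\theta}$. This holds for every $x\in\cX^s$, which yields the risk bound.

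The only step that is not bookkeeping is the monotonicity of $\|\cdot\|_{s,\theta}$ under entrywise domination of absolute values, and the sorting argument above settles it.
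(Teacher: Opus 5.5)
Your proposal is correct and follows the paper's own route. The paper derives the proposition by applying \rf{forlvec} (that is, Observation~\ref{obsenough}) with $G=C$, combining it with \rf{aboutC} from Lemma~\ref{simple1}, and invoking (!) and (!!) for the risk bound; your sorting argument for monotonicity of $\|\cdot\|_{s,\theta}$ under entrywise domination simply makes explicit a step the paper leaves implicit.
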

We also have the following corollary of Proposition \ref{bxinf} (for notation, see \rf{rhos}).
\begin{proposition}\label{propan1}
Suppose that a $(1-\epsilon)$-admissible $m\times M$ contrast matrix $H$ contains, for some $\delta\in(0,1)$, all
columns $h_{\ell j}^{C,\delta,\pm}$,  for $\ell,j\leq p$. Then for every $\ell\leq p$, $x\in \cX^s$, and $\xi_x\in\Xi[x,H]$, assuming that one of the cases $(\ell,\pm)$ occurs, the
error $C\zeta$, of recovering  $Cx$ by $\widehat{C}_H=C\widehat{x}_H(Ax+\xi_x)$ satisfies
\[
\abs[[C\zeta]]\leq\rho^{(\ell)}[C,\delta]
,\]
 so that, by \rf{aboutC}, for all $\theta\in[1,\infty]$ one has
\begin{align*}
\|C\zeta\|_\theta&\leq2^{1/\theta}\|\rho^{(\ell)}[C,\delta]\|_{s,\theta}
\end{align*}
As a result,  one has
$$
\Risk_{\epsilon,\|\cdot\|_\theta}[\widehat{C}_H|\cX^s]\leq\max_{\ell\leq p}
2^{1/\theta}\left\|\rho^{(\ell)}[C,\delta]\right\|_{s,\theta}
$$
\end{proposition}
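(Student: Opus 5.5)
The proof specializes Proposition \ref{bxinf} to $G=C$ (so $J=p$ and $g_j=c_j$) and then converts the entrywise bound into $\|\cdot\|_\theta$-bounds using (\ref{aboutC}).

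Fix $\ell\leq p$, $x\in\cX^s$ and $\xi_x\in\Xi[x,H]$. By Observation \ref{simpleobs}, $\wh x=\wh x_H(Ax+\xi_x)$ is well defined, and so is $\zeta=\wh x-x$. Assume one of the cases $(\ell,\pm)$ occurs. Since $H$ contains all columns $h^{C,\delta,\pm}_{\ell j}$, $j\leq p$, the premise of Proposition \ref{bxinf} holds with $G=C$. That proposition gives $|[C\zeta]_j|\leq\rho_{\ell j}[C,\delta]$ for every $j\leq p$, which by (\ref{rhos.b}) is exactly $\abs[C\zeta]\leq\rho^{(\ell)}[C,\delta]$. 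The underlying mechanism is Observation \ref{obsenough}: $\zeta\in\cZ_\ell^-[H]\cup\cZ_\ell^+[H]$, and Observation \ref{sk1}(ii) bounds the maximum of $c_j^Tz$ over $\cZ^\chi_\ell$ under the constraint $\|H^TAz\|_\infty\le2$ by $\rho^{C,\delta,\chi}_{\ell j}$. That mechanism is already packaged in Proposition \ref{bxinf}, so no new argument is needed here.

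Next comes the norm bound. Relation (\ref{aboutC}) (equivalently (\ref{eq14}) from Lemma \ref{simple1}) gives $\|C\zeta\|_\theta\leq 2^{1/\theta}\|C\zeta\|_{s,\theta}$. The norm $\|\cdot\|_{s,\theta}$ is absolute and monotone: if $\abs[u]\leq v$ entrywise, then the $s$ largest magnitudes of $u$ are dominated termwise by the $s$ largest entries of $v$, so $\|u\|_{s,\theta}\leq\|v\|_{s,\theta}$. This is the only step needing a line of justification, namely the rearrangement fact that the $k$-th largest $|u_i|$ is at most the $k$-th largest $v_i$. Applying it with $u=C\zeta$ and $v=\rho^{(\ell)}[C,\delta]$ yields $\|C\zeta\|_\theta\leq 2^{1/\theta}\|\rho^{(\ell)}[C,\delta]\|_{s,\theta}$.

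Finally, the risk bound. Since $H$ is $(1-\epsilon)$-admissible, (!) gives $\xi_x\in\Xi[x,H]$ with $P_x$-probability at least $1-\epsilon$. On that event, (!!) guarantees that some case $(\ell,\chi)$ occurs, so $\|C\zeta\|_\theta\leq\max_{\ell\leq p}2^{1/\theta}\|\rho^{(\ell)}[C,\delta]\|_{s,\theta}$. This holds for every $x\in\cX^s$, which is the stated risk bound. I expect no real obstacle; the only point to verify carefully is the monotonicity of $\|\cdot\|_{s,\theta}$ used in the norm bound.
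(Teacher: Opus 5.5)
Your proposal is correct and follows exactly the route the paper intends: Proposition~\ref{bxinf} with $G=C$ gives $\abs[C\zeta]\leq\rho^{(\ell)}[C,\delta]$, relation~\rf{aboutC} together with the monotonicity of $\|\cdot\|_{s,\theta}$ gives the norm bound, and (!), (!!) give the risk bound. One minor slip in your explanatory aside: the upper bound $\max\{c_j^Tz:\,z\in\cZ_\ell^\chi,\,\|H^TAz\|_\infty\leq 2\}\leq\rho^{C,\delta,\chi}_{\ell j}$ comes from part (i) of Observation~\ref{sk1}, applied with $h_*=h^{C,\delta,\chi}_{\ell j}$ being a column of $H$; part (ii) gives the reverse, optimality-type inequality, but since you invoke Proposition~\ref{bxinf} directly this does not affect your proof.
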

\subsubsection{Recovering \anc{the image}{} $w=Bx$}\label{4.1.3}
We assume here that the matrix
$B\in\bR^{\nu\times n}$ allows for the decomposition $B=FC$ for some $F\in {\bR}^{\nu\times p}$ and the  recovery error is measured in $\|\cdot\|_2$.
 Given a $(1-\epsilon)$-admissible $H\in\bR^{m\times M}$, let $G=F^TB\in \bR^{p\times n}$ and let $g_1^T,...,g_p^T$
 be the rows of $G$. Let also $x\in \cX^s$, $\xi_x\in \Xi[x,H]$, and  $\zeta=\wh x_{H}(Ax+\xi)-x$.
 Assuming that one of the cases $(\ell,\pm)$ occurs, we conclude by Proposition \ref{obslin} that
$|g_j^T\zeta|\leq \mr_\ell[g_j,H]$, $j\leq J$ (for notation, see (\ref{mrofGH})), so that
\[
\|G\zeta\|_\infty\leq \max_{j\leq p}\mr_\ell[g_j,H]=\mr_\ell[G,H],
\]
and
\[
 |c_j^T\zeta|\leq \mr_\ell[c_j,H],\quad j\leq p
\]
(recall that $c_\ell^T$ are rows of $C$).
When invoking (\ref{aboutC}) with $\theta=1$, we get the second $\leq$ in the following chain:
\begin{align}
\|B\zeta\|_2^2&=(F^TB\zeta)^TC\zeta=(G\zeta)^TC\zeta\leq \|G\zeta\|_\infty\|C\zeta\|_1\leq 2\|G\zeta\|_\infty\|C\zeta\|_{s,1}\nn
&\leq2\mr_\ell[G,H]\left\|\mr^{(\ell)}[C,H]\right\|_{s,1}
\label{eq23}
\end{align}
Recalling that for a $(1-\epsilon)$-admissible $H$ and $x\in\cX^s$ the inclusion $\xi_x\in\Xi[x,H]$ takes place with $P_x$-probability
$\geq 1-\epsilon$, we conclude that
the risk of the associated with contrast $H$ estimate $\wh B_H(\omega)=B\wh x_H(\omega)$ of $Bx$ satisfies the bound
\begin{equation}\label{eq26}
\Risk^2_{\epsilon,\|\cdot\|_2}[\widehat{B}_H|\cX^s]\leq 2\max_{\ell\leq p}\mr_\ell[G,H]\left\|\mr^{(\ell)}[C,H]\right\|_{s,1}.
\end{equation}

We now consider the problem of optimizing this risk bound over the contrast matrix $H$. Here is the underlying construction:
\begin{quotation} \noindent(\#) Given $G=[g_1,...,g_J]^T\in\bR^{J\times n}$ and  $\delta\in(0,1)$,  we compute optimal solutions $f_{\ell j}^{G,\delta,\pm}$  to the optimization
problems in (\ref{rhoelljm}). We then define $h_{\ell j}^{G,\delta,\pm}$ according to (\ref{eq30}). The outcome of the construction is the matrix
$H[{G,\delta}]=[H_1[G,\delta],...,H_p[G,\delta]]$, where $H_\ell[G,\delta]$ is the  $m\times 2J$ matrix with the columns $h_{\ell j}^{G,\delta,\pm}$,
$j\leq J$.
\end{quotation}
As an immediate corollary of
Proposition \ref{bxinf}, we have the following
\begin{observation}\label{verynew} Given $\ell\leq p$, suppose that a $(1-\epsilon)$-admissible  contrast matrix $H$ contains as a submatrix  the matrix $H_\ell[{G,\delta}]$ with some $\delta\in(0,1)$. Assuming that
$x\in \cX^s$, $\xi_x\in\Xi[x,H]$, and one of the cases $(\ell,\pm)$ takes place, the vector $\zeta=\wh x_H(Ax+\xi_x)-x$ satisfies
$$
\abs[G\zeta]\leq\rho^{(\ell)}[G,\delta]\quad\mbox{and}\quad \|G\zeta\|_\infty\leq \rho_\ell[G,\delta]
$$
with $\rho^{(\ell)}[\cdot]$ and $ \rho_\ell[\cdot]$ defined in \rf{rhos.b} and \rf{rhos.c}.
\end{observation}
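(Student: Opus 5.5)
This follows from Observation \ref{obsenough} (equivalently Proposition \ref{bxinf}) with $Q=G$, after checking that its premises hold. First, since $H$ is $(1-\epsilon)$-admissible, $x\in\cX^s$ and $\xi_x\in\Xi[x,H]$, Observation \ref{simpleobs} shows that $\wh x_H(Ax+\xi_x)$ is well defined. It also gives $\zeta\in\mZ[H]$, so the phrase ``one of the cases $(\ell,\pm)$ occurs'' makes sense. Under this assumption we have $\zeta\in\cZ_\ell^-\cup\cZ_\ell^+$ and $\|H^TA\zeta\|_\infty\leq 2$.

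Next, fix $j\leq J$ and $\chi\in\{-,+\}$. The column $h_{\ell j}^{G,\delta,\chi}$ is obtained from the optimal solution $f_{\ell j}^{G,\delta,\chi}$ of (\ref{rhoelljm}) via (\ref{eq30}). Problem (\ref{rhoelljm}) is exactly problem (\ref{problms}) with $e=g_j$, $\cW=\cZ_\ell^\chi$, $s(\cdot)=\pi_\delta(\cdot)$ and $\kappa=2$. Here $\cZ_\ell^\chi$ is a convex compact set containing the origin, since it is cut out of the compact set $\ov\cX$ by convex constraints that $0$ satisfies. Observation \ref{sk1}(i) therefore gives
\[
\rho_{\ell j}^{G,\delta,\chi}\geq\max_z\left\{g_j^Tz:\,z\in\cZ_\ell^\chi,\,[h_{\ell j}^{G,\delta,\chi}]^TAz\leq 2\right\}.
\]
Because $h_{\ell j}^{G,\delta,\chi}$ is a column of $H_\ell[G,\delta]$, and hence of $H$, the right-hand side dominates $\mr_{\ell j}^\chi[G,H]$. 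This is the \Antimonotonicity{} step: shrinking the constraint set from one column to all columns of $H$ can only decrease the maximum. Taking the maximum over $\chi$ gives $\mr_{\ell j}[G,H]\leq\rho_{\ell j}[G,\delta]$.

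Finally, combining this with (\ref{itholds}), we get $|[G\zeta]_j|\leq\mr_{\ell j}[G,H]\leq\rho_{\ell j}[G,\delta]$ for every $j$. In vector form this is $\abs[G\zeta]\leq\rho^{(\ell)}[G,\delta]$. Taking the maximum over $j$ and using (\ref{rhos.c}) yields $\|G\zeta\|_\infty\leq\rho_\ell[G,\delta]$.

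The only point needing care is the sign symmetry. Observation \ref{sk1}(i) bounds only the one-sided quantity $g_j^Tz$, not $|g_j^Tz|$. The two-sided bound comes from $\cZ_\ell^-=-\cZ_\ell^+$ together with the constraint $\|H^TAz\|_\infty\leq 2$ being symmetric: together they give $\max_{z\in\cZ_\ell^-[H]\cup\cZ_\ell^+[H]}|g_j^Tz|=\max_\chi\mr^\chi_{\ell j}[G,H]$, exactly as in the proof of Observation \ref{obsenough}. Everything else is direct substitution.
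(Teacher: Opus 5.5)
Your proof is correct and takes essentially the paper's route. The paper obtains this observation as an immediate corollary of Proposition~\ref{bxinf}, i.e., of Observation~\ref{obsenough} with $Q=G$, and the proof of that observation is exactly what you unfold: Observation~\ref{sk1}(i), $H$-monotonicity with respect to the columns $h_{\ell j}^{G,\delta,\pm}$, and the symmetry $\cZ_\ell^-=-\cZ_\ell^+$ to pass from one-sided to two-sided bounds.
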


To optimize the bound (\ref{eq26}) in $H$, we act as follows.
\begin{enumerate}
\item  Given $\delta\in(0,1)$, we apply (\#) to $G=F^TB$, thus obtaining the $m\times 2p^2$ matrix $H[F^TB,\delta]$ such that
\begin{quote}
(A) {\sl The $\pi_\delta$-norms of columns in $H[F^TB,\delta]$ do not exceed 1, and whenever $(1-\epsilon)$-admissible contrast matrix $H$ contains $H[F^TB,\delta]$ as a submatrix,
$x\in\cX^s$, $\xi_x\in\Xi[x,H]$, and one of the cases $(\ell,\pm)$ takes place, $\zeta=\wh x_H(Ax+\xi_x)-x$ satisfies the relation  $\abs[F^TB\zeta] \leq \rho^{(\ell)} [F^TB,\delta]$.}
\end{quote}
\item We apply (\#) to $G=C$, thus {obtaining the}  $m\times 2p^2$ matrix $H[C,\delta]$ such that
\begin{quote}
(B) {\sl The $\pi_\delta$-norms of columns in $H[C,\delta]$ do not exceed 1, and  whenever $(1-\epsilon)$-admissible contrast matrix $H$ contains  $H[C,\delta]$ as a submatrix,  $x\in\cX^s$, $\xi_x\in \Xi[x,H]$, and for some $\ell\leq p$ one of the cases $(\ell,\pm)$ takes place, $\zeta=\wh x_H(Ax+\xi_x)-x$ satisfies $\abs[C\zeta]\leq\rho^{(\ell)}[C,\delta]$.}
\end{quote}
\end{enumerate}
Given $\delta\in(0,1)$, let us set $\ov H[\delta]=[H[F^TB,\delta],H[C,\delta]]\in \bR^{m\times 4p^2}$. The following result is an immediate consequence of  (A),  (B), and (\ref{eq23}).
\begin{proposition}\label{bxinf3}
Let $B=FC$, and let us assume that a $(1-\epsilon)$-admissible  contrast matrix $H$ contains $\ov H[\delta]$ with some $\delta\in(0,1)$  as a submatrix.
Then the estimate $\wh x=\wh x_{H}(Ax+\xi_x)$ is well defined for all $x\in\cX^s$ and  $\xi_x\in\Xi[x,\overline{H}]$, and when one of the cases $(\ell,\pm)$ occurs for some
$\ell\leq p$, vector $\zeta=\wh x_H(Ax+\xi_x)-x$ obeys
\begin{align*}
\abs[F^TB\zeta]&\leq \rho^{(\ell)}[F^TB,\delta],\quad
\|C\zeta\|_{s,1}&\leq \left\|\rho^{(\ell)}[C,\delta]\right\|_{s,1},\quad
\|C\zeta\|_1&\leq 2\|C\zeta\|_{s,1}\leq 2\left\|\rho^{(\ell)}[C,\delta]\right\|_{s,1},
\end{align*}
see \rf{rhos.b}.
Hence,  the estimate $\wh B_{H}(Ax+\xi)=B\wh x_{H}(Ax+\xi_x)$  of $Bx$ {is well defined and}
 \[
\|\wh B_{H}(\omega)-Bx\|^2_2\leq 2 \rho_\ell[F^TB,\delta]\left\|\rho^{(\ell)}[C,\delta]\right\|_{s,1}.
 \]
Consequently, one has
$$
\Risk^2_{\epsilon,\|\cdot\|_2}[\wh{B}_{H}|\cX^s]\leq 2\max_{\ell\leq  p}\rho_\ell[F^TB,\delta]\left\|\rho^{(\ell)}[C,\delta]\right\|_{s,1}.
$$
\end{proposition}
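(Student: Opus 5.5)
The proof combines (A), (B) and the chain \rf{eq23}, with $\mr$-quantities replaced by $\rho$-quantities.

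\textbf{Step 1: well-definedness.} Fix $x\in\cX^s$ and $\xi_x\in\Xi[x,H]$. By Observation \ref{simpleobs}, $x$ is feasible for \rf{polyest}, so $\wh x$ is well defined and $\zeta\in\mZ[H]$. By (!!) some case $(\ell,\pm)$ occurs. The column blocks of $\ov H[\delta]$ have $\pi_\delta$-unit columns by \rf{eq30}, and $H$ is assumed $(1-\epsilon)$-admissible. So (!) applies, and $\xi_x\in\Xi[x,H]$ holds with probability at least $1-\epsilon$.

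\textbf{Step 2: the componentwise bounds.} Since $H$ contains $H[F^TB,\delta]$ as a submatrix, Observation \ref{verynew} (equivalently (A)) applied with $G=F^TB$ gives $\abs[F^TB\zeta]\leq\rho^{(\ell)}[F^TB,\delta]$. Hence $\|F^TB\zeta\|_\infty\leq\rho_\ell[F^TB,\delta]$.

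Likewise, containing $H[C,\delta]$ gives, via (B), $\abs[C\zeta]\leq\rho^{(\ell)}[C,\delta]$. The norm $\|\cdot\|_{s,1}$ is monotone in absolute values of entries. Indeed, the sum of the $s$ largest magnitudes of a vector dominated entrywise by another vector is at most the same sum for the dominating vector. Therefore $\|C\zeta\|_{s,1}\leq\|\rho^{(\ell)}[C,\delta]\|_{s,1}$.

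Finally, \rf{aboutC} with $\theta=1$ gives $\|C\zeta\|_1\leq 2\|C\zeta\|_{s,1}$, which completes the three displayed inequalities.

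\textbf{Step 3: the $\ell_2$ bound.} Repeat the chain \rf{eq23} verbatim. Since $B=FC$,
\[
\|B\zeta\|_2^2=(F^TB\zeta)^T C\zeta\leq\|F^TB\zeta\|_\infty\|C\zeta\|_1\leq 2\rho_\ell[F^TB,\delta]\,\|\rho^{(\ell)}[C,\delta]\|_{s,1}.
\]
Taking the maximum over $\ell$, since the case that occurs is unknown, and using that the event $\xi_x\in\Xi[x,H]$ has probability at least $1-\epsilon$ for every $x\in\cX^s$, yields the risk bound.

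No step is a real obstacle. The only points needing care are the monotonicity of $\|\cdot\|_{s,1}$ with respect to entrywise domination of magnitudes, and the fact that Observation \ref{verynew} needs only the submatrix $H_\ell[\cdot,\delta]$ for the particular $\ell$ that occurs. Both are immediate.
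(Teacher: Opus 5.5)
Your proposal is correct and is essentially the paper's own argument. The paper simply declares the proposition an immediate consequence of (A), (B) and the chain\rf{eq23}, and you have filled in exactly those steps, including the monotonicity of $\|\cdot\|_{s,1}$ under entrywise domination and\rf{aboutC} with $\theta=1$. You were also right to work with $\xi_x\in\Xi[x,H]$: the statement's $\Xi[x,\ov H]$ must be read this way, since feasibility of $x$ in\rf{polyest} needs the constraint for all columns of $H$.
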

Note that when $\delta\leq\epsilon/(4p^2m)$, the matrix $\ov H[\delta]$ is $(1-\epsilon)$-admissible, so that the premise of Proposition \ref{bxinf3}  holds true for $H=\ov H[\delta]$.
\subsection{Reducing complexity of the estimate}\label{sec:rce}
Let us consider again the problem of recovering the sparse signal $y=Cx$. As we have seen (Observation \ref{simpleobs}), assuming that $H$ is a $(1-\epsilon)$-admissible $m\times M$ contrast matrix, $x\in\cX^s$ and $\xi_x\in\Xi[x,H]$,  the error $\zeta=\wh x_H(Ax+\xi_x)-x$  satisfies  the inclusion
\begin{align*}
\zeta\in \mZ[H]&=\left\{ z\in \ov{\cX}:\;\|H^TAz\|_\infty\leq 2,\; \|Cz\|_1\leq 2s\|Cz\|_\infty\right\}\nn
&=\bigcup\limits_{\ell\leq p,\chi\in\{-,+\}}
 \left[\cZ_\ell^\chi[H]=\{z\in\cZ_\ell^\chi:\,\|H^TAz\|_\infty\leq2\}\right].
\end{align*}
We conclude that, under the above assumption, the error $[C\zeta]_j$ of estimating  the $j$th entry $[Cx]_j$ of $Cx$ by $C\wh{x}_H(Ax+\xi_x)$ is bounded by the quantity $\max_{z\in \mZ[H]} \left|[Cz]_j\right|$, while, by the structure of $\cZ_\ell^\pm[H]$,
\begin{align*}
&\max_{j\leq p} \max_{z\in \mZ[H]} |[Cz]_j|=\max_{j\leq p} \max_\ell\left[\max_{z\in \cZ_\ell^-[H]}[Cz]_j,\;
\max_{z\in \cZ_\ell^+[H]}[Cz]_j,\right]\\
&=\max_\ell\max\left[\max_{z\in\cZ_\ell^-[H]}\max_{j\leq p}[Cz]_j,\max_{z\in\cZ_\ell^+[H]}\max_{j\leq p}[Cz]_j\right]\\
&=\max_\ell\max\left[\max_{z\in\cZ_\ell^+[H]}\max_{j\leq p}[-Cz]_j,\max_{z\in\cZ_\ell^+[H]}\max_{j\leq p}[Cz]_j\right]\quad \hbox{[as $\cZ_\ell^-[H]=-\cZ_\ell^+[H]]$}\\
&=\max_\ell\max_{z\in\cZ_\ell^+[H]}[Cz]_\ell\quad\hbox{[by definition of $\cZ_\ell^+[H]$]},
\end{align*}
and therefore
\be
\|C\zeta\|_\infty \leq \max_{\ell\leq p}\left[r_\ell:=\max_{z} \left\{[Cz]_\ell:\,z\in \cZ^+_\ell,\,\|H^TAz\|_\infty\leq 2\right\}\right].
\ee{bred}
The above observation allows us to reduce significantly---from $2p^2$ to $p$---the row dimension of the contrast matrix $\ov H[\delta]$ without degrading the risk in recovering  $Cx$ by  $C\wh x_H$.
Given  $\delta\in(0,1)$, consider optimization problems
\be
\varrho_{\ell}^{C,\delta}=\min_{f\in \bR^m }\left[\psi_{\ell}(f):=\max_{z}\left\{(c_\ell-A^Tf)^Tz+{2}\pi_\delta(f):\,z\in \cZ_\ell^+\right\}\right],\quad  \ell=1,...,p.
\ee{simple_p1i}
We
compute optimal solutions $f_\ell$ to  problems \rf{simple_p1i}, define $h_\ell$ such that $\pi_\delta(h_\ell)=1$ and
 that $\pi_\delta(f_\ell)h_\ell=f_\ell$, $\ell\leq p$, and define $H[C,\delta]$ as the matrix with the columns $h_\ell$, $\ell\leq p$.
 Invoking Observation \ref{obsenough} and (\ref{aboutC}), we arrive at

\begin{proposition}\label{prop:reduce1}
Suppose that a $(1-\epsilon)$-admissible contrast matrix $H$ contains as a submatrix the matrix  $H[C,\delta]$ with some $\delta\in(0,1)$. The for all
$x\in\cX^s$ and $\xi_x\in\Xi[x,H]$ the recovery error $\zeta=\wh x_H(Ax+\xi_x)-x$ satisfies
\begin{equation}\label{eq88}
\|C\zeta\|_\infty\leq \varrho[C,\delta]:=\max_{\ell\leq p} \varrho^{C,\delta}_{\ell},\quad
\|C\zeta\|_1\leq2\|C\zeta\|_{s,1}\leq 2s\varrho[C,\delta],
\end{equation}
so that the risk of the recovery $\widehat{C}_H=C\widehat{x}_H$ of $Cx$ satisfies
\[
\Risk_{\epsilon,\|\cdot\|_\theta}[\wh{C}_{H}|\cX^s]\leq (2s)^{1/\theta}\varrho[C,\delta],\quad 1\leq\theta\leq\infty.
\]
\end{proposition}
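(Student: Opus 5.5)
The argument rests on three ingredients: Observation \ref{simpleobs}, the reduction (\ref{bred}), and part (i) of Observation \ref{sk1}. Fix $x\in\cX^s$ and $\xi_x\in\Xi[x,H]$. By Observation \ref{simpleobs}, $\wh x_H(Ax+\xi_x)$ is well defined and $\zeta\in\mZ[H]$. By the computation leading to (\ref{bred}), which uses only $\cZ_\ell^-[H]=-\cZ_\ell^+[H]$ and the definition of $\cZ_\ell^+$, we have
\[
\|C\zeta\|_\infty\leq\max_{\ell\leq p} r_\ell,\qquad r_\ell=\max_z\left\{c_\ell^Tz:\,z\in\cZ_\ell^+,\,\|H^TAz\|_\infty\leq2\right\}.
\]
It therefore suffices to show $r_\ell\leq\varrho_\ell^{C,\delta}$ for every $\ell$.

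For this, apply Observation \ref{sk1} with $e=c_\ell$, $s(\cdot)=\pi_\delta(\cdot)$, $\cW=\cZ_\ell^+$ and $\kappa=2$. The objective in (\ref{simple_p1i}) is exactly $\psi(f,w)$ of (\ref{problms}), so $\Opt=\varrho_\ell^{C,\delta}$. One should check that $\cZ_\ell^+$ is convex and compact and contains $0$. Compactness and $0\in\cZ_\ell^+$ are clear, since $\ov\cX$ is compact and symmetric. Convexity holds because the constraints $\|Cz\|_1\leq 2s[Cz]_\ell$ and $|[Cz]_j|\leq[Cz]_\ell$ are convex in $z$.

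By (i) of Observation \ref{sk1}, with $h_\ell$ the $\pi_\delta$-unit vector satisfying $f_\ell=\pi_\delta(f_\ell)h_\ell$,
\[
\varrho_\ell^{C,\delta}\geq\max_z\{c_\ell^Tz:\,z\in\cZ_\ell^+,\,h_\ell^TAz\leq2\}.
\]
Since $h_\ell$ is a column of $H[C,\delta]$ and hence of $H$, the feasible set defining $r_\ell$ is contained in the one on the right, so $r_\ell\leq\varrho_\ell^{C,\delta}$. (This is \Antimonotonicity{} in action.) The degenerate case $f_\ell=0$ needs a separate remark: there any $\pi_\delta$-unit $h_\ell$ is allowed, and the inequality in (i) still holds because its proof only uses $s(f_*)[h_*^TAw-\kappa]\leq0$, which is trivially true when $s(f_*)=0$. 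The main thing to verify is this identification of $r_\ell$ with a restriction of the feasible set in (\ref{eq55}); the remainder is bookkeeping. This gives the first inequality in (\ref{eq88}).

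For the second inequality in (\ref{eq88}), use (\ref{aboutC}) with $\theta=1$ (equivalently, (\ref{eq14})) to get $\|C\zeta\|_1\leq2\|C\zeta\|_{s,1}$. Then $\|C\zeta\|_{s,1}\leq s\|C\zeta\|_\infty\leq s\varrho[C,\delta]$. More generally, (\ref{eq14}) gives $\|C\zeta\|_\theta\leq(2s)^{1/\theta}\|C\zeta\|_\infty\leq(2s)^{1/\theta}\varrho[C,\delta]$ for every $\theta\in[1,\infty]$.

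Finally, $H$ is $(1-\epsilon)$-admissible, so by (!) the event $\xi_x\in\Xi[x,H]$ has $P_x$-probability at least $1-\epsilon$ for every $x\in\cX^s$. On that event the bound above holds, which yields the stated bound on $\Risk_{\epsilon,\|\cdot\|_\theta}[\wh C_H|\cX^s]$.
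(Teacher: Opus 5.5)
Your proof is correct and follows essentially the same route as the paper. You reduce $\|C\zeta\|_\infty$ to $\max_\ell r_\ell$ via (\ref{bred}), then bound each $r_\ell$ by $\varrho_\ell^{C,\delta}$ using part (i) of Observation \ref{sk1} and the fact that $h_\ell$ is a column of $H$ (the paper routes this through the intermediate quantity with $H[C,\delta]$ in place of $H$), and finish with (\ref{aboutC}) and (!); your extra checks on $\cZ_\ell^+$ and the degenerate case $f_\ell=0$ are valid and just make explicit what the paper leaves implicit.
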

{\bf Proof.} Let $x\in \cX^s$ and $\xi\in\Xi[x,H]$.  Due to \rf{bred} one has
\be
\|C\zeta\|_\infty &\leq \max_{\ell\leq p}\max_{z} \left\{[Cz]_\ell:\,z\in \cZ^+_\ell,\,\|H^TAz\|_\infty\leq 2\right\}
\leq \max_{\ell\leq p}\ov \mr_\ell
\ee{oprii}
where
\[
\ov \mr_\ell=\max_{z} \left\{[Cz]_\ell:\,z\in \cZ^+_\ell,\,\|H[C,\delta]^TAz\|_\infty\leq 2\right\}.
\]
Applying Observation \ref{sk1} with $\cW=\cZ_\ell^+$, $e=c_\ell$, $B=A$ and $\kappa=2$, we get $\ov \mr_\ell
\leq\varrho^{C,\delta}_{\ell}$, which combines with (\ref{oprii}) to prove the first inequality in (\ref{eq88}). The second inequality in (\ref{eq88}) is given by the first one combined with (\ref{aboutC}). It remains to recall that matrix $H$ by assumption  is $(1-\epsilon)$-admissible  and to invoke (!).
\qed

Note that when $\delta\leq\epsilon/p$, the matrix $H[C,\delta]$ is $(1-\epsilon)$-admissible, so that the premise in Proposition \ref{prop:reduce1} is satisfied by  $H=H[C,\delta]$.\par
Proposition \ref{prop:reduce1}  implies that when a $(1-\epsilon)$-admissible contrast matrix $H$ contains $H[C,\delta]$ with some $\delta>0$ as a submatrix, the convex compact symmetric w.r.t. the origin set
\[
\cZ[C,\delta]=\{z\in \ov\cX:\,\|Cz\|_\infty\leq \varrho[C,\delta],\,\|Cz\|_{1}\leq 2s\varrho[C,\delta]\}
\]
is an ``error localizer": whenever $x\in\cX^s$ and $\xi_x\in\Xi[x,H]$, the error $\zeta=\wh x_H(Ax+\xi_x)-x$ of recovering $x$ by $\wh x_H$ belongs to $\cZ[C,\delta]$.
  This information can be used to build an estimate of $Gx\in\bR^J$, for a given $G=[g_1,...,g_J]^T$, namely,  as follows. Given $\delta\in(0,1)$,  consider
optimization problems
\begin{equation}\label{opti3}
{\varsigma_j}[G,\delta]=\min_f\left\{\psi_{j}(f)=\max_{z}(g_j-A^Tf)^Tz+{2}\pi_\delta(f):\,z\in \cZ[C,\delta]\right\},\quad j=1,...,J.
\end{equation}
Denoting $f_j$ the optimal solutions to the problems and specifying $\pi_\delta$-unit vectors $h_\ell$ such that $f_\ell=\pi_\delta(f_\ell)h_\ell$,  let us set $\ov H[G,\delta]=[h_1,...,h_J]$. By Observation \ref{sk1} one has
$$
\max_z\left\{g_j^Tz:z\in\cZ[C,\delta],\, \|\ov H[G,\delta]^TAz\|_\infty\leq 2\right\}\leq {\varsigma_j}[G,\delta],\quad j\leq J.
$$
Taking into account that $\cZ[C,\delta]$ is symmetric w.r.t. the origin, we conclude that
\begin{equation}\label{eq83}
\max_z\left\{|g_j^Tz|:z\in\cZ[C,\delta],\, \|\ov H[G,\delta]^TAz\|_\infty\leq 2\right\}\leq {\varsigma_j}[G,\delta],\quad j\leq J.
\end{equation}
Given $J\times n$ matrix $G$ and $\delta\in(0,1)$, let $\wt{H}[\delta]=\left[H[C,\delta],\,\ov H[G,\delta]]\right]\in\bR^{m\times(p+J)}$.
\begin{proposition}\label{prop:Gred}
Assume that  a $(1-\epsilon)$-admissible contrast matrix $H$ contains as a submatrix the matrix $\wt{H}[\delta]$ for some $\delta\in(0,1)$. Whenever $x\in\cX^s$ and $\xi_x\in\Xi[x,H]$, for the error $\zeta=\wh x_H(Ax+\xi_x)-x$ of recovering $x$ by $\wh x_H$ it holds
\be
\zeta\in\cZ[C,\delta]
\ee{satisfiesmateec}
and thus
\begin{subequations}\label{satisfiesmatee}
\begin{align}
\|C\zeta\|_\theta&\leq(2s)^{1/\theta}\varrho[C,\delta],
\quad \theta\in[1,\infty],\label{satisfiesmatee.b}\\
\abs[G\zeta]&\leq{\wt{\varsigma}\,}[G,\delta],\quad {\wt{\varsigma}\,}[G,\delta]:=\left[{\varsigma}_{1}[G,\delta];...;{\varsigma}_{J}[G,\delta]\right].\label{satisfiesmatee.e}
\end{align}
\end{subequations}
In particular,
\[
\|G\zeta\|_\infty\leq \max_{j\leq J}{\varsigma}_j[G,\delta]=:{\ov\varsigma}[G,\delta],
\]
so that the $(s,\epsilon,\|\cdot\|_\infty)$-risk of the recovery $\wh G_{ H}(\cdot)=G\wh x_{H}(\cdot)$ of the image $Gx$ of the unknown signal $x$ satisfies
\be
\Risk_{\epsilon,\|\cdot\|_\infty}[\wh{G}_{H}|\cX^s]\leq {\ov\varsigma}[G,\delta].
\ee{risk666}
Furthermore, let, in addition to what has been already assumed,  $B\in \bR^{\nu\times n}$ be such that $B=FC$ for some $F\in \bR^{\nu\times p}$, and let $G=F^TB$.
Then the $(s,\epsilon,\|\cdot\|_2)$-risk of the estimate $\wh B_{ H}(\cdot)=B\wh x_H(\cdot)$
of $Bx$ satisfies
\[
\Risk^2_{\epsilon,\|\cdot\|_2}[\wh B_{ H}|\cX^s]\leq 2s\varrho[C,\delta],{\ov\varsigma}\left[{F^TB},\delta\right].
\]
\end{proposition}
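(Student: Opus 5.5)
Fix $x\in\cX^s$ and $\xi_x\in\Xi[x,H]$, and reduce all claims to what Proposition \ref{prop:reduce1} and relation (\ref{eq83}) already give.

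\textbf{Step 1: localization (\ref{satisfiesmateec}).} Since $H$ is $(1-\epsilon)$-admissible and contains $H[C,\delta]$ as a submatrix, Proposition \ref{prop:reduce1} applies. By Observation \ref{simpleobs}, $\zeta$ is well defined, $\zeta\in\ov\cX$ and $\|H^TA\zeta\|_\infty\le 2$. By (\ref{eq88}), $\|C\zeta\|_\infty\le\varrho[C,\delta]$ and $\|C\zeta\|_1\le 2s\varrho[C,\delta]$. Together these say exactly that $\zeta\in\cZ[C,\delta]$.

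\textbf{Step 2: bounds (\ref{satisfiesmatee}).} For (\ref{satisfiesmatee.b}), use (\ref{aboutC}) together with $\|C\zeta\|_{s,\theta}\le s^{1/\theta}\|C\zeta\|_\infty$, and insert the bound on $\|C\zeta\|_\infty$ from Step 1. For (\ref{satisfiesmatee.e}), note that $\ov H[G,\delta]$ is a submatrix of $H$, so $\|\ov H[G,\delta]^TA\zeta\|_\infty\le\|H^TA\zeta\|_\infty\le2$ (this is \Antimonotonicity). Hence $\zeta$ is feasible for the maximization in (\ref{eq83}), which gives $|g_j^T\zeta|\le\varsigma_j[G,\delta]$ for every $j$. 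Taking the maximum over $j$ gives the $\|G\zeta\|_\infty$ bound. Since this holds on the event $\xi_x\in\Xi[x,H]$, whose probability is at least $1-\epsilon$ by (!), the risk bound (\ref{risk666}) follows.

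\textbf{Step 3: the $\|\cdot\|_2$ bound for $Bx$.} Repeat the chain (\ref{eq23}) with $G=F^TB$:
\[
\|B\zeta\|_2^2=(F^TB\zeta)^T C\zeta\le \|G\zeta\|_\infty\|C\zeta\|_1 .
\]
Bound $\|G\zeta\|_\infty$ by $\ov\varsigma[F^TB,\delta]$ (Step 2) and $\|C\zeta\|_1$ by $2s\varrho[C,\delta]$ (Step 1, i.e. (\ref{satisfiesmatee.b}) with $\theta=1$). The probability argument via (!) is the same as before.

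There is no real obstacle. The only points to handle carefully are the use of \Antimonotonicity\ in Step 2, so that feasibility transfers from the full matrix $H$ to the submatrix $\ov H[G,\delta]$, and the symmetry of $\cZ[C,\delta]$ that underlies (\ref{eq83}). The stray comma in the final displayed bound of the statement is to be read as a product.
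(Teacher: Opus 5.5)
Your proof is correct and follows the paper's argument essentially step by step. You localize $\zeta$ in $\cZ[C,\delta]$ via Proposition~\ref{prop:reduce1} and Observation~\ref{simpleobs}, get (\ref{satisfiesmatee.b}) from (\ref{eq14}), get the componentwise $G$-bound from (\ref{eq83}) using that $\ov{H}[G,\delta]$ is a submatrix of $H$, and finish with the same chain $\|B\zeta\|_2^2=(C\zeta)^TF^TB\zeta\le\|C\zeta\|_1\|F^TB\zeta\|_\infty$, reading the stray comma as a product exactly as the paper's proof does.
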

{\bf Proof.} The validity of (\ref{satisfiesmateec}) has been justified when introducing the set $\cZ[C,\delta]$ (recall that we are under the assumptions that $\xi_x\in\Xi[x,H]$ and that $H[C,\delta]$ is a submatrix of $H$). Thus, $\zeta\in\cZ[C,\delta]$, and (\ref{satisfiesmatee.b}) holds true  due to (\ref{eq14}), see Observation \ref{simpleobs}.
Inequalities \rf{satisfiesmatee.e} are given by (\ref{eq83}) (recall that we are under the assumption that $\ov H[G,\delta]$ is a submatrix of $H$), and(\ref{risk666}) is given by (\ref{satisfiesmatee.e}) combined with $(!)$. \par
Finally, when $B=FC$ and $G=F^TB$ we have
\[
\|B\zeta\|^2_2=(C\zeta)^TF^TB\zeta\leq \| C\zeta\|_1\|F^TB\zeta\|_\infty\leq 2s\|C\zeta\|_\infty\|F^TB\zeta\|_\infty\leq2s \varrho[C,\delta]\,\ov\rho[F^TB,\delta];
\]
invoking (!), we arrive at the concluding risk bound. \qed
\par
Note that when $\delta\leq\epsilon/(p+J)$, matrix $\wt{H}[\delta]$ is $(1-\epsilon)$-admissible, and the premise of Proposition \ref{prop:Gred} is satisfied by $H=\wt{H}[\delta]$.

Next, let $G=C$, and let us consider the ``iterated contrast"
$\wt{H}[C,\delta]=\left[H[C,\delta],\ov H[C,\delta]\right]\in\bR^{m\times 2p}$. Applying Proposition \ref{prop:Gred} in this setting we arrive at the following immediate
\begin{observation}\label{prop:Cred}
Suppose that a $(1-\epsilon)$-admissible contrast matrix $H$ contains as a submatrix the matrix  $\wt H[C,\delta]$ with some $\delta\in(0,1)$. Then for all
$x\in\cX^s$ and $\xi_x\in\Xi[x,H]$ the error $\zeta=\wh x_H(Ax+\xi_x)-x$ of recovering $x$ by $\wh x$ satisfies
\begin{subequations}\label{satisfiesmateeC}
\begin{align}
\|C\zeta\|_\theta&\leq(2s)^{1/\theta}\varrho[C,\delta],\quad 1\leq\theta\leq\infty.
\label{satisfiesmateecC1}\\
\abs[C\zeta]&\leq{\wt{\varsigma}\:}
[C,\delta] \label{satisfiesmateeC.c}
\end{align}
\end{subequations}
(for notation, see \rf{satisfiesmatee.e}),
and thus
\be
\|C\zeta\|_\theta&\leq2^{1/\theta}\left\|{\wt{\varsigma}\;}[C,\delta]\right\|_{s,\theta},\quad 1\leq\theta\leq\infty.
\ee{satisfiesmateeC.d}
As a result, the risk of the recovery $\widehat{C}_H=C\widehat{x}_H$ of $Cx$ satisfies
\[
\Risk_{\epsilon,\|\cdot\|_\theta}[\wh{C}_{H}|\cX^s]\leq
\min\left[(2s)^{1/\theta}\varrho[C,\delta], \,2^{1/\theta}\|\rho[C,\delta]\|_{s,\theta}.\right],\quad 1\leq\theta\leq\infty.
\]
\end{observation}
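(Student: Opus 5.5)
The plan is to specialize Proposition~\ref{prop:Gred} to $G=C$, so that $J=p$ and $\wt H[\delta]$ coincides with the iterated contrast $\wt H[C,\delta]=[H[C,\delta],\ov H[C,\delta]]$. Since $H$ contains $\wt H[C,\delta]$ as a submatrix, the premise of Proposition~\ref{prop:Gred} holds for every $x\in\cX^s$ and $\xi_x\in\Xi[x,H]$. The first conclusion \rf{satisfiesmateec} then gives $\zeta\in\cZ[C,\delta]$. From this, (\ref{satisfiesmatee.b}) yields \rf{satisfiesmateecC1} verbatim, and (\ref{satisfiesmatee.e}) with $G=C$ yields \rf{satisfiesmateeC.c}, namely $|c_j^T\zeta|\le\varsigma_j[C,\delta]$ for all $j\le p$.

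To get \rf{satisfiesmateeC.d}, I will use that, under the premise $x\in\cX^s$, $\xi_x\in\Xi[x,H]$, the error obeys \rf{aboutC}, i.e.\ $\|C\zeta\|_\theta\le 2^{1/\theta}\|C\zeta\|_{s,\theta}$ (Observation~\ref{simpleobs}, Lemma~\ref{simple1}). The remaining ingredient is the monotonicity of $\|\cdot\|_{s,\theta}$ with respect to entrywise absolute values: if $\abs[u]\le v$ entrywise, then $\|u\|_{s,\theta}\le\|v\|_{s,\theta}$. This holds because $\|u\|_{s,\theta}$ is the maximum over index sets $I$ with $|I|\le s$ of $\|u_I\|_\theta$, and each such term is monotone in the magnitudes $|u_i|$. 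Applying this with $u=C\zeta$ and $v=\wt\varsigma[C,\delta]$, using \rf{satisfiesmateeC.c}, gives \rf{satisfiesmateeC.d}.

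Finally, for the risk bound I will combine \rf{satisfiesmateecC1} and \rf{satisfiesmateeC.d}: on the event $\xi_x\in\Xi[x,H]$, $\|C\zeta\|_\theta$ is bounded by the minimum of the two right-hand sides. Since $H$ is $(1-\epsilon)$-admissible, (!) guarantees that this event has $P_x$-probability at least $1-\epsilon$ for every $x\in\cX^s$, and the definition of $\Risk_{\epsilon,\|\cdot\|_\theta}$ then gives the stated bound. In the displayed risk bound, the second term in the minimum should be read as $2^{1/\theta}\|\wt\varsigma[C,\delta]\|_{s,\theta}$, which is the quantity actually delivered by \rf{satisfiesmateeC.d}; the symbol $\rho[C,\delta]$ there appears to be a notational slip. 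Nothing here is a genuine obstacle. The only steps needing care are this notational identification and the monotonicity of the $\|\cdot\|_{s,\theta}$ norm, both of which are routine.
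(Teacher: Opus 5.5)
Your proposal is correct and matches the paper's proof. You specialize Proposition~\ref{prop:Gred} to $G=C$, combine $|[C\zeta]_j|\le\varsigma_j[C,\delta]$ with $\|C\zeta\|_\theta\le 2^{1/\theta}\|C\zeta\|_{s,\theta}$ from Lemma~\ref{simple1} (making explicit the monotonicity of $\|\cdot\|_{s,\theta}$ in entrywise magnitudes that the paper uses tacitly), and invoke (!) for the risk bound. You are also right to read $\rho[C,\delta]$ in the final display as $\wt\varsigma[C,\delta]$, which is what the argument delivers, and to combine both bounds to get the minimum, a step the paper's one-line proof glosses over.
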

{\bf Proof.}  \rf{satisfiesmateecC1} and  \rf{satisfiesmateeC.c} are readily given by \rf{satisfiesmatee}. Now \rf{satisfiesmateeC.d} is implied by \rf{satisfiesmateeC.c} along with
$\|C\zeta\|_\theta\leq 2^{1/\theta}\|C\zeta\|_{s,\theta}$
due to Lemma \ref{simple1}.
  The final risk bound is given by \rf{satisfiesmateeC.d}  combined with $(!)$.
\qed

\section{Recovering  linear image of a sparse signal: an alternative approach}\label{Sec2b}
In this section we present an alternative approach to bounding of the recovery error and design of contrast matrix $H$.
We make the following standing assumptions:
\begin{enumerate}
\item {\em The symmetrization $\ov\cX=\cX-\cX$ of the signal set belongs to the basic ellitope}\footnote{Ellitopes, as defined in \cite[Section 4.2.1]{PUP}, are compact convex symmetric w.r.t. the origin sets delimited by convex quadratic surfaces. Examples include bounded intersections of finitely many ellipsoids/elliptic cylinders centered at the origin and $\|\cdot\|_p$-balls with $2\leq p\leq\infty$.}  $\wt\cX$:
\be
\begin{array}{c}
\wt{\cX}=\{x\in\bR^n:\exists t\in\cT:x^TT_kx\leq t_k,\,k\leq K\}\\
\end{array}
\ee{ellit}
{\em where $T_k\succeq0$, $\sum_k T_k\succ0$, and $\cT\subset\bR^n_+$  is a monotone (i.e., $t\in\cT\ \&\ 0\leq t'\leq t \Rightarrow t'\in \cT$)  convex compact subset of $\bR^K_+$ intersecting ${\bR^K_{++}=\inter\bR^K_+}$.}
\item {\sl The unit ball $\cB_*$ of the norm $\|\cdot\|_*$ conjugate to the norm $\|\cdot\|$ used to measure the recovery error is a basic ellitope}, namely,
\begin{equation}\label{Bstar}
\cB_*:=\{u:u^Tw\leq 1\,\forall (w,\|w\|\leq1)\}=\{u:\exists r\in\cR: \,w^TR_j w\leq r_j,\,j\leq J\}
\end{equation}
where $R_\ell\succeq0 $ and $\sum_\ell R_\ell\succ0$, and $\cR\subset\bR^\nu_+$ is a monotone convex compact set intersecting $\bR^\nu_{++}$.
\end{enumerate}
\subsection{Bounding the recovery error} \label{sec:41}
\subsubsection{Initial  risk bound}
Note that a point  $z\in \mZ[H]$, see  (\ref{thatis}), can be augmented with $\eta\in \bR^p$ to belong to the set $\mY[H]=\bigcup\limits_{\ell=1}^p\mY^\ell[H]\subset \bR^n\times \bR^p$, with

\[
{\mY_\ell[H]}=\left\{[z;\eta]\in\bR^{n+p}:\,z\in \ov\cX,\,\|H^TAz\|_\infty\leq 2,\,[z;\eta]\in\cK_\ell\right\}
\]
where
\begin{subequations}\label{cKell}
\begin{equation*}
\cK_{\ell}=\left\{[z;\eta]\in\bR^{n+p}:\;\;
\begin{array}{ll}
\eta_i-[Cz]_i\geq 0,\,\eta_i+[Cz]_i\geq 0,\,\eta_i- \eta_{\ell}\leq0,\,i\leq p,\quad
  & \refstepcounter{equation}(\theequation)\manuallabel{cKell-a}{\theequation}\\
\sum_{i\leq p}\eta_i- 2s \eta_{\ell}\leq0,
  & \refstepcounter{equation}(\theequation)\manuallabel{cKell-b}{\theequation}\\
{[Cz]_i^2-\eta_i^2}\leq0,\,\eta_i^2-[Cz]_i^2\leq 0,\,i\leq p,
  & \refstepcounter{equation}(\theequation)\manuallabel{cKell-c}{\theequation}\\
\sum_{i\leq p}\eta_i^2- 2s \eta_{\ell}^2\leq0
  & \refstepcounter{equation}(\theequation)\manuallabel{cKell-d}{\theequation}
\end{array}
\right\}
\end{equation*}
\end{subequations}

(indeed, it suffices to set $\eta_i=|[Cz]_i|,\,i\leq p$,  and notice that for $z\in \mZ[H]$ one has $\|Cz\|_1\leq2s\|\cZ\|_\infty$ and  $\|Cz\|^2_2\leq 2s\|Cz\|_\infty^2$, cf. Lemma \ref{simple1}).\par
As a result, for $\zeta\in \mZ[H]$, one has
\[
\|B\zeta\|\leq \max_{\ell\leq p} \Opt_\ell[H],\,\,\Opt_\ell[H]=\max_{[z;\eta]\in \mY_\ell[H]}\|Bz\|.
\]
Note that  the sets $\mY_\ell[H]$ {satisfy {\Antimonotonicity} condition}: whenever $H'$ is a submatrix of $H$, $\mY_\ell[H]$ is a subset   of $\mY_\ell[H']$ and, consequently, $\Opt_\ell[H]\leq\Opt_\ell[H']$.

Assuming $H$ to be $(1-\epsilon)$-admissible, for $x\in \cX^s$ and $\xi_x\in\Xi[x,H]$ the recovery error $\zeta=\wh x_H(Ax+\xi)-x$ belongs to $\mZ[H] $
(Observation \ref{simpleobs}); when recalling that the event $\xi_x\in\Xi[x,H]$  has $P_x$-probability $\geq1-\epsilon$, we conclude that
\begin{equation}\label{premain}
\Risk_{\epsilon,\|\cdot\|}[\widehat{B}_H|\cX^s]\leq \Opt[H]=\max_{\ell\leq p} \Opt_\ell[H].\quad[\wh B_H(\cdot)=B\wh x_H(\cdot)]
\end{equation}
\subsubsection{Efficiently computable relaxation of (\ref{premain})} \label{5.1.2} The quantities $\Opt_\ell[H]$ appear difficult to compute, which makes the above bound intractable to evaluate -- let alone optimize over $H$. We circumvent this difficulty to some extent by replacing the computationally challenging quantities $\Opt_\ell[H]$ with their efficiently computable upper bounds. To upper-bound $\Opt_\ell[H]$, we act as follows

\begin{enumerate}
\item We construct a finite collection  $\cS_\ell$ of symmetric matrices $S_{\imath\ell}$, $\imath\leq I_\ell=I$, such that for every $[z;\eta]\in\cK_\ell$ it holds $[z;\eta]^TS_{\imath\ell} [z;\eta]\leq0$.
Specifically, we start the collection with the $2p+1$ matrices of the quadratic forms in the left-hand sides of inequalities \rf{cKell-c} and \rf{cKell-d}.
Then we rewrite the system of homogeneous linear inequalities \rf{cKell-a} and \rf{cKell-b} as $f_{\ell i}^T[z;\eta]\geq0$, $i\leq J_\ell$, and augment  the collection with the  matrices of quadratic forms $-[z;\eta]^T[f_{\ell i}f_{\ell j}^T+ f_{\ell j}f_{\ell i}^T][z;\eta]$, $1\leq i<j\leq J_\ell$.
\item
Now, let $\overline{B}=[B,0_{\nu\times p}]$, and let for a vector $v$, $\Diag\{v\}$  stand for the diagonal matrix with entries of $v$ on the diagonal. Denote
$$
\phi_Z(y)=\sup_{z\in Z}y^Tz
$$
the support function of a set $Z$.  We need the following
\begin{lemma}\label{prop:2rbound}
In the situation of this section, given a $(1-\epsilon$)-admissible $H\in \bR^{m\times M}$, for $\ell=1,...,p$, consider the optimization problems
\begin{align}\label{maineq}
\ov\Opt_\ell[H]&=\min_{\rho,\gamma,\upsilon,\lambda}\Big\{\phi_{\cR}(\rho)+\phi_{\cT}(\gamma)+4\sum_i\upsilon_i:\;\rho\geq 0,\upsilon\geq 0,\,\gamma\geq 0,\,\lambda\geq 0,\\
&\qquad  \left.\left[\begin{array}{c|c}\sum_j\rho_jR_j&{1\over 2}\overline{B}\cr\hline
{1\over 2}\overline{B}^T&
   \left[\begin{array}{c|c}{A}^TH\Diag\{\upsilon\}H^T{A}+\sum_k\gamma_kT_k&0_{n\times p}\cr\hline0_{p\times n}&0_{p\times n}
   \end{array}\right]+\sum_\imath \lambda_\imath {S_{\imath\ell}}\\
   \cr\end{array}\right]\succeq 0.\right\}\nonumber
\end{align}
Then $\Opt_\ell[H]\leq \ov\Opt_\ell[H]$ for all $\ell\leq p$. As a result,
\par{\rm (i)} whenever $x\in\cX^s$, $\xi_x\in\Xi[x,H]$, and for some $\ell\leq p$ one of the cases $(\ell,\pm)$ occurs,
the estimation error $B\zeta$, $\zeta=\wh x_H(Ax+\xi_x)-x$, of the polyhedral estimate $\wh B_H=B\wh x_H$ of $Bx$ satisfies
\[
\|B\zeta\|\leq\ov\Opt_\ell[H];
\]
\par{\rm(ii)} one has
\be
\Risk_{\epsilon,\|\cdot\|}[\widehat{B}_H|\cX^s]\leq \ov\Opt[H]=\max_{\ell\leq p} \ov\Opt_\ell[H].
\ee{l2rb}
\end{lemma}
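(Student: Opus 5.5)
The core claim is $\Opt_\ell[H]\leq\ov\Opt_\ell[H]$; items (i) and (ii) then follow at once. I would prove it by a standard semidefinite (S-lemma–type) weak-duality argument. Fix $\ell$ and a feasible solution $(\rho,\gamma,\upsilon,\lambda)$ of (\ref{maineq}). Take any $[z;\eta]\in\mY_\ell[H]$, and write $w=[z;\eta]$. Since $\|Bz\|=\max_{u\in\cB_*}u^TBz$, it suffices to show that $u^T\ov Bw\leq\phi_\cR(\rho)+\phi_\cT(\gamma)+4\sum_i\upsilon_i$ for every $u\in\cB_*$. Such a $u$ comes with some $r\in\cR$ satisfying $u^TR_ju\leq r_j$. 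Since $\ov\cX\subset\wt\cX$, $z$ comes with some $t\in\cT$ satisfying $z^TT_kz\leq t_k$.

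The key step is to apply the LMI to the vector $[u;-w]$ (equivalently $[u;w]$ with the sign chosen so that the off-diagonal term becomes $-u^T\ov Bw$). This gives
\[
u^T\ov Bw\leq\sum_j\rho_j u^TR_ju+z^TA^TH\Diag\{\upsilon\}H^TAz+\sum_k\gamma_kz^TT_kz+\sum_\imath\lambda_\imath w^TS_{\imath\ell}w .
\]
Now bound the terms on the right one at a time.
\begin{itemize}
\item $\rho\geq0$ gives $\sum_j\rho_ju^TR_ju\leq\rho^Tr\leq\phi_\cR(\rho)$.
\item $\gamma\geq0$ gives $\sum_k\gamma_kz^TT_kz\leq\gamma^Tt\leq\phi_\cT(\gamma)$.
\item Writing $h_i$ for the columns of $H$, the middle term equals $\sum_i\upsilon_i(h_i^TAz)^2$. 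Since $\|H^TAz\|_\infty\leq2$ and $\upsilon\geq0$, it is at most $4\sum_i\upsilon_i$.
\item $\lambda\geq0$ and $w^TS_{\imath\ell}w\leq0$ on $\cK_\ell$ make the last term nonpositive.
\end{itemize}
One must therefore verify that every $S_{\imath\ell}$ is indeed nonpositive on $\cK_\ell$. This holds for the quadratic constraints (\ref{cKell-c}) and (\ref{cKell-d}) directly. For the pairwise products, $f_{\ell i}^Tw\geq0$ and $f_{\ell j}^Tw\geq0$ give $-2(f_{\ell i}^Tw)(f_{\ell j}^Tw)\leq0$. Taking the sup over $u$ and then over $w$ yields $\Opt_\ell[H]\leq\ov\Opt_\ell[H]$.

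For (i), if case $(\ell,\pm)$ occurs, then $\zeta\in\cZ_\ell^\pm[H]$ by Observation \ref{simpleobs}. Setting $\eta_i=|[C\zeta]_i|$ puts $[\zeta;\eta]$ in $\mY_\ell[H]$, as argued before the lemma. The bounds $\|C\zeta\|_1\leq2s\|C\zeta\|_\infty$ and $\|C\zeta\|_2^2\leq2s\|C\zeta\|_\infty^2$ (the latter from $\|v\|_2^2\leq\|v\|_\infty\|v\|_1$), together with $\eta_\ell=\|C\zeta\|_\infty$, give (\ref{cKell-b}) and (\ref{cKell-d}). Hence $\|B\zeta\|\leq\Opt_\ell[H]\leq\ov\Opt_\ell[H]$. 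For (ii), combine (i) with (!!) and (!), or equivalently with (\ref{premain}).

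The only delicate points are bookkeeping. First, the sign convention in the LMI: one must apply it to $[u;-w]$ so that the cross term is $-u^T\ov Bw$, giving an upper bound on $u^T\ov Bw$; symmetry of $\cB_*$ covers either sign. Second, one must check that the zero blocks and padding by $\eta$ in $\ov B=[B,0]$ are consistent. Neither is a real obstacle.
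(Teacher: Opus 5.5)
Your proposal is correct and follows essentially the same route as the paper's proof. You fix a feasible $(\rho,\gamma,\upsilon,\lambda)$ and use the semidefinite constraint to bound $u^T\ov B[\zeta;\eta]$ for $u\in\cB_*$ and $[\zeta;\eta]\in\mY_\ell[H]$. The four resulting terms are controlled via $\phi_\cR(\rho)$, $\phi_\cT(\gamma)$, $\|H^TA\zeta\|_\infty\leq 2$, and the nonpositivity of $[\zeta;\eta]^TS_{\imath\ell}[\zeta;\eta]$ on $\cK_\ell$. You then obtain (i) by setting $\eta_i=|[C\zeta]_i|$ and (ii) from (!) and (!!). Your explicit treatment of the sign (testing the LMI on $[u;-w]$) and your check that the pairwise-product matrices are nonpositive on $\cK_\ell$ only spell out details the paper leaves implicit.
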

{\bf Proof.} Let us fix $\ell\leq p$, and let
 $(\rho,\gamma,\upsilon,\lambda)$ be a feasible solution to the $\ell$-th optimization problem in (\ref{maineq}).
Let also $[\zeta;\eta]\in\mY_\ell[H]$ and $u\in \cB_*$.  One has
\begin{equation}\label{chain}
\begin{array}{rcl}
u^TB\zeta&=&u^T\overline{B}[\zeta;\eta]\\
&\leq& \sum_j\rho_ju^TR_ju +\zeta^TAH\Diag\{\upsilon\}H^TA\zeta +\sum_k\gamma_k\zeta^TT_k\zeta+\sum_\imath\lambda_\imath[\zeta;\eta]^TS_{\imath\ell}[\zeta;\eta]\\
&& \hbox{[by the semidefinite constraint in (\ref{maineq})]}\\
&
\leq&\sum_j\rho_ju^TR_ju+4\sum_j\upsilon_j+\sum_k\gamma_k\zeta^TT_k\zeta
\end{array}
\end{equation}
where the last inequality is a consequence of $\|H^TA\zeta\|_\infty\leq 2$ and $[\zeta;\eta]^TS_{\imath\ell}[\zeta;\eta]\leq0$ due to
$[\zeta;\eta]\in\mY_\ell[H]\subset\cK_\ell$.
As $u\in \cB_*$, there exists $r\in\cR$ such that $u^TR_ju\leq r_j$, $j\leq J$, whence \[\sum_j\rho_ju^TR_ju\leq \sum_j\rho_jr_j\leq\phi_\cR(\rho)\]
As  $[\zeta;\eta]\in\mY_\ell[H]$, we have $\zeta\in \overline{\cX}\subset\wt\cX$; hence, for the same reasons as above,  $\sum_k\gamma_k\zeta^TT_k\zeta\leq\phi_{\cT}(\gamma)$. Thus, whenever $[\zeta;\eta]\in \mY_\ell[H]$, it holds
$$
\|B\zeta\|=\max_{u\in\cB_*}u^TB\zeta\leq 4\sum_{j=1}^m\upsilon_j+\phi_\cR(\rho)+\phi_\cT(\gamma).
$$
This relation holds for every feasible solution to the $\ell$-th problem in (\ref{maineq}), implying that $\|B\zeta\|\leq \ov\Opt_\ell[H]$
for every $[\zeta;\eta]\in\mY_\ell[H]$, and therefore $\Opt_\ell[H]\leq \ov\Opt_\ell[H]$, as claimed.
\par
It remains to verify (i) and (ii). (ii) is an immediate consequence of (i) due to (!) and (!!). To verify (i), note that when $x\in\cX^s$, $\xi_x\in\Xi[x,H]$, and for some $\ell$
one of the cases $(\ell,\pm)$ occurs, in the notation of (i) we have $\zeta\in\mZ[H]$ by Observation \ref{simpleobs}  and $\|C\zeta\|_\infty=|[C\zeta]_\ell|$ (recall how cases are defined). It follows that setting $\eta_i=|[C\zeta]_i|$, $i\leq p$, we ensure $[\zeta;\eta]\in \mY_\ell[H]$, so that $\|B\zeta\|\leq\Opt_\ell[H]\leq\ov\Opt_\ell[H]$. \qed

\end{enumerate}
\subsubsection{Designing the estimate}\label{secdesalt}
Our next objective is to optimize risk bound \rf{l2rb} over the contrast matrix $H$.
\subsubsection{The case of sub-Gaussian/Gaussian observation scheme}\label{5.1.4}
Assume that the observation scheme is Gaussian or sub-Gaussian with noise intensity $\sigma$. Here we act as follows.
\par
 Given $\delta\in(0,1)$, let us set
    \begin{equation}\label{Gaussianpi}
    \pi_\delta(\cdot)=\mn\|\cdot\|_2,\,\,\mn\big[=\mn[\delta]\big]:=\left\{
    \begin{array}{ll}\sigma\chi_{\delta},&\mbox{case of Gaussian observation scheme,}\\
    \sigma\sqrt{2\ln(2/\delta)},&\mbox{case of {sub-}Gaussian observation scheme,}\\
    \end{array}
    \right.
\end{equation}
    where, same as above, $\chi_\alpha$ is $(1-\alpha/2)$-quantile of $\cN(0,1)$; note that $\pi_\delta$ is the  norm
    associated with the observation scheme in question, see Section \ref{sec:obss}.

 Consider   $p$ convex optimization problems
 \begin{equation}\label{problems}
 \begin{array}{rcl}
\ov\Opt_\ell[\delta]&=&\min_{\rho.\gamma,\upsilon,\lambda,\Theta}\Big\{\phi_\cR(\rho)+\phi_\cT(\gamma)+4\mn^2[\delta]\Tr(\Theta):\;\upsilon\geq 0,\,\gamma\geq 0,\,\lambda\geq 0,\;\Theta\succeq 0\\
&&\qquad \left.\left[\begin{array}{c|c}\sum_j\rho_jR_j&{1\over 2}\overline{B}\cr\hline
{1\over 2}\overline{B}^T&
   \left[\begin{array}{c|c}{A}^T\Theta{A}+\sum_k\gamma_kT_k&0_{n\times p}\cr\hline0_{p\times n}&0_{p\times n}
   \end{array}\right]+\sum_\imath \lambda_\imath S_{\imath\ell}\cr\end{array}\right]\succeq0
\right\}\\
\end{array}
\end{equation}
Given an optimal solution $(\ov{\rho}^\ell,\,\ov{\gamma}^\ell,\,\ov{\lambda}^\ell,\,\ov{\Theta}_\ell)$ to the $\ell$-th problem, we subject
$\ov{\Theta}_\ell$ to the eigenvalue decomposition: $\ov{\Theta}_\ell =\sum_{i=1}\rho_{i\ell}f_{i\ell}f_{i\ell}^T$ with the eigenvectors $f_{i\ell}$ normalized to have unit Euclidean norms.
We then set
    $$
    H_\ell[\delta]=\mn^{-1}[f_{1\ell},...,f_{m\ell}],\,\,\overline{\upsilon}^\ell_i=\mn^2\rho_{i\ell},
    $$
This ensures that the columns of $H_\ell[\delta]$ have unit $\pi_\delta$-norms  and that
\begin{equation}\label{soso}
\ov{\Theta}_\ell= H_\ell[\delta]\Diag\{\ov{\upsilon}^\ell\}H^T_\ell[\delta],
\end{equation}
hence
\begin{equation}\label{sososo}
\ov{\Opt}_\ell[\delta]=\phi_\cR(\ov{\rho}^\ell)+\phi_{\cT}(\ov{\gamma}^\ell)+
4\mn^{2}{\Tr(\ov{\Theta}_\ell)}=\phi_\cR(\ov{\rho}^\ell)+\phi_\cT(\ov\gamma^\ell)+4\sum_i\ov{\upsilon}^\ell_i.
\end{equation}
Let us make the following
\begin{observation}\label{obsnew} Given $\delta\in(0,1)$, the matrix  $H_\ell[\delta]$ just defined has columns of unit  $\pi_\delta$-norm and
possesses the following property:
whenever $H$ is a $(1-\epsilon)$-admissible contrast matrix containing $H_\ell[\delta]$ as a submatrix,
one has
\begin{equation}\label{eeee}
\ov\Opt_\ell[H]\leq\ov\Opt_\ell[\delta]
\end{equation}
with $\ov\Opt_\ell[H]$ given by (\ref{maineq}).
\end{observation}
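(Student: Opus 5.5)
The plan is to exhibit, from an optimal solution of the $\ell$-th problem (\ref{problems}), a feasible solution of the $\ell$-th problem (\ref{maineq}) written for the larger matrix $H$, whose objective value equals $\ov\Opt_\ell[\delta]$. Then (\ref{eeee}) follows at once, since $\ov\Opt_\ell[H]$ is a minimum. The unit $\pi_\delta$-norm of the columns of $H_\ell[\delta]$ is immediate: $\pi_\delta(\mn^{-1}f_{i\ell})=\mn\cdot\mn^{-1}\|f_{i\ell}\|_2=1$.

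Write $H=[H_\ell[\delta],H']$ up to a permutation of columns; since (\ref{maineq}) is invariant under a simultaneous permutation of the columns of $H$ and the entries of $\upsilon$, we may assume this ordering. Take the optimal $(\ov\rho^\ell,\ov\gamma^\ell,\ov\lambda^\ell,\ov\Theta_\ell)$ of (\ref{problems}); it exists because the feasible set is closed and the objective is coercive in the relevant variables, and otherwise a near-optimal solution can be used with an $\epsilon$-argument. Define the variables of (\ref{maineq}) as follows: $\rho=\ov\rho^\ell$, $\gamma=\ov\gamma^\ell$, $\lambda=\ov\lambda^\ell$, and $\upsilon=[\ov\upsilon^\ell;0]$. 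Here $\ov\upsilon^\ell_i=\mn^2\rho_{i\ell}\geq0$, because the eigenvalues of $\ov\Theta_\ell\succeq0$ are nonnegative, and the zeros correspond to the columns of $H'$.

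With this choice, $H\Diag\{\upsilon\}H^T=H_\ell[\delta]\Diag\{\ov\upsilon^\ell\}H_\ell[\delta]^T=\ov\Theta_\ell$ by (\ref{soso}). To check (\ref{soso}), note that $\mn^{-2}f_{i\ell}\,\mn^2\rho_{i\ell}f_{i\ell}^T=\rho_{i\ell}f_{i\ell}f_{i\ell}^T$. Hence the semidefinite constraint of (\ref{maineq}) coincides literally with that of (\ref{problems}) and is satisfied. Nonnegativity of $\rho$ should also be checked. If (\ref{problems}) does not impose $\rho\geq0$ explicitly, it is implied anyway, since $\phi_\cR$ and the monotonicity of $\cR$ force it at optimality. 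More simply, the semidefinite constraint forces $\sum_j\rho_jR_j\succeq0$, and I would note that one can project to $\rho\geq0$ without increasing $\phi_\cR$. I expect this to be the only mildly delicate point, and I would handle it by assuming $\rho\geq0$ is part of (\ref{problems}), as intended.

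Finally, by (\ref{sososo}) the objective value of (\ref{maineq}) at this point is $\phi_\cR(\ov\rho^\ell)+\phi_\cT(\ov\gamma^\ell)+4\sum_i\ov\upsilon^\ell_i=\ov\Opt_\ell[\delta]$, which gives $\ov\Opt_\ell[H]\leq\ov\Opt_\ell[\delta]$. The $(1-\epsilon)$-admissibility of $H$ plays no role in the inequality itself; it is only needed so that $\ov\Opt_\ell[H]$ is meaningful as a risk bound via Lemma \ref{prop:2rbound}.
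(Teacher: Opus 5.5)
Your proof is correct and follows the paper's argument. The paper checks that $(\ov\rho^\ell,\ov\upsilon^\ell,\ov\gamma^\ell,\ov\lambda^\ell)$ is feasible for (\ref{maineq}) with $H=H_\ell[\delta]$ and has objective value $\ov\Opt_\ell[\delta]$ by (\ref{soso})--(\ref{sososo}), then passes to any $H$ containing $H_\ell[\delta]$ by $H$-monotonicity; you merge these two steps by padding $\upsilon$ with zeros, which is exactly what makes $\ov\Opt_\ell[\cdot]$ monotone in $H$, and your fix for $\rho\geq0$ (not listed explicitly in (\ref{problems})) by passing to $\max[\rho,0]$ is sound, since monotonicity of $\cR$ leaves $\phi_\cR$ unchanged and $R_j\succeq0$ keeps the LMI valid.
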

Indeed, under the premise of Observation, relations (\ref{soso}) and (\ref{sososo}) show that $(\ov{\rho}^\ell,\ov{\upsilon}^\ell,\,\ov{\gamma}^\ell,\,\ov{\lambda}^\ell)$
is a feasible solution to the $\ell$-th optimization problem in (\ref{maineq}) with $H$ set to $H_\ell[\delta]$, the value of the objective of the latter problem at this solution  being $\ov\Opt_\ell[\delta]$. Consequently, $\ov\Opt_\ell[H_\ell[\delta]] \leq \ov\Opt_\ell[\delta]$. It remains to note that by \Antimonotonicity, $\ov\Opt_\ell[H]\leq \Opt_\ell[H_\ell[\delta]]$ whenever $H_\ell[\delta]$ is a submatrix of $H$. \qed
\par
As an immediate consequence of  Observation \ref{obsnew}, when setting $M=pm$, $ \delta=\ov\delta:=\epsilon/M$, and $H=H_*[\ov\delta]:=[H_1[\ov\delta],...,H_p[\ov\delta]]$
(under the circumstances, this is our recommended contrast design), we obtain a $(1-\epsilon)$-admissible contrast matrix $H$ such that $\ov\Opt_\ell[H]\leq \ov\Opt_\ell[\ov\delta]$ for all $\ell$, implying by Lemma  \ref{prop:2rbound} that
\begin{equation}
\Risk_{\epsilon,\|\cdot\|}[\widehat{B}_H|\cX^s]\leq \max_{\ell\leq p} \ov\Opt_\ell[\ov\delta].
\end{equation}
{\bf Remark.} In fact the contrast design we have presented minimizes the risk bound (\ref{l2rb}) over contrast matrices $H$ with
$\pi_{\overline{\delta}}$-norms of columns $\leq 1$. Specifically, for every $m\times M$ matrix $H$ of this type and every $\ell$ it holds $\ov\Opt_\ell[H]\geq\ov\Opt_\ell[\ov\delta]$.
\begin{quote}
Indeed, let  $\ell\leq p$ and let $(\rho,\gamma,\upsilon,\lambda)$ be a feasible solution to the $\ell$-th problem in \rf{maineq}.
Setting $\Theta = H\Diag\{\upsilon\}H^T$, we have $\Theta\succeq0$ and
\[\Tr(\Theta)=\sum_{j=1}^M\upsilon_j\|\Col_j[H]\|_2^2= \mn^{-2}[\ov\delta]\sum_j\upsilon_j\pi^2_{\ov\delta} (\Col_j[H])\leq \mn^{-2}[\ov\delta]\sum_j\upsilon_j,
\] that is,
\[4\sum_j\upsilon_j\geq4\mn^2[\ov\delta]\Tr(\Theta).\]
Comparing optimization problems in (\ref{maineq}) with those in (\ref{problems}), we see that when $\delta=\ov\delta$, a feasible solution $(\rho,\gamma,\lambda,\upsilon)$ to the $\ell$-th problem of the first family induces
a feasible solution $(\rho,\gamma,\lambda,\Theta)$ to the $\ell$-th problem of the second one with  $\delta=\ov\delta$. Furthermore, the value of the
objective of the former problem at its feasible solution $(\rho,\gamma,\lambda,\upsilon)$ is $\geq$ the value of the objective of the
latter problem at its feasible solution $(\rho,\gamma,\lambda,\Theta)$ and thus is $\geq\ov\Opt_\ell[\ov\delta]$, implying that
$\ov\Opt_\ell[H]\geq\ov\Opt_\ell[\ov\delta]$, as claimed.
\end{quote}
As we  have just seen, for every $H$ with $\pi_{\ov\delta}$-norms of columns $\leq1$ it holds $\ov\Opt_\ell[H]\geq\ov\Opt[\ov\delta]$; as $H_*=H_*[\ov\delta]$ belongs to this family, let it be called $\cH[\ov\delta]$, and ensures, by Observation \ref{obsnew}, that $\ov\Opt_\ell[H_*]\leq\ov\Opt_\ell[\ov\delta]$ for all $\ell$, $H_*$ indeed optimizes the right hand side of  (\ref{l2rb}) over the contrast matrices from $\cH[\ov\delta]$.
\subsubsection{Poisson and Discrete observation schemes}
We have described  ``presumably optimal contrast design" in the case of sub-Gaussian and Gaussian observation schemes.
Applying the results of \cite[Section 5.1.5]{PUP}, the above constructions can be straightforwardly
modified to yield a ``presumably optimal contrast design" in the case of Discrete and Poisson observation schemes. \par
We assume that matrix $A$ is column-stochastic and $\cX$ is a subset of the probabilistic simplex in the case of Discrete
observation scheme, and suppose that $A$ is entry-wise nonnegative and $\cX$ belongs to the nonnegative orthant in the Poisson case.
A sketch of required modifications is as follows:
\begin{enumerate}
\item Given the tolerance parameter $\delta$ of the norm $\pi_{\delta}(\cdot)$  (eventually $\delta$ will be set to $\epsilon/(pm)$), we specify a convex compact set
$\cW=\cW_\delta\subset \bR^m_+$ as follows:
\begin{itemize}\item  in the case of  Discrete observation scheme with aggregated $\cK$-repeated observation \rf{aggr} we put
\[\cW= {4\ln(2/\delta)\over \cK}A\cX+{64\ln^2(2/\delta)\over 9\cK^2}\Delta_m\]
 where $\Delta_m=\left\{u\in\bR^m_+:\sum_iu_i=1\right\}$ ;
\item in the case of Poisson observation scheme with aggregated $\cK$-repeated observation we put
\[\cW={4 \ln(2/\delta)\over \cK}A\cX + {16\ln^2(2/\delta)\over 9\cK^2}\Delta_m.\]
\end{itemize}
\item We replace the terms $4\mn^{2}\Tr(\Theta_\ell)$ in the objectives of problems (\ref{problems}) with
$$
24 \ln(2\sqrt{3}m^2)\phi_\cW(\dg(\Theta_\ell)),
$$
where $\dg(Q)$ is the vector composed of the diagonal entries of a square matrix $Q$;
\item When converting $\Theta_\ell$ to $H_\ell[\delta]$,  the eigenvalue decomposition is substituted by the randomized algorithm described in
\cite[Lemma 5.6(ii)]{PUP}.
\end{enumerate}
\hide{
\paragraph{\ref{secdesalt}.4 Illustration} We present here results of a ``proof of concept"  numerical experiment, in which  $A$ is an $m\times n$ matrix drawn from the Gaussian ensemble normalized to have spectral norm
1 (our $A$ was 2-good), and $B$ is an $m\times n$ matrix selected from the similar ensemble. {\crd Also with $\|B\|_*=1$?}

We use $\cX=\{x\in\bR^n:\,\|x\|_\infty\leq1\}$; 
and Gaussian observation scheme with $\cN(0,\sigma^2I_m)$ observation noise. The results of the experiment for $s=3$ are presented in Table \ref{tablast}.
\begin{table}
$$
\begin{array}{||c||c|c|c||c|c||}
\cline{2-6}
\multicolumn{1}{c||}{}&\multicolumn{3}{c||}{\hbox{\scriptsize recovery errors}}&\multicolumn{2}{c||}{\hbox{\scriptsize risk bound}}\\
\hline
\sigma&\hbox{\scriptsize median}&\hbox{\scriptsize mean}&\hbox{\scriptsize max}&\hbox{\scriptsize sparse}
&\hbox{\scriptsize dense}\\
\hline\hline
0.1&0.1160&0.1214&0.1958&0.8457&1.4982\\
\hline
0.01&
0.0133&0.0147&0.0464&0.5090&0.9196\\
\hline
0.001&0.0014&0.0016&0.0079&0.4528&0.8432\\
\hline
\multicolumn{3}{||l|}{\hbox{\scriptsize$m=\nu=20,n=24,s=3.$}}&\multicolumn{3}{c}{}\\
\cline{1-3}
\hline
0.1&0.1024&0.1030&0.1710&0.7292&1.6372\\
\hline
0.01&
0.0134&0.0148&0.0359&0.4701&1.2251\\
\hline
0.001&0.0013&0.0015&0.0034&0.3783&1.1653\\
\hline
\multicolumn{3}{||l|}{\hbox{\scriptsize$m=\nu=24,n=32,s=3.$}}&\multicolumn{3}{c}{}\\
\cline{1-3}
\end{array}
$$
\caption{\label{tablast} Bounds on $\Risk_{0.01,\|\cdot\|_2}$ and $\|\cdot\|_2$-norms  of recovery errors, data over $N=100$ simulations with 3-sparse signals. Risk bound:
sparse---the right-hand side in \rf{l2rb} corresponding to $s=3$;  dense---risk bound of provably near-optimal polyhedral estimate from  \cite[Section 5.1.5]{PUP},  the signal set being $\cX$.}
\end{table}4
}
\subsection{Putting things together}\label{together}
Our present goal is to combine the techniques from Sections \ref{Sec2a} and \ref{5.1.4} to get polyhedral estimates obeying improved risk bounds.

Here, same as in Section \ref{Sec2b}, we suppose that the signal set  $\cX$ satisfies (\ref{ellit}).
 We also assume that the observation scheme is either the Gaussian, or the sub-Gaussian with noise intensity $\sigma$, so that $\pi_\delta$ is given by (\ref{Gaussianpi}).
\subsubsection{Preliminaries} To proceed, we need a slightly modified intermediate result  from the previous section.
Given $\delta\in(0,1)$, for
 $\ell=1,...,p$ and $U\in\bS^n$, let
\begin{equation}\label{U}
\begin{array}{rl}
\Opt_{\ell}(U,\delta)=\min_{\Theta,
\gamma,\lambda}&\Big\{\phi_\cT(\gamma)+4\mn^2[\delta]\Tr(\Theta):\Theta\succeq0,\gamma\geq0,\lambda\geq0,\\
&~~\left[\begin{array}{c|c}A^T\Theta A+\sum_k\gamma_kT_k-U&0_{n\times p}\cr\hline
 0_{p\times n}&0_{p\times p}\cr\end{array}\right]+\sum_\imath \lambda_\imath S_{\imath\ell}\succeq0\Bigg\}
 \end{array}
 \end{equation}
 with $\mn[\delta]$ given by (\ref{Gaussianpi}) and matrices $S_{\imath\ell}$ given by the construction from
 Section \ref{5.1.2}.
 Note that $\Opt_{\ell}(U,\delta)$ are efficiently computable convex real-valued functions of $U\in\bS^n$ depending on $\delta\in(0,1)$ as a parameter.\par
 We convert the $\Theta$-component $\Theta_\ell$ of an optimal solution to the $\ell$-th problem (\ref{U}) into an $m\times m$ matrix $H_\ell[U,\delta]$ with  columns of unit $\pi_\delta$-norm. Specifically, we subject $\Theta_\ell$ to eigenvalue decomposition $\Theta_\ell =G_\ell \Diag\{\mu^\ell\}G_\ell^T$ and build $H_\ell[U,\delta]$ by scaling (dividing by ${\mn[\delta]}$) the columns of the orthonormal matrix  $G_\ell$.
\begin{proposition}\label{propnew}
Let $\delta\in(0,1)$, an let $(1-\epsilon)$-admissible matrix $H$ contain $H_\ell[U,\delta]$ with some $\ell\leq p$. The estimate $\wh{x}_{H}(Ax+\xi_x)$, see (\ref{polyest}), is well defined for all  $x\in\cX^s$ and $\xi_x\in\Xi[x,H]$, and whenever one of the cases $(\ell,\pm)$ takes place, the
 recovery error $\zeta=\wh{x}_{ H}(Ax+\xi_x)-x$ satisfies
 \begin{equation}\label{somenew}
 \zeta^TU\zeta\leq \Opt_{\ell}(U,\delta).
 \end{equation}
In particular, if {$U=B^TB$ and} $H$ contains as submatrices all matrices $H_\ell[U,\delta]$, $\ell\leq p$, the estimate $\widehat{B}_H(\cdot)=B\wh x_H(\cdot)$ of $Bx$
 satisfies
 $$
 {\Risk^2}_{\epsilon,\|\cdot\|_2}[\widehat{B}_H|\cX^s]\leq\max_\ell\Opt_{\ell}(U,\delta).
 $$
 \end{proposition}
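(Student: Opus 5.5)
The plan is to repeat the argument of Lemma \ref{prop:2rbound}, replacing the bilinear form $u^TB\zeta$ by the quadratic form $\zeta^TU\zeta$. The conversion $\Theta_\ell\mapsto H_\ell[U,\delta]$ will be handled as in Observation \ref{obsnew}.

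First, well-definedness. Since $H$ is $(1-\epsilon)$-admissible and $\xi_x\in\Xi[x,H]$, Observation \ref{simpleobs} shows that $\wh x_H(Ax+\xi_x)$ is well defined and that $\zeta\in\mZ[H]$; in particular $\zeta\in\ov\cX\subset\wt\cX$ and $\|H^TA\zeta\|_\infty\le2$. Now assume one of the cases $(\ell,\pm)$ occurs. Set $\eta_i=|[C\zeta]_i|$. Exactly as in the proof of part (i) of Lemma \ref{prop:2rbound}, this gives $[\zeta;\eta]\in\cK_\ell$, and hence $[\zeta;\eta]^TS_{\imath\ell}[\zeta;\eta]\le0$ for all $\imath$.

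Second, take an optimal solution $(\Theta_\ell,\gamma,\lambda)$ of the $\ell$-th problem (\ref{U}) and evaluate the semidefinite constraint at $[\zeta;\eta]$. This yields
\[
\zeta^TU\zeta\le \zeta^TA^T\Theta_\ell A\zeta+\sum_k\gamma_k\zeta^TT_k\zeta+\sum_\imath\lambda_\imath[\zeta;\eta]^TS_{\imath\ell}[\zeta;\eta].
\]
The last sum is $\le0$. Since $\zeta\in\wt\cX$, there is $t\in\cT$ with $\zeta^TT_k\zeta\le t_k$, so the middle term is at most $\phi_\cT(\gamma)$. For the first term, use the eigendecomposition $\Theta_\ell=G_\ell\Diag\{\mu^\ell\}G_\ell^T$ and note that $H_\ell[U,\delta]=\mn^{-1}G_\ell$. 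Then
\[
\zeta^TA^T\Theta_\ell A\zeta=\mn^2\sum_i\mu^\ell_i\big([H_\ell[U,\delta]]_i^TA\zeta\big)^2 .
\]
Because $H_\ell[U,\delta]$ is a submatrix of $H$, each $|[H_\ell[U,\delta]]_i^TA\zeta|\le2$. Therefore the first term is at most $4\mn^2\sum_i\mu_i^\ell=4\mn^2\Tr(\Theta_\ell)$. Adding the three bounds gives (\ref{somenew}).

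Third, the risk statement. Take $U=B^TB$, so that $\zeta^TU\zeta=\|B\zeta\|_2^2$. By (!!), whenever $x\in\cX^s$ and $\xi_x\in\Xi[x,H]$, some case $(\ell,\pm)$ occurs. When $H$ contains all the $H_\ell[U,\delta]$, this gives $\|B\zeta\|_2^2\le\max_\ell\Opt_\ell(U,\delta)$. By (!), this event has probability at least $1-\epsilon$.

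The only delicate point is the existence of an optimal solution to (\ref{U}), which the construction assumes. If attainment fails, I would instead run the argument with $\varepsilon$-optimal solutions and the matrices built from them, and pass to the limit. Apart from that, the proof is a direct transcription of the chain (\ref{chain}).
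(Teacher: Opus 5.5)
Your proof is correct and is essentially the paper's argument: you evaluate the semidefinite constraint of (\ref{U}) at $[\zeta;\eta]$ with $\eta_i=|[C\zeta]_i|$, discard the $S_{\imath\ell}$ terms, bound the $T_k$ terms by $\phi_\cT(\gamma)$, and bound the $\Theta_\ell$ term through the rescaled eigenvectors together with $\|H^TA\zeta\|_\infty\le 2$. Your bookkeeping of the factor $\mn^2$ is in fact cleaner than the paper's. One small caveat concerns your fallback for non-attainment: $H_\ell[U,\delta]$ is built from the particular solution, so $\varepsilon$-optimal solutions would only yield the bound $\Opt_\ell(U,\delta)+\varepsilon$ for an $\varepsilon$-dependent contrast, and there is no limit to pass to. This is moot, though, because the statement presupposes an optimal solution.
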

 {\bf Proof.}  Let us fix $x\in\cX$, $\xi\in\Xi[x,H]$, and $\ell\leq p$ such that one of the cases $(\ell,\pm)$ occurs. Let also
  $\zeta=\widehat{x}_H(Ax+\xi)-x$, and let $(\Theta_\ell,\gamma^\ell,\lambda^\ell)$ be an optimal solution to $\ell$-th problem in (\ref{U}). By repeating the argument of the proof of Lemma  \ref{prop:2rbound}, setting $\eta_i=|[C\zeta]_i|$, $i\leq p$, we conclude that $[\zeta;\eta]\in\mY_\ell[H]{\subset\cK_\ell}$, whence, due to the origin of the matrices $S_{\imath\ell}$, we obtain
 $$[\zeta;\eta]^TS_{\imath\ell}[\zeta;\eta]\leq 0\;\;\forall \imath.
 $$
 This combines with $\lambda^{\ell}\geq0$ and the semidefinite constraint in \rf{U} to imply that
 \begin{equation}\label{new11}
 \zeta^TU\zeta\leq \zeta^TA^T\Theta_{\ell}A\zeta+\sum_k\gamma^{\ell}_k\zeta^TT_k\zeta.
 \end{equation}
 Now, by construction, $\Theta_{\ell}=H_{\ell}[U,\delta]\Diag\{\mu^{\ell}\}H^T_{\ell}[U,\delta]$ with $\mu^{\ell}\geq0$, and  $\sum_i\mu^{\ell}_i={\mn^2[\delta]}\Tr(\Theta_{\ell})$, which combines with $\|H^TA\zeta\|_\infty\leq2$ to imply that
 $$
 \zeta^TA^T\Theta_{\ell}A\zeta\leq 4\sum_i\mu^{\ell}_i=4{\mn^2[\delta]}\Tr(\Theta_{\ell}).
 $$
 Finally, as we remember, from $\zeta\in\overline{\cX}\subset \wt\cX$ and $\gamma^{\ell}\geq0$ it follows that $\sum_k\gamma^{\ell}_k\zeta^TT_k\zeta\leq
 \phi_{\cT}(\gamma^{\ell})$. Thus, (\ref{new11}) implies (\ref{somenew}). \qed

We continue
with the following observation, cf. \cite[Observation 3.1]{juditsky2024design}.
\begin{lemma}\label{lem-obs} Let $\theta\in[1,2]$, $\vartheta=\frac{\theta}{2-\theta}$, and let  $\varsigma\in\bR^\nu$ and $S, U\in\bS^n$ be such that
\be\left[\begin{array}{c|c}S+U&B^T\cr\hline B&\Diag\{\varsigma\}\cr\end{array}\right]\succeq0,\quad
\|\varsigma\|_{\vartheta}\leq 1.
\ee{LMI}
Then for any $z\in \bR^n$,
$
\|Bz\|^2_{\theta}\leq z^TSz+z^TUz.
$\footnote{This bound is tight: the inequality becomes equality when optimizing the bound in $S, U\in \bS^n$ and $\varsigma\in\bR^\nu$ for a given $x\in \bR^n$, cf. \cite[Section A.1]{juditsky2024design}.}
\end{lemma}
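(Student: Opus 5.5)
The plan is to use the block matrix inequality in \rf{LMI} as a quadratic form and then apply Hölder's inequality. Fix $z\in\bR^n$ and $w\in\bR^\nu$. Positive semidefiniteness of the matrix in \rf{LMI}, tested on the vector $[z;-w]$, gives
\[
z^T(S+U)z-2w^TBz+\sum_i\varsigma_iw_i^2\geq0,
\qquad\mbox{i.e.,}\qquad
2w^TBz\leq z^T(S+U)z+\sum_i\varsigma_iw_i^2 .
\]
Note that $\varsigma\geq0$ automatically, since the diagonal block $\Diag\{\varsigma\}$ of a positive semidefinite matrix is itself positive semidefinite. We will then choose $w$ adapted to $y=Bz$ so that $w^Ty$ relates to $\|y\|_\theta^2$ and $\sum_i\varsigma_iw_i^2$ is controlled by $\|\varsigma\|_\vartheta$.

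Set $y=Bz$ and assume $y\neq0$, the case $y=0$ being trivial. Replacing $w$ by $tw$ and optimizing over $t>0$ (equivalently, by homogeneity), the inequality above yields
\[
(w^Ty)^2\leq \Big(\sum_i\varsigma_iw_i^2\Big)\, z^T(S+U)z
\]
for every $w$. It therefore suffices to exhibit $w$ with $w^Ty=\|y\|_\theta$ and $\sum_i\varsigma_iw_i^2\leq 1$ for all $\varsigma\geq0$ with $\|\varsigma\|_\vartheta\leq1$. By Hölder's inequality with exponents $\vartheta$ and $\vartheta'=\vartheta/(\vartheta-1)=\theta/(2\theta-2)$,
\[
\sum_i\varsigma_iw_i^2\leq\|\varsigma\|_\vartheta\,\|w\|_{2\vartheta'}^2,
\qquad 2\vartheta'=\frac{\theta}{\theta-1}=\theta_*,
\]
where $\theta_*$ is the exponent conjugate to $\theta$. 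So we take $w$ to be the dual vector of $y$, namely $\|w\|_{\theta_*}=1$ and $w^Ty=\|y\|_\theta$. This gives $\|Bz\|_\theta^2\leq z^TSz+z^TUz$.

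The main thing to check is the exponent bookkeeping, including the endpoint cases. For $\theta=1$ we have $\vartheta=1$ and $\theta_*=\infty$, so $\sum_i\varsigma_iw_i^2\leq\|\varsigma\|_1\|w\|_\infty^2$. For $\theta=2$ we have $\vartheta=\infty$ and $\theta_*=2$. Both are consistent with the general formula, so the same argument covers the whole range $\theta\in[1,2]$.
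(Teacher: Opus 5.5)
Your proof is correct and complete. Testing the LMI on $[z;-w]$ gives $(w^TBz)^2\le\big(z^T(S+U)z\big)\big(\sum_i\varsigma_iw_i^2\big)$. This is just Cauchy--Schwarz for the positive semidefinite form. Optimizing only over $t>0$ suffices because you apply it only to a $w$ with $w^TBz=\|Bz\|_\theta>0$, which also rules out $\sum_i\varsigma_iw_i^2=0$. Your exponent bookkeeping $2\vartheta/(\vartheta-1)=\theta/(\theta-1)$ is right, endpoints $\theta=1,2$ included.

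The paper gives no proof of this lemma; it only refers to \cite[Observation 3.1]{juditsky2024design}, so there is no in-paper argument to compare against directly. A common alternative, which also makes the tightness footnote transparent, is the primal route:
\begin{itemize}
\item For $\varsigma>0$, the Schur complement turns the LMI into $S+U\succeq B^T\Diag\{\varsigma\}^{-1}B$.
\item H\"older with exponents $2/\theta$ and $2/(2-\theta)$ gives $\|u\|_\theta^2\le\|\varsigma\|_\vartheta\sum_iu_i^2/\varsigma_i$.
\item The infimum of the right-hand side over admissible $\varsigma>0$ equals $\|u\|_\theta^2$.
\item Zero entries of $\varsigma$ need a separate remark: they force the corresponding rows of $B$ to vanish.
\end{itemize}
Your argument is exactly the dual of this, since $\max_w\big[2w^Ty-\sum_i\varsigma_iw_i^2\big]=\sum_iy_i^2/\varsigma_i$ for $\varsigma>0$. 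Your version never inverts $\Diag\{\varsigma\}$, so degenerate $\varsigma$ costs nothing. The primal version, by contrast, makes the footnote's tightness claim immediate.
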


\subsubsection{The construction}
Consider the following construction of the contrast matrix $H$. 
\par
Let us fix $\delta\in(0,1)$.
\begin{enumerate}
\item
For $G=[g_1,...,g_p]^T\in \bR^{p\times n}$ and $\ell,j=1,...,p$, we set (cf. (\ref{rhos}))
\begin{align}
\rho^\pm_{\ell j}[G,\delta]&=\min_f\left\{\max_{z}\left[(g_j-A^Tf)^Tz+{2}\pi_\delta(f):\,z\in \cZ_\ell^\pm\right]\right\},
\label{newpsi}\\
\rho_{\ell j}[G,\delta]&=\max\left[\rho^-_{\ell j}[G,\delta],\rho^+_{\ell j}[G,\delta]\right],\nn
\rho^{(\ell)}[G,\delta]&=\left[\rho_{\ell 1}[G,\delta];...;\rho_{\ell p}[G,\delta]\right],\nn
\rho_\ell
[G,\delta]&={\max_{j\leq p}\rho_{\ell j}[G,\delta]}.\nonumber
\end{align}
Note that $\rho^\pm_{\ell j}[G,\delta]$ are efficiently computable convex in $G$ real-valued functions.
\item Next, let
\begin{align}\label{UUu}
\Opt_{\ell}(U,\delta)=\min_{\Theta,\gamma,\lambda}&\bigg\{\phi_\cT(\gamma)+4{\mn^2[\delta]}\Tr(\Theta):
\Theta\succeq0,\gamma\geq0,\lambda\geq0,\\
&\left[\begin{array}{c|c}A^T\Theta A+\sum_k\gamma_kT_k-U&0_{n\times p}\cr\hline
 0_{p\times n}&0_{p\times p}\cr\end{array}\right]+\sum_\imath \lambda_\imath S_{\imath\ell}\succeq0\Bigg\}\nonumber
 \end{align}
 (cf. (\ref{U})),
 so that $\Opt_{\ell}(U,\delta)$ are efficiently computable convex real-valued functions of $U\in\bS^n$.
\item
Given $\theta\in[1,2]$, let $\vartheta={\theta\over 2-\theta}$, cf. Lemma \ref{lem-obs}. Consider  the ``master'' optimization problems
\begin{align}
\label{eqtolya}
\mR_{\ell}[\delta]=\inf_{G,U,\varsigma}
&\bigg\{2\left\|\rho^{(\ell)}[C,\delta]\right\|_{s,1}\rho_\ell[G,\delta]+\Opt_{\ell}(U,\delta):\\
&\quad\left.\begin{array}{l}
G\in\bR^{p\times n},\,U\in\bS^n,\,\varsigma\in\bR^n,\,{G^TC}\in\bS^n,\\
\left[\begin{array}{c|c}{G^TC}+U&B^T\cr\hline B&\Diag\{\varsigma\}\cr\end{array}\right]\succeq0,\,\|\varsigma\|_\vartheta\leq 1\end{array}\right\}.\nonumber
\end{align}
cf. (\ref{LMI}).
Let $(\ov G,\,\ov U,\,\ov\varsigma)$ be an optimal solution to the convex problem \rf{eqtolya}.
\begin{itemize}
\item[(a)] Let $f_{\ell j}^\pm$ be optimal solutions to problems  (\ref{newpsi}) specifying $\rho_{\ell j}^\pm[C,\delta]]$
and $h_{\ell j}^\pm$ -- the $\pi_\delta$-unit vectors such that $f_{\ell j}^\pm=\pi_\delta(f_{\ell j}^\pm)h_{\ell j}^\pm$; we denote   $H^{(a)}_\ell[\delta]$  the $m\times 2p$
matrix with columns $h_{\ell j}^\pm$, $j\leq p$.
\item[(b)] Let $f_{\ell j}^\pm$ be optimal solutions to problems  (\ref{newpsi}) specifying $\rho_{\ell j}^\pm[\ov G,\delta]$
and $h_{\ell j}^\pm$ be $\pi_\delta$-unit vectors such that $f_{\ell j}^\pm=\pi_\delta(f_{\ell j}^\pm)h_{\ell j}^\pm$; we denote   $H^{(b)}_\ell[\delta]$ the $m\times 2p$
matrix with columns $h_{\ell j}^\pm$, $j\leq p$.
\item[(c)] Let $\ov\Theta=\sum_i\alpha_if_ jf_i^T$ be the eigenvalue decomposition of $\ov U$, with orthonormal $[f_1,...,f_{m}]$, and let $H^{(c)}_\ell[\delta]$ be the matrix with the columns $h_i=f_i/{\mn[\delta]}$, $i\leq m$, so that $\pi_\delta(h_i)=1$, $i\leq m$
\end{itemize}
\end{enumerate}
\begin{proposition}\label{summary}
Let $H$ be a $(1-\epsilon)$-admissible contrast matrix containing, for some $\ell$ and $\delta$, the matrix
$[H^{(a)}_\ell[\delta],H^{(b)}_\ell[\delta],H^{(c)}_\ell [\delta]]$ as a submatrix.
Suppose that $x\in\cX^s$ and $\xi_x\in\Xi[x,H]$ are such that one of the cases $(\ell,\pm)$ occurs.
Then the recovery $\widehat{B}_H=B\wh x_H$ of $Bx$ is well defined and satisfies
\[
\left\|\widehat{B}_H-Bx\right\|_\theta^2\leq \mR_{\ell}[\delta].
\]
\end{proposition}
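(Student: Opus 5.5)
Fix $x\in\cX^s$ and $\xi_x\in\Xi[x,H]$ with one of the cases $(\ell,\pm)$ occurring, and let $\zeta=\wh x_H(Ax+\xi_x)-x$. Well-definedness of $\wh x_H$ follows from Observation \ref{simpleobs}. The plan is to apply Lemma \ref{lem-obs} to $z=\zeta$ with $S=\ov G^TC$ and $U=\ov U$, $\varsigma=\ov\varsigma$. The constraints of the master problem \rf{eqtolya} (including symmetry of $\ov G^TC$) are exactly the premise \rf{LMI}, so we get
\[
\|B\zeta\|_\theta^2\leq \zeta^T\ov G^TC\zeta+\zeta^T\ov U\zeta .
\]
It then remains to bound the two terms separately by the two summands of the objective in \rf{eqtolya} evaluated at $(\ov G,\ov U,\ov\varsigma)$.

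For the first term, write $\zeta^T\ov G^TC\zeta=(\ov G\zeta)^T(C\zeta)\leq\|\ov G\zeta\|_\infty\|C\zeta\|_1$. Since $H$ contains $H^{(b)}_\ell[\delta]$, which is precisely the matrix $H_\ell[\ov G,\delta]$ of construction (\#), Observation \ref{verynew} with $G=\ov G$ yields $\|\ov G\zeta\|_\infty\leq\rho_\ell[\ov G,\delta]$. Since $H$ contains $H^{(a)}_\ell[\delta]=H_\ell[C,\delta]$, the same observation with $G=C$ gives $\abs[C\zeta]\leq\rho^{(\ell)}[C,\delta]$. Combining this with \rf{aboutC} at $\theta=1$ gives $\|C\zeta\|_1\leq2\|C\zeta\|_{s,1}\leq 2\|\rho^{(\ell)}[C,\delta]\|_{s,1}$; here monotonicity of $\|\cdot\|_{s,1}$ under entrywise domination of magnitudes is used. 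Hence the first term is at most $2\|\rho^{(\ell)}[C,\delta]\|_{s,1}\rho_\ell[\ov G,\delta]$.

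For the second term I invoke Proposition \ref{propnew} with $U=\ov U$. This requires $H$ to contain $H_\ell[\ov U,\delta]$, i.e.\ the scaled eigenvector matrix of the $\Theta$-component of an optimal solution of the $\ell$-th problem \rf{UUu} at $U=\ov U$. I read $H^{(c)}_\ell[\delta]$ as this matrix: the construction in (c) is stated loosely, decomposing ``$\ov\Theta$'' while referring to $\ov U$, and I interpret $\ov\Theta$ as that optimal $\Theta_\ell$. With this reading, \rf{somenew} gives $\zeta^T\ov U\zeta\leq\Opt_\ell(\ov U,\delta)$.

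Summing the two bounds gives $\|B\zeta\|_\theta^2\leq 2\|\rho^{(\ell)}[C,\delta]\|_{s,1}\rho_\ell[\ov G,\delta]+\Opt_\ell(\ov U,\delta)=\mR_\ell[\delta]$, by optimality of $(\ov G,\ov U,\ov\varsigma)$. The main obstacle is the bookkeeping in step (c): making sure the contrast built there is really $H_\ell[\ov U,\delta]$ with eigenvalues $\mu\geq0$ and $\sum_i\mu_i=\mn^2\Tr(\Theta_\ell)$, as the proof of Proposition \ref{propnew} requires. A second point to check is attainment of the infimum in \rf{eqtolya}. If it is not attained, I would run the same argument with an $\epsilon'$-optimal feasible triple and let $\epsilon'\to0$; the contrast then depends on $\epsilon'$, so the argument should instead simply assume an optimal solution exists, as the construction does.
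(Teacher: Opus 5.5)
Your proposal is correct and follows the paper's proof essentially step for step. You apply Lemma \ref{lem-obs} with $S=\ov G^TC$ and $U=\ov U$, bound the cross term via Observation \ref{verynew} for $\ov G$ and $C$ together with $\|C\zeta\|_1\le 2\|C\zeta\|_{s,1}$, and bound the $\ov U$ term via Proposition \ref{propnew}. Your reading of the garbled step (c) as producing $H_\ell[\ov U,\delta]$ is exactly what the paper's appeal to Proposition \ref{propnew} implicitly requires.
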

{\bf Proof.} As $H$ is $(1-\epsilon)$-admissible and $\xi_x\in\Xi[x,H]$, $\zeta=\wh x_H(Ax+\xi_x)-x$  is well-defined. By
 Lemma \ref{lem-obs} we have
 $$
\|B\zeta\|^2_{\theta}\leq \zeta^T\ov G^TC\zeta+\zeta^T\ov U\zeta.
$$
Now, by Observation \ref{verynew},
$$
\abs[\ov G\zeta]\leq\rho^{(\ell)}[\ov G,\delta], \quad 
\abs[C\zeta]\leq\rho^{(\ell)}[C,\delta],
$$
and, besides this, $\|C\zeta\|_1\leq 2\|C\zeta\|_{s,1}$. Hence,
\be
\zeta^T\ov G^TC\zeta\leq \|\ov G\zeta\|_\infty\|C\zeta\|_1\leq2\rho_\ell[\ov  G,\delta]\left\|\rho^{(\ell)}[C,\delta]\right\|_{s,1}.
\ee{eq23+}
Furthermore, by Proposition \ref{propnew} we have
$$
\zeta^T\ov U\zeta\leq {\Opt_\ell(\ov U,\delta)},
$$
implying that
$$
\|B\zeta\|_\theta^2\leq\mR_{\ell}[\delta],
$$
as claimed. \qed
\par
 By Proposition \ref{summary}, setting $M=p(4p+n)$, $\delta=\epsilon/M$ and specifying $m\times M$ matrix $H$ as the concatenation of matrices
 $[H^{(a)}_\ell[\delta],H^{(b)}_\ell[\delta],H^{(c)}_\ell [\delta]]$ for  $\ell\leq p$, we obtain a $(1-\epsilon)$-admissible contrast matrix $H$ such that $\|\cdot|_\theta$-
risk of recovering $Bx$ by $\widehat{B}_H=B\wh x_H$ satisfies
$$
\Risk^2_{\epsilon,\|\cdot\|_\theta}[\wh B_H|\cX^s]\leq \mR[\epsilon]:=\max_\ell\mR_{\ell}[\epsilon/M].
$$
When $x\in\cX^s$. $\xi_x\in\Xi[x,H]$ and one of the events $(\ell,\pm)$ takes place, the recovery error satisfies the bound
$$
\|\wh B_H-Bx\|_\theta^2\leq\mR_{\ell}[\epsilon/M].
$$
{
\subsubsection{Bounding the estimation error for a fixed $H$}
Contrast design procedure based on the signal decomposition described in Lemma \ref{lem-obs} can be modified to bound the $(s,\epsilon,\|\cdot\|_\theta)$-risk of the polyhedral estimate with a given contrast matrix.

Specifically, given $H\in \bR^{m\times M}$ we act as follows.
\begin{enumerate}
\item
For $G=[g_1,...,g_p]^T\in \bR^{p\times n}$ and $\ell,j=1,...,p$, let\footnote{This is nothing but (\ref{mrofGH}); we recall the notation here for reader's convenience.}
\begin{align*}
\mr_{\ell j}^{\chi}[G,H]&=\max_{z}\left\{[Gz]_j:\,z\in \cZ_\ell^\chi[H]\right\}
\quad\left[\ell\leq p,\,j\leq J,\,\chi\in\{-,+\}\right],\\
\mr_{\ell j}[G,H]&=\max\left[\mr_{\ell j}^-[G,H],\,\mr_{\ell j}^+[G,H]\right],\\
\mr^{(\ell)}[G,H]&=\left[\mr_{\ell 1}[G,H];...;\mr_{\ell J}[G,H]\right].
\end{align*}
Observe that $\mr^\pm_{\ell j}[G,\delta]$ and $\mr_{\ell j}[G,\delta]$ are efficiently computable convex in $G$ real-valued functions.
\item Next, for $\ell\leq p$ consider convex optimization problems
 \begin{align}\label{maineqU}
\ov\Opt_\ell[U,H]&=\min_{\gamma,\upsilon,\lambda}\Big\{\phi_{\cT}(\gamma)+4\sum_i\upsilon_i:\;\upsilon\geq 0,\,\gamma\geq 0,\,\lambda\geq 0,\\
&\qquad  \left.\left[\begin{array}{c|c}{A}^TH\Diag\{\upsilon\}H^T{A}+\sum_k\gamma_kT_k-U&0_{n\times p}\cr\hline
 0_{p\times n}&0_{p\times p}\cr\end{array}\right]+\sum_\imath \lambda_\imath S_{\imath\ell}\succeq0\right\}.\nonumber
\end{align}
Notice that $\ov\Opt_{\ell}[U,H]$ are efficiently computable convex real-valued functions of $U\in\bS^n$.
\item
Given $\theta\in[1,2]$ and $\vartheta={\theta\over 2-\theta}$, consider  optimization problems (cf. (\ref{LMI}))
\begin{align}
\label{eqtolyaU}
\ov\mR_{\ell}^H=\inf_{G,U,\varsigma}
&\bigg\{2\left\|\mr^{(\ell)}[C,H]\right\|_{s,1}\left\|\mr^{(\ell)}[G,H]\right\|_{\infty}+\ov\Opt_{\ell}(U,H):\\
&\quad\left.\begin{array}{l}
G\in\bR^{p\times n},\,U\in\bS^n,\,\varsigma\in\bR^n,\,{G^TC}\in\bS^n,\\
\left[\begin{array}{c|c}{G^TC}+U&B^T\cr\hline B&\Diag\{\varsigma\}\cr\end{array}\right]\succeq0,\,\|\varsigma\|_\vartheta\leq 1\end{array}\right\}.\nonumber
\end{align}

\end{enumerate}
\begin{proposition}\label{summaryU}
Let $H$ be a $(1-\epsilon)$-admissible contrast matrix.
Suppose that $x\in\cX^s$ and $\xi_x\in\Xi[x,H]$ are such that one of the cases $(\ell,\pm)$ occurs.
Then the recovery $\widehat{B}_H=B\wh x_H(Ax+\xi_x)$ of $Bx$ is well defined and satisfies
\be
\left\|\widehat{B}_H-Bx\right\|_\theta^2\leq \ov\mR_{\ell}^H.
\ee{Bbound00}
As a result, \[
\Risk^2_{\epsilon,\|\cdot\|_\theta}[\wh B_H|\cX^s]\leq \max_\ell\ov\mR_{\ell}^H.
\]
\end{proposition}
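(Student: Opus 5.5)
The argument follows the proof of Proposition \ref{summary}, with the computed quantities $\rho[\cdot,\delta]$ and $\Opt_\ell(\cdot,\delta)$ replaced by their counterparts $\mr[\cdot,H]$ and $\ov\Opt_\ell[\cdot,H]$ attached to the given $H$.

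Fix $x\in\cX^s$ and $\xi_x\in\Xi[x,H]$ such that case $(\ell,\chi)$ occurs. By Observation \ref{simpleobs}, $\wh x=\wh x_H(Ax+\xi_x)$ is well defined and $\zeta=\wh x-x\in\cZ_\ell^\chi[H]$. Now take any feasible triple $(G,U,\varsigma)$ of problem (\ref{eqtolyaU}); it is enough to show that
\[
\|B\zeta\|_\theta^2\leq 2\left\|\mr^{(\ell)}[C,H]\right\|_{s,1}\left\|\mr^{(\ell)}[G,H]\right\|_\infty+\ov\Opt_\ell[U,H],
\]
and then take the infimum over feasible triples. Applying Lemma \ref{lem-obs} with $S=G^TC$ (symmetric by the constraint) gives $\|B\zeta\|_\theta^2\leq \zeta^TG^TC\zeta+\zeta^TU\zeta$. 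We bound the two terms separately.

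\emph{First term.} By (\ref{itholds}) of Observation \ref{obsenough}, applied with $Q=G$ and $Q=C$, we have $\abs[G\zeta]\leq\mr^{(\ell)}[G,H]$ and $\abs[C\zeta]\leq \mr^{(\ell)}[C,H]$. By (\ref{aboutC}), $\|C\zeta\|_1\leq 2\|C\zeta\|_{s,1}\leq 2\|\mr^{(\ell)}[C,H]\|_{s,1}$. Hence
\[
\zeta^TG^TC\zeta\leq\|G\zeta\|_\infty\|C\zeta\|_1\leq 2\left\|\mr^{(\ell)}[G,H]\right\|_\infty\left\|\mr^{(\ell)}[C,H]\right\|_{s,1}.
\]

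\emph{Second term.} Set $\eta_i=|[C\zeta]_i|$. As in the proof of Lemma \ref{prop:2rbound}, $[\zeta;\eta]\in\mY_\ell[H]\subset\cK_\ell$, so $[\zeta;\eta]^TS_{\imath\ell}[\zeta;\eta]\leq0$ for all $\imath$. Let $(\gamma,\upsilon,\lambda)$ be any feasible solution of (\ref{maineqU}). Evaluating the semidefinite constraint at $[\zeta;\eta]$ and using $\lambda\ge0$ gives
\[
\zeta^TU\zeta\leq \zeta^TA^TH\Diag\{\upsilon\}H^TA\zeta+\sum_k\gamma_k\zeta^TT_k\zeta .
\]
Since $\|H^TA\zeta\|_\infty\le2$ and $\upsilon\ge0$, the first summand is at most $4\sum_i\upsilon_i$. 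Since $\zeta\in\ov\cX\subset\wt\cX$ and $\gamma\ge0$, the second summand is at most $\phi_\cT(\gamma)$. Minimizing over feasible $(\gamma,\upsilon,\lambda)$ yields $\zeta^TU\zeta\le\ov\Opt_\ell[U,H]$. If (\ref{maineqU}) is infeasible, its value is $+\infty$ and the bound holds trivially.

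Adding the two bounds and taking the infimum over $(G,U,\varsigma)$ proves (\ref{Bbound00}). For the risk bound, (!!) ensures that some case $(\ell,\chi)$ occurs whenever $\xi_x\in\Xi[x,H]$. Hence $\|\wh B_H-Bx\|_\theta^2\le\max_\ell\ov\mR^H_\ell$ on this event, and (!) says the event has probability at least $1-\epsilon$.

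The only point needing care is the second term: we must make sure the $\eta$-augmentation puts $[\zeta;\eta]$ in $\cK_\ell$, so that the $S_{\imath\ell}$-terms can be discarded. That holds because case $(\ell,\chi)$ occurring means $|[C\zeta]_\ell|=\|C\zeta\|_\infty$, which is exactly what the constraints defining $\cK_\ell$ require.
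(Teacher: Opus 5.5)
Your proof is correct and follows the paper's argument. Lemma \ref{lem-obs} splits $\|B\zeta\|_\theta^2$ into $\zeta^TG^TC\zeta+\zeta^TU\zeta$. The first term is bounded via the entrywise bounds\rf{forlvec} together with $\|C\zeta\|_1\le 2\|C\zeta\|_{s,1}$, and the second via the semidefinite constraint of\rf{maineqU} evaluated at $[\zeta;\eta]\in\mY_\ell[H]\subset\cK_\ell$, as in the proof of Lemma \ref{prop:2rbound}. You work with arbitrary feasible solutions of\rf{maineqU} and\rf{eqtolyaU} and then pass to the infimum, instead of invoking an optimal solution as the paper does; this is a harmless and slightly cleaner variant, since it does not need the infimum to be attained.
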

{\bf Proof.} Let $\ell\leq p$ be fixed, and $(\ov G,\,\ov U,\ov\varsigma)$ be a feasible solution to \rf{eqtolyaU}. Let also $\zeta=\wh x_H(Ax+\xi_x)-x$. By
 Lemma \ref{lem-obs} we have
\be
\|B\zeta\|^2_{\theta}\leq \zeta^T\ov G^TC\zeta+\zeta^T\ov U\zeta.
\ee{sdec000}
Applying \rf{forlvec} with $G=\ov G$ and $G=C$ we get
$$
\abs[\ov G\zeta]\leq\mr^{(\ell)}[\ov G,H], \quad 
\abs[C\zeta]\leq\mr^{(\ell)}[C,H],
$$
so that, cf. \rf{eq23+},
\[
\zeta^T\ov G^TC\zeta\leq \|\ov G\zeta\|_\infty\|C\zeta\|_1\leq 2\|\ov G\zeta\|_\infty\|C\zeta\|_{s,1}\leq2\left\|\mr^{(\ell)}[\ov  G,\delta]\right\|_\infty\left\|\mr^{(\ell)}[C,\delta]\right\|_{s,1}.
\]
On the other hand, when $(\ov\upsilon,\ov\gamma,\ov\lambda)$ is an optimal solution to \rf{maineqU} corresponding to $U=\ov U$, for $[\zeta;\eta]\in\mY_\ell[H]{\subset \cK_\ell}$ we have, by the semidefinite constraint of \rf{maineqU} (recall that under the premise of the proposition, $\|H^TA\zeta\|_\infty\leq 2$),
\begin{align*}
\zeta^T\ov U\zeta&\leq \sum_j\zeta^TAH\Diag\{\ov\upsilon_k\}H^TA\zeta +\sum_k\ov\gamma_k\zeta^TT_k\zeta+\sum_\imath\ov\lambda_\imath[\zeta;\eta]^TS_{\imath\ell}[\zeta;\eta]\\
&\leq4\sum_{j=1}^M\ov\upsilon_j+\phi_{\cT}(\ov\gamma)=\Opt_\ell[\ov U,H]
\end{align*}
(we use here the same argument as when verifying (\ref{chain}) in  the proof of Lemma \ref{prop:2rbound}).
When substituting the above bounds into \rf{sdec000}, we arrive at \rf{Bbound00}. The final risk bound is an immediate consequence of \rf{Bbound00} due to (!) and (!!).\qed
}
\section{Testing sparse hypotheses}
\label{sectsitgo}
We now move from recovering linear images of sparse signals to hypothesis testing. Our problem of interest now reads:
\begin{quotation}
\noindent Given a sensing matrix $A\in\bR^{m\times n}$ and observation
$
\omega=Ax+\xi_x$ (cf. \rf{obs})
of unknown signal $x$ we want to decide upon two hypotheses about $x$:
\begin{itemize}
\item $\mX^s$: $x$ belongs to a given convex and compact set $\cX\subset \bR^n$ and $C_\cX x$ is $s_x$-sparse,
\item $\mY^s$: $x$ belongs to a given convex and compact set $\cY\subset \bR^n$ and $\C_\cY y$ is $s_y$-sparse.
\end{itemize}
where $C_\cX\in\bR^{p_\cX\times n}$, $C_\cY\in\bR^{p_\cY\times n}$  are given matrices.
We assume that both $\cX$ and $\cY$ are nonempty, while $\mX^s\cap \mY^s=\emptyset$.
Our assumptions about the observation noise are the same as in Section \ref{sec:obss}.
\end{quotation}

\subsection{Building bricks}
Assume that we are given two nonempty convex compact sets $\cA$ and $\cB$, and given observation \rf{obs} we want to decide on the hypotheses $\mA=\{x\in \cA\}$ versus $\mB=\{x\in \cB\}$.
The building block of what follows is a simple test deciding on these hypotheses (a deterministic function of the observation taking values $\cA$ or
$\cB$, meaning that the respective hypothesis is accepted) which is as follows.
\paragraph{Test $\cT^\delta_{\cA, \cB}$.} Given design parameter $\delta\in(0,1),$ observation $\omega$, ``detector" $h\in\bR^m$, and threshold $\alpha\in\bR$, the test accepts hypothesis $\mA$ (and rejects hypothesis $\mB$) when $h^T\omega\geq\alpha$ and accepts $\mB$ (and rejects $\mA$) otherwise.
 We  specify $h=h_{\cA,\cB}^\delta$ as an optimal solution to the convex optimization problem
\[
\Opt=\Opt_{\cA,\cB}^\delta:=\max\left\{\min\limits_{x\in \cA} h^TAx-\max\limits_{y\in \cB}h^TAy: \pi_{\delta}(h)\leq1\right\}.
\]
When solving the problem, along with the optimal solution $h=h^\delta_{\cA,\cB}$, we get at our disposal the reals
$$
\overline{\Opt}=\min_{x\in \cA} [h_{\cA,\cB}^\delta]^TAx,\quad\underline{\Opt}=\max_{y\in \cB} [h_{\cA,\cB}^\delta]^TAy
$$
such that $\Opt=\ov\Opt-\ul\Opt$. We set
$$
\alpha[=\alpha_{\cA,\cB}^\delta] =\half (\overline{\Opt}+\underline{\Opt}).
$$
We define the risk of the test $\cT$ as the maximal over $x\in \cA\cup\cB$ $P_x$-probability of wrong decision:
\[
\Risk[\cT|(\cA,\cB)]=\max\left[\sup_{x\in \cA}\Prob_{\xi_x\sim P_x}\{\cT=\mB\},\,\sup_{y\in \cB}\Prob_{\xi_y\sim P_y}\{\cT=\mA\}\right].
\]
Let us make the following immediate
\begin{observation} \label{obsnewnew}Assume that
\begin{equation}\label{gap}
\Opt_{\cA,\cB}^\delta > 2.
\end{equation}
Then the test $\cT_{\cA,\cB}^\delta$ decides on the hypotheses $\mA$ and $\mB$ with risk $\leq\delta$.
\end{observation}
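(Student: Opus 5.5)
The plan is to bound the two error probabilities separately, using only the defining property \rf{eq1} of $\pi_\delta$ applied to the single detector $h=h^\delta_{\cA,\cB}$. By construction $\pi_\delta(h)\leq1$, and the optimal value splits as $\Opt=\ov\Opt-\ul\Opt$, where $\ov\Opt=\min_{x\in\cA}h^TAx$ and $\ul\Opt=\max_{y\in\cB}h^TAy$. The threshold is $\alpha=\half(\ov\Opt+\ul\Opt)$, so $\ov\Opt-\alpha=\alpha-\ul\Opt=\half\Opt$. Under \rf{gap} this common margin is strictly larger than $1$.

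First take $x\in\cA$ and $\omega=Ax+\xi_x$. The test wrongly rejects $\mA$ only when $h^T\omega<\alpha$. Since $h^TAx\geq\ov\Opt$, this event requires
\[
h^T\xi_x<\alpha-h^TAx\leq\alpha-\ov\Opt=-\tfrac12\Opt<-1,
\]
and hence $|h^T\xi_x|>1\geq\pi_\delta(h)$. By \rf{eq1} the latter event has $P_x$-probability at most $\delta$.

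Symmetrically, take $y\in\cB$. A wrong acceptance of $\mA$ requires $h^T\omega\geq\alpha$. Since $h^TAy\leq\ul\Opt$, this forces
\[
h^T\xi_y\geq\alpha-\ul\Opt=\tfrac12\Opt>1\geq\pi_\delta(h),
\]
which again has probability at most $\delta$ by \rf{eq1}. Taking the supremum over $x\in\cA$ and $y\in\cB$ gives $\Risk[\cT^\delta_{\cA,\cB}|(\cA,\cB)]\leq\delta$.

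The only care needed is bookkeeping. One must check that $\ov\Opt$ and $\ul\Opt$ are attained, which holds by compactness of $\cA$ and $\cB$ and the existence of an optimal $h$. One must also check that \rf{eq1} is valid for every $x$ in $\cA\cup\cB$, since the family $\cP_x$ is assumed available there, as it is in Section \ref{sec:obss}. Finally, the strict inequality in \rf{gap} is what lets the bad events be contained in $\{|h^T\xi|>\pi_\delta(h)\}$ even when $\pi_\delta(h)=1$. No other obstacle is expected.
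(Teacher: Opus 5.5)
Your proof is correct and is essentially the paper's argument. The paper restricts to the event $|h^T\xi|\leq 1$, which has probability at least $1-\delta$ because $\pi_\delta(h)\leq1$, and shows the test decides correctly there via $h^T\omega\geq\ov\Opt-1>\alpha$. You argue the contrapositive: each wrong decision forces $|h^T\xi|>1\geq\pi_\delta(h)$, using the margin $\tfrac12\Opt>1$.
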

Indeed, let the signal $x$ underlying observation \rf{obs} obey hypothesis $\mA$, and let realization $\xi_x$ of the noisy component of $\omega$ satisfy $|[h_{\cA,\cB}^\delta]^T\xi_x|\leq1$, which happens with $P_x$-probability at least $1-\delta$ due to $\pi_{\delta}(h_{\cA,\cB}^\delta)\leq1$. In this case we have
\[[h_{\cA,\cB}^\delta]^T \omega=[h_{\cA,\cB}^\delta]^T Ax+[h_{\cA,\cB}^\delta]^T \xi_x\geq [h_{\cA,\cB}^\delta]^T Ax-1\geq\overline{\Opt}-1>\alpha
\]
where the concluding inequality is due to (\ref{gap}). Thus, in the situation in question (and for $x$ obeying $\mA$ it is of $P_x$-probability at least $1-\delta$), the test accepts hypothesis $\mA$. Similar reasoning demonstrates that when $x$ obeys $\mB$, the $P_x$-probability for $\mB$ to be accepted is $\geq 1-\delta$. \qed
\subsection{Testing hypotheses about sparse signals}
Now we are ready to
attack the problem posed at the beginning of Section \ref{sectsitgo}. Specifically, given problem's data (i.e., $A$, $\cX$, $\cY$,
{$C_\cX$, $C_{\cY}$,} $s_x$, and $s_y$)  and tolerance $\epsilon\in (0,1/2)$, we
\begin{itemize}
\item build $2p_\cX$ convex compact sets $\cX_{2i-1}=\cX^+_i$, $\cX_{2i}=\cX^-_i$, $i\leq p_\cX$. and $2p_\cY$ convex compact sets
$\cY_{2i-1}=\cY^+_i$, $\cY_{2i}=\cY^-_i$, $i\leq p_\cY$, where
\begin{align*}
\cX_i^\pm&=\left\{x\in \cX:\|C_\cX x\|_p\leq\pm s_{x}^{1/p} [C_\cX x]_i, 1\leq p\leq\infty\right\},\\
\cY_i^\pm&=\left\{y\in \cY:\|C_\cY y\|_p\leq\pm s_{y}^{1/p} [C_\cY y]_i\,1\leq p\leq\infty\right\},
\end{align*}
\item set $\delta=(2\max[p_\cX,p_\cY])^{-1}\epsilon$ and build the tests $\cT_{ij}=\cT^\delta_{\cX_i,\cY_j}$, $1\leq i\leq 2p_\cX,\;1
\leq
j\leq 2p_\cY$.
\end{itemize}
Our test $\ov\cT$ deciding on $\mX^s$ and $\mY^s$ is well defined in the {\sl ``good case"}, in which
$$
\Opt^\delta_{\cX_i,\cY_j}>2\;\forall (i\leq 2p_\cX,j\leq 2p_\cY)
$$
and is as follows: \begin{quotation}\noindent {\bf Test $\ov\cT$:} given observation $\omega$, the test
\begin{itemize}
\item accepts hypothesis $\mX^s$, if for some $i\leq 2p_\cX$ all tests $\cT_{ij}$, $j\leq 2p_\cY$,  as applied to $\omega$, accept hypothesis  $x\in \cX_i$;
\item accepts hypothesis $\mY^s$, if for some $j\leq 2p_\cY$  all tests $\cT_{ij}$, $i\leq 2p_\cX$,  as applied to $\omega$, accept hypothesis  $x\in \cY_j$;
\item fails to decide  when neither one of the  two situations above takes place.
\end{itemize}
\end{quotation}
Observe that in the ``good case" the test indeed is well defined -- the situation in which the above rules prescribe to accept both $\mX^s$ and $\mY^s$ is impossible. Indeed, in this situation there exist $\bar{i},\bar{j}$ such that the test $\cT^\delta_{\bar{i}\bar{j}}$ as applied to the observation at hand accepts the hypotheses that $x\in \cX_{\bar{i}}$ and $x\in \cY_{\bar{j}}$ simultaneously, which is impossible -- each test $\cT_{ij}$ accepts exactly one of the hypotheses it is applied to.
\par
The risk of the test $\ov\cT$ we have just designed is described by the following
\begin{proposition}\label{proprisk}  In the situation of this section, assume that ``good case" occurs. Then the risk of the test $\ov\cT$ is at most $\epsilon$.
\end{proposition}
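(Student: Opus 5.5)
Fix a signal obeying one of the hypotheses; by symmetry take $x$ obeying $\mX^s$, i.e. $x\in\cX$ with $C_\cX x$ being $s_x$-sparse. The plan is to show that, outside an event of $P_x$-probability at most $\epsilon$, the test $\ov\cT$ accepts $\mX^s$. Since the test is well defined in the good case, it then cannot accept $\mY^s$, and this gives the risk bound.

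\textbf{Step 1: locate $x$ in one of the pieces.} Let $i_*$ be the index of a largest in magnitude entry of $C_\cX x$, and let $\chi$ be its sign. Because $C_\cX x$ has at most $s_x$ nonzero entries, Lemma \ref{simple1} (or a direct Hölder-type estimate) gives
\[
\|C_\cX x\|_p\leq s_x^{1/p}\|C_\cX x\|_\infty=\chi\, s_x^{1/p}[C_\cX x]_{i_*}
\]
for all $p\in[1,\infty]$. Hence $x\in\cX_{i_*}^\chi$, which is one of the sets $\cX_{\bar i}$ with $\bar i\leq 2p_\cX$.

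\textbf{Step 2: apply Observation \ref{obsnewnew} to the fixed row $\bar i$.} For every $j\leq 2p_\cY$ the good-case assumption gives $\Opt^\delta_{\cX_{\bar i},\cY_j}>2$. The argument of Observation \ref{obsnewnew} then shows that the test $\cT_{\bar i j}$ accepts ``$x\in\cX_{\bar i}$'' whenever $|[h^\delta_{\cX_{\bar i},\cY_j}]^T\xi_x|\leq 1$. The detector satisfies $\pi_\delta(h^\delta_{\cX_{\bar i},\cY_j})\leq1$, so by (\ref{eq1}) this event fails with probability at most $\delta$.

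\textbf{Step 3: union bound.} Apply the union bound over the $2p_\cY$ detectors $h_{\bar i j}$, $j\leq 2p_\cY$. With probability at least
\[
1-2p_\cY\delta\geq 1-2\max[p_\cX,p_\cY]\,\delta=1-\epsilon,
\]
all tests $\cT_{\bar i j}$ accept $\cX_{\bar i}$ simultaneously, so $\ov\cT$ accepts $\mX^s$. On this event $\ov\cT$ does not accept $\mY^s$: it is well defined in the good case, and acceptance of $\mY^s$ would require some $j$ for which all $\cT_{ij}$ accept $\cY_j$, including $\cT_{\bar i j}$, a contradiction. Thus the probability of rejecting the true hypothesis $\mX^s$ (including failure to decide, if that is counted as an error) is at most $\epsilon$. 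The case of $\mY^s$ is symmetric, with union bound $2p_\cX\delta\leq\epsilon$.

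\textbf{Main obstacle.} The only delicate point is Step 1, namely that the sets $\cX_i^\pm$ as defined, with their constraints holding for all $p\in[1,\infty]$, really cover the sparse part of $\cX$. This reduces to the elementary inequality $\|y\|_p\leq s^{1/p}\|y\|_\infty$ for $s$-sparse $y$.
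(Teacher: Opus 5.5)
Your proof is correct and follows the paper's own argument essentially step for step: place $x$ in some $\cX_{\bar i}$ via the largest-magnitude entry of $C_\cX x$, apply Observation \ref{obsnewnew} for each $j\leq 2p_\cY$, and take a union bound giving $1-2p_\cY\delta\geq1-\epsilon$. One small correction to Step 1: Lemma \ref{simple1} (e.g.\ with $y'=0$) only yields the weaker $\|y\|_p\leq(2s_x)^{1/p}\|y\|_\infty$, which does not suffice. What actually places $x$ in $\cX^\chi_{i_*}$ is your fallback, the direct estimate $\|y\|_p\leq s_x^{1/p}\|y\|_\infty$ for $s_x$-sparse $y$.
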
{\bf Proof.} Indeed, let the signal $x$ underlying the observation  obey the hypothesis $\mX^s$ and let $\iota$ be the index of the largest in magnitude entry in $x$. As $x$ is $s_x$-sparse and belongs to $\cX$, it either belongs to $\cX^+_\iota$, or to $\cX^-_\iota$, and thus $x\in \cX_{\ov i}$ for some $\ov i=\ov i(x)$. As we are in the ``good case," when $\cA$ is $\cX_{\ov i}$ and $\cB$ is $\cY_j$ for $j\leq 2p_\cY$, the premise of Observation \ref{obsnewnew} holds, so that the $P_x$-probability for test $\cT^\delta_{\bar{i}j}$ to accept the hypothesis $x\in\cX_{\bar{i}}$ is at least $1-\delta$.  When the latter event takes place for all $j\leq 2p_\cY$, the test $\ov\cT$ accepts hypothesis $\mX^s$;  as a result, the $P_x$-probability for $\ov\cT$ to accept $\mX^s$ is at least $1-2p_\cY \delta\geq1-\epsilon$. Similar reasoning demonstrates  that in the ``good case," when $x$ obeys $\mY^s$, the latter hypothesis will be accepted by $\ov\cT$ with $P_x$-probability at least $1-\epsilon$.
\qed
\paragraph{Remark.} Proposition \ref{proprisk} does not apply beyond the ``good case." Note that if $A\cX_i$ and $A\cY_j$ share a point for some $i, j$, we enter a ``bad case" where $\Opt^\delta_{\cX_i,\cY_j} = 0$, rendering a ``good case" impossible whatever is number $\cK$ of repeated observations available. On the other hand, suppose that we are not in the ``bad case," and that $\cK$-repeated observations are available, that is, our observation $\omega$ is given by (\ref{aggr}) with $\cK$ under our control. Assume also that the observation scheme is either sub-Gaussian, or Gaussian, or Discrete, or  Poisson. As it is immediately seen from the results on the repeated observations presented in  Section \ref{sec:obss}, passing from single-observation to $\cK$-repeated observation reduces norms $\pi_{\delta}$ by factor
$\sqrt{\cK}$, that is, increases $\Opt^\delta_{\cX_i,\cY_j}$ by factor at least $\sqrt{\cK}$. Because we are not in the ``bad case," single-observation quantities $\Opt^\delta_{\cX_i,\cY_j}$, $\delta=\epsilon/(2n)$, are positive for all $1\leq i,j\leq 2n$. Therefore, we can efficiently find the minimal $K$ which ensures that the ``good case" holds for a $K$-repeated observation, allowing us to decide between hypotheses $\mX^s$ and $\mY^s$ with risk at most $\epsilon$.

\section{Numerical illustrations}
We present here results of several ``proof of concept" simulation experiments illustrating numerical performance of the recovery and testing procedures described in Sections \ref{Sec1}--\ref{sectsitgo}.
\subsection{Estimating linear functionals}\label{sec:numeric1}
In this section we present results of three numerical experiments. The setup of the first experiment is as follows.
We put $\cX=\{x\in \bR^n:\,\|x\|_\infty\leq 10\}$, $C=I_n$,  and generate a random $A$: we draw an $m\times n$ matrix from Gaussian ensemble and scale it to have unit column norms.
Using the technique described in \cite{juditsky2011verifiable} we evaluate the ``goodness" of $A$ by computing certified lower bound $\underline s(A)$ and upper bound $\ov s(A)$ on the maximal ${s=s_*}$ such that $A$ is $s$-good (i.e., such that $\ell_1$-recovery of any $s$-sparse $x$ from the noiseless observation $\omega=Ax$  is exact). We assume that our observation is $\omega=Ax+\xi$ with $x\in \cX^s$ and Gaussian $\xi\sim \cN(0,\sigma^2)$ for $\sigma=0.01$. For different values of $s\leq \underline s(A)$ we compute four upper bounds  for the risks of the polyhedral recovery \rf{polyest} of the entries $x_j=e_j^Tx$, $j=1,...,n$, the risk reliability parameter being set to $\epsilon=0.05$:
\begin{enumerate}
\item the bound by Proposition \ref{obslin} for the ``vanilla" Dantzig Selector (DS) (estimates $e_j^T\widehat{x}_H(\cdot)$, see \rf{polyest}, with contrast matrix $H=\varkappa^{-1} A$, $\varkappa=\sigma\chi_{\epsilon/n}$);
\item we use the obtained bounds $\mr_\infty$ and $\mr_1$ for the $\ell_\infty$ and $\ell_1$ risks of Dantzig Selector to build a reduced complexity estimate of $x_j$ from Section \ref{sec:rce} when utilizing the set
    \[\ov\cZ=\{z\in \ov\cX:\,\|z\|_\infty\leq \mr_\infty,\,\|z\|_1\leq \mr_1\}\]
    as the localizer for the error of DS estimate and compute the corresponding risk bounds (estimate ``Reduced DS");
\item We build the reduced complexity estimates of $x_j$ (with combined contrasts $\ov H=[H^{(1)},h_j]\in\bR^{m\times n+1}$) following the ``complete recipe" in Section \ref{sec:rce} and compute the corresponding risk bounds (referred to as ``Simple Polyhedral");
\item Finally, we compute the ``full'' contrasts $ \ov H^{(j)}\in\bR^{m\times 2n}$ and the bounds $\rho^{(j)}$ for recovery of the entries $x_j$ of $x$  utilizing the construction described in Section \ref{sec:designlin1} (denoted ``Polyhedral").
\end{enumerate}
We present here the results for a realization of a $48\times 64$ matrix $A$ for which we certify $\underline s(A)=4$ and $\ov s(A)=6$ (for this matrix, the bound by Mutual Incoherence does not allow one to certify $s$-goodness even for $s=1$).
Results for $s=4$ are presented in Figure \ref{fig:lfe2}. In this case, computed upper bounds on the $\|\cdot\|_2$-risk are $54.9092$ for Dantzig Selector, $9.2918$ for the reduced complexity polyhedral estimate, and $7.6836$ for the ``full" polyhedral estimate.\footnote{Notice that a straightforward bound for the $\|\cdot\|_2$-norm of the estimation error in this case is $20\sqrt{8}=56.5685$.}
Plots in Figure \ref{fig:lfe1} illustrate the results of computation for $s=3$ (left plot) and $s=2$ (right plot). The induced upper bounds on the $\|\cdot\|_2$-risk are $6.6966$ ($0.7727$) for Dantzig Selector, $0.7581$ ($0.2514$) for the reduced complexity polyhedral estimate, and $ 0.7458$ ($0.2500$) for the ``full" polyhedral estimate for $s=3$ (respectively, $s=2$).
\begin{figure}[h]
\begin{tabular}{cc}
\hspace{-0.5cm}\includegraphics[width=0.5\textwidth]{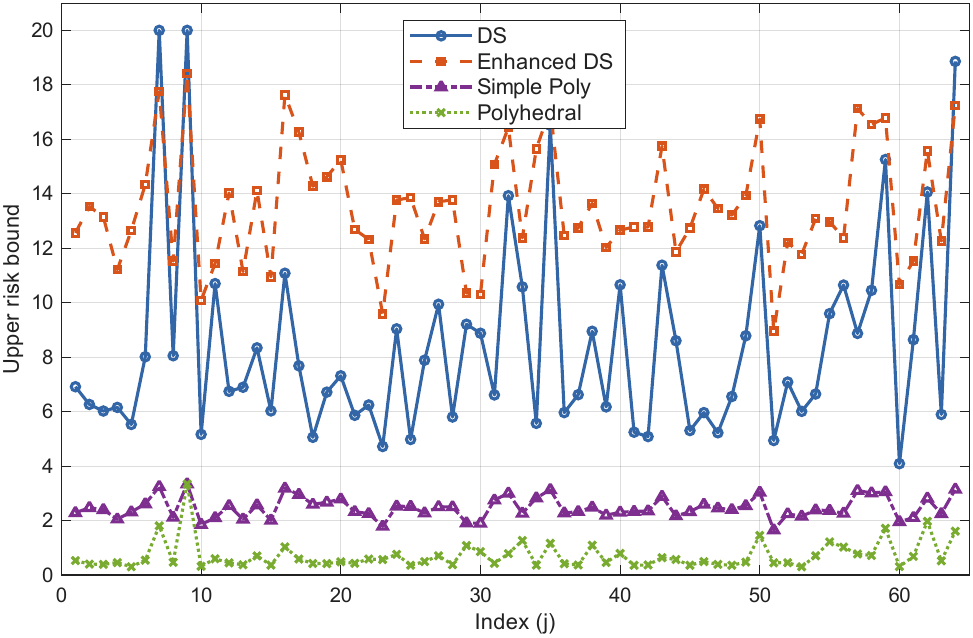}&
\includegraphics[width=0.5\textwidth]{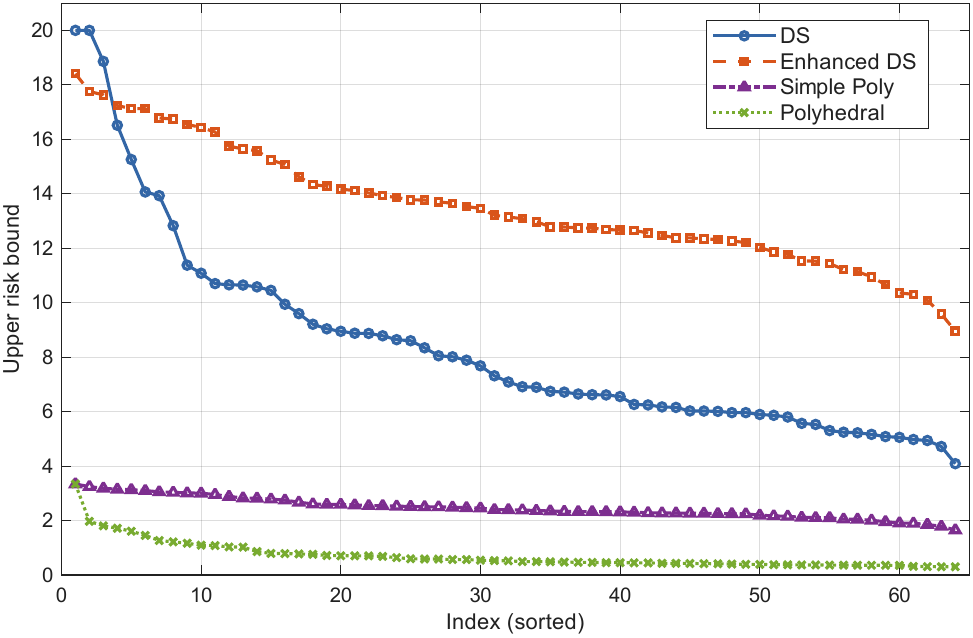}
\end{tabular}
\caption{\label{fig:lfe2} Bounds for risks of recovery of the entries $x_j$, $j=1,...,64$, sparsity parameter $s=4$.  }
\end{figure}

\begin{figure}[h]
\begin{tabular}{cc}
\hspace{-0.5cm}\includegraphics[width=0.5\textwidth]{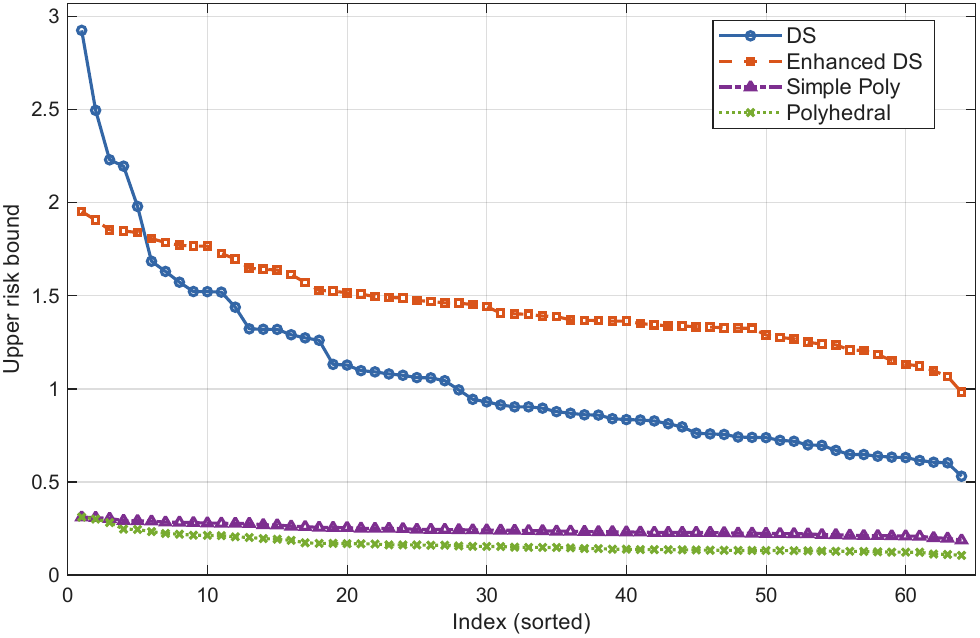}&
\includegraphics[width=0.5\textwidth]{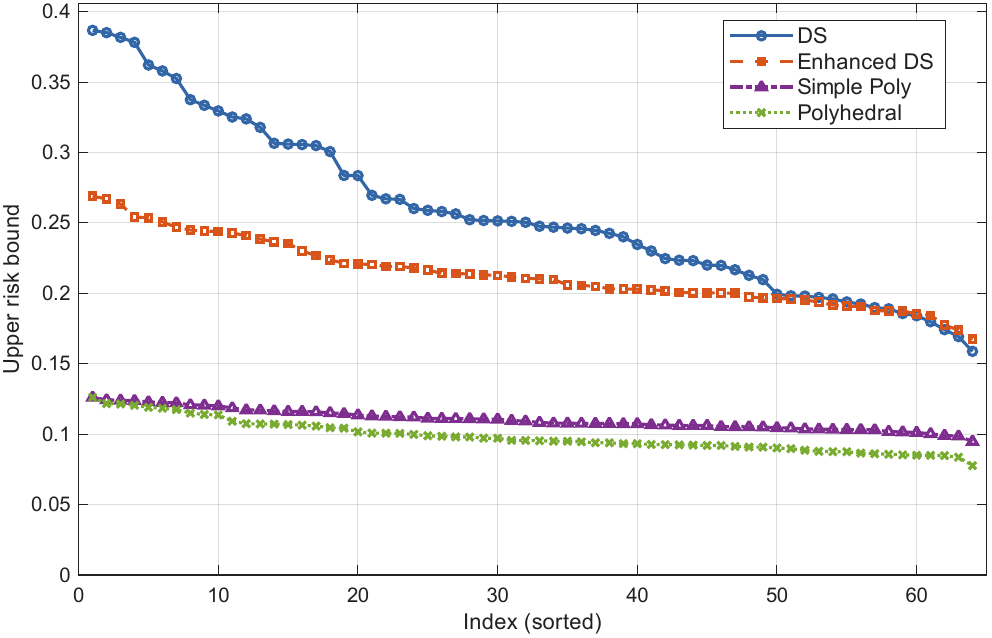}
\end{tabular}
\caption{\label{fig:lfe1} Bounds for risks of recovery of the entries $x_j$, $j=1,...,64$. Left plot: sparsity $s=3$; right plot:sparsity $s=2$.  }
\end{figure}
In our second experiment, in the same setup, we compare numerically the bounds in \rf{frho} and  \rf{opti3}. In Figure \ref{fig:55} we present the computed bounds for the polyhedral recoveries ("Enhanced DS" and "Simple Polyhedral") along with the corresponding bounds for debiased estimates ("Debiased DS" and "Debiased Polyhedral") for the values of the sparsity parameter $s=3$ and $s=2$.

\begin{figure}[h]
\begin{tabular}{cc}
\hspace{-0.5cm}\includegraphics[width=0.5\textwidth]{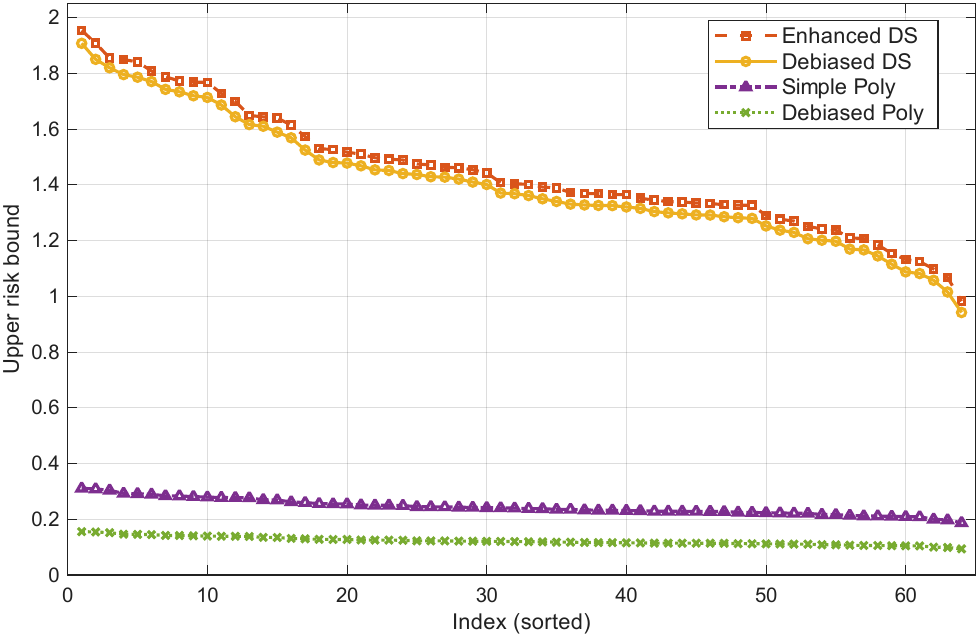}&
\includegraphics[width=0.5\textwidth]{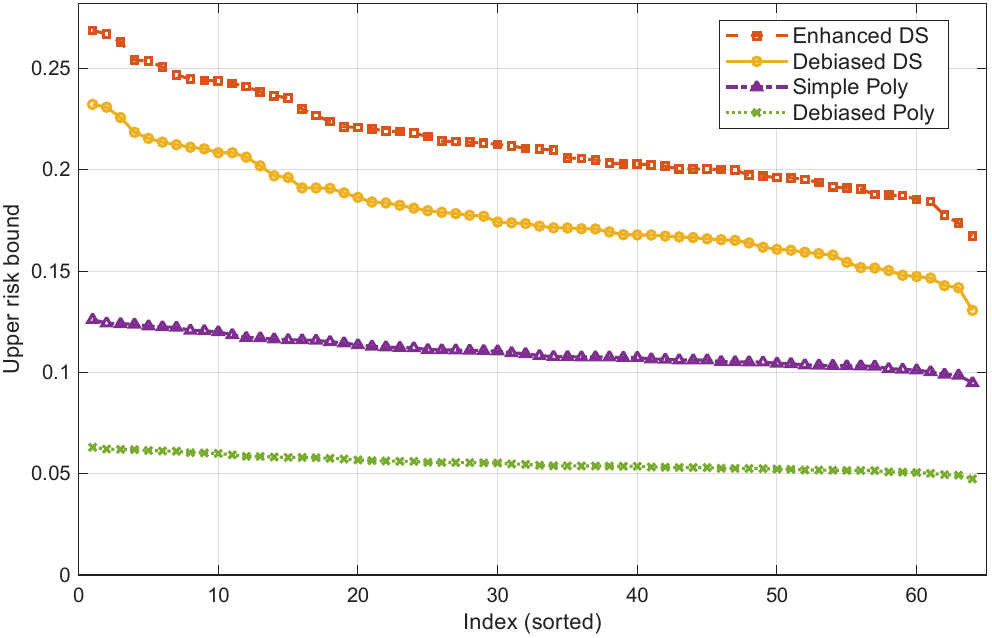}
\end{tabular}
\caption{\label{fig:55} Upper bounds for risks of recovery of the entries $x_j$, $j=1,...,64$, of the sparse signal in the setup of the first experiment in Section \ref{sec:numeric1}. Left plot: sparsity parameter $s=3$; right plot:  sparsity parameter $s=2$.}
\end{figure}
The setup of the last experiment of this section is similar to that of the first one. We select the same signal set $\cX=\{x\in \bR^n:\,\|x\|_\infty\leq 10\}$ and $C=I_n$, and generate $K=64$ sensing matrices $A[k]$ from the Gaussian ensemble in $\bR^{m\times n}$ which are then scaled to have unit column norms; we also draw $K$ random vectors $g[k]$ with unit Euclidean norm. For each $A[k]$ we compute the bound $\underline s(A[k])$ for the $s$-goodness parameter $s_*$ and compute upper bounds \rf{risklinest} for the risks of the polyhedral estimates of the values $g[k]^Tx$ of unknown $x\in \cX^s$ from observation $\omega=Ax+\xi$. Here $\xi\sim \cN(0,\sigma^2)$ with $\sigma=0.01$ and the risk reliability parameter is set to $\epsilon=0.05$. We compute bounds for the risk of recovery of $g[k]^Tx$ by $g[k]^T\wh x_{DS}$ where $\wh x_{DS}$ is the Dantzig Selector estimate of $x$, risk bound by Proposition \ref{prop:Gred} for the ``reduced complexity'' polyhedral estimate (Simple Polyhedral estimate), and the bound \aic{by Proposition \ref{aiprop1}}{\rf{claimthat}} for the polyhedral estimate in Section \ref{sec:designlin1} (Polyhedral estimate). Figures \ref{fig:lfe11} and \ref{fig:lfe12} show the results for $s = \underline{s}(A[k])$ and $s = \underline{s}(A[k]) - 1$ (here $\underline s(A[k])=3$ for 2 realizations of $A[k]$, and $\underline s(A[k])=4$ for the remaining ones).

\begin{figure}[h]
\begin{tabular}{cc}
\hspace{-0.5cm}\includegraphics[width=0.49\textwidth]{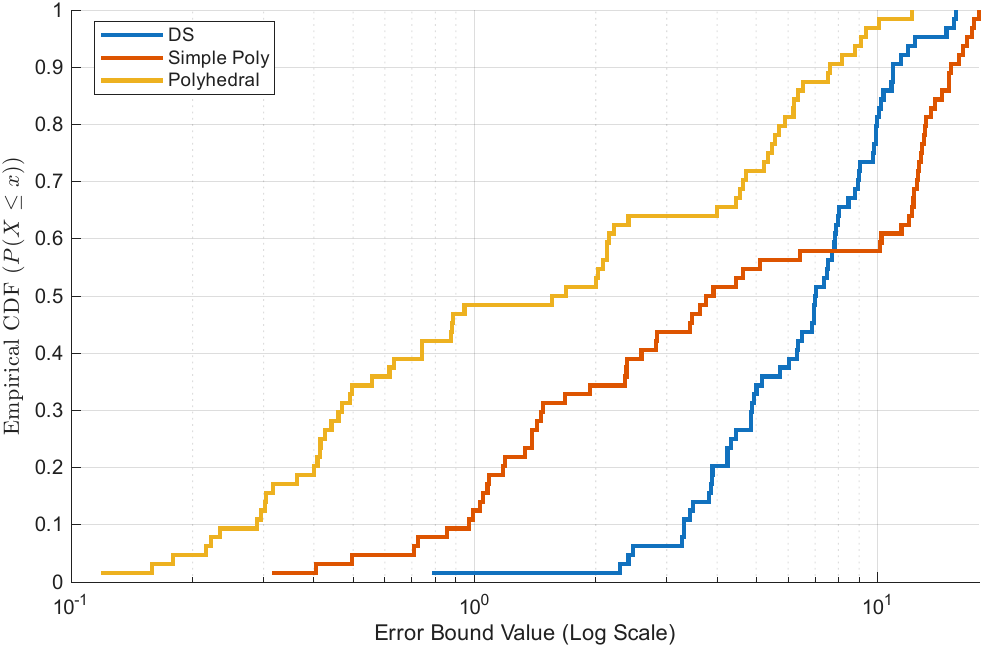}&
\includegraphics[width=0.51\textwidth]{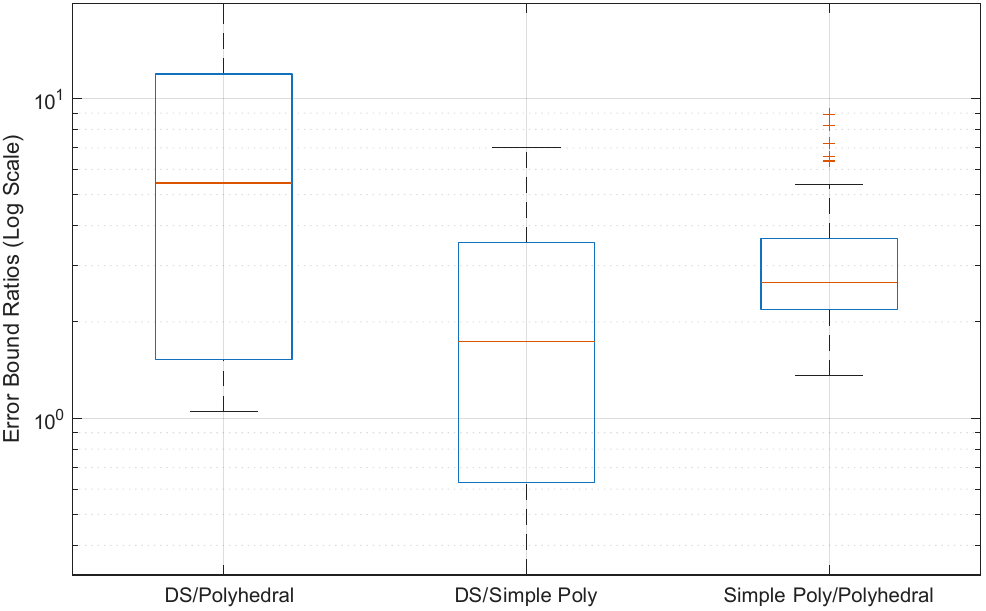}
\end{tabular}
\caption{\label{fig:lfe11} Upper bounds for risks of estimation of random linear functionals $g^Tx$, sparsity parameter $s=\underline s(A[k])$. Left plot: cumulative distributions of the risk bounds; right plot: distributions of the ratios of the risk bounds.}
\end{figure}

\begin{figure}[h]
\begin{tabular}{cc}
\hspace{-0.5cm}\includegraphics[width=0.49\textwidth]{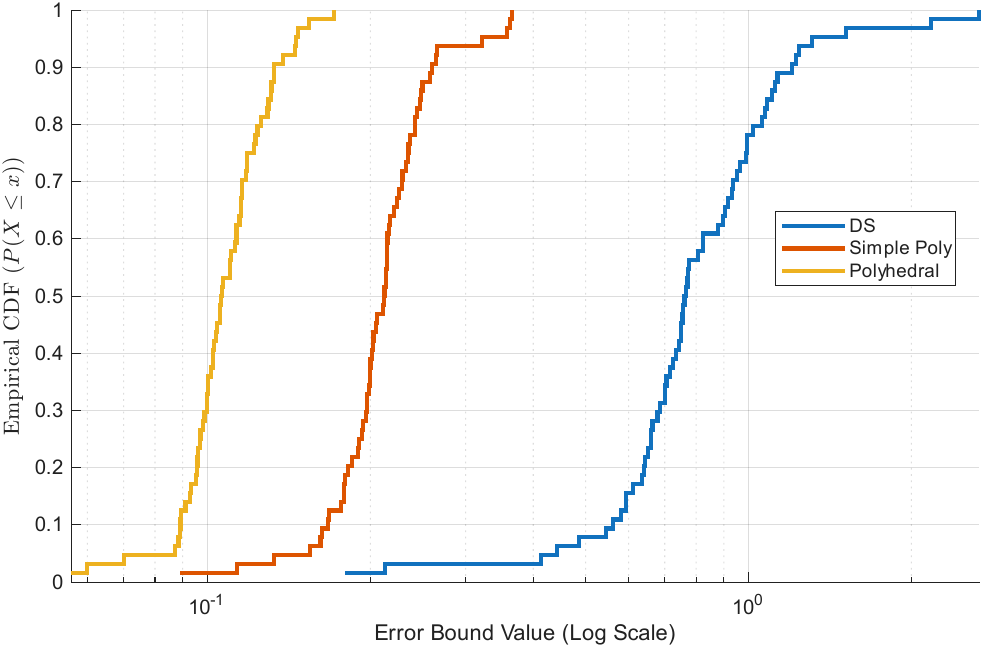}&
\includegraphics[width=0.51\textwidth]{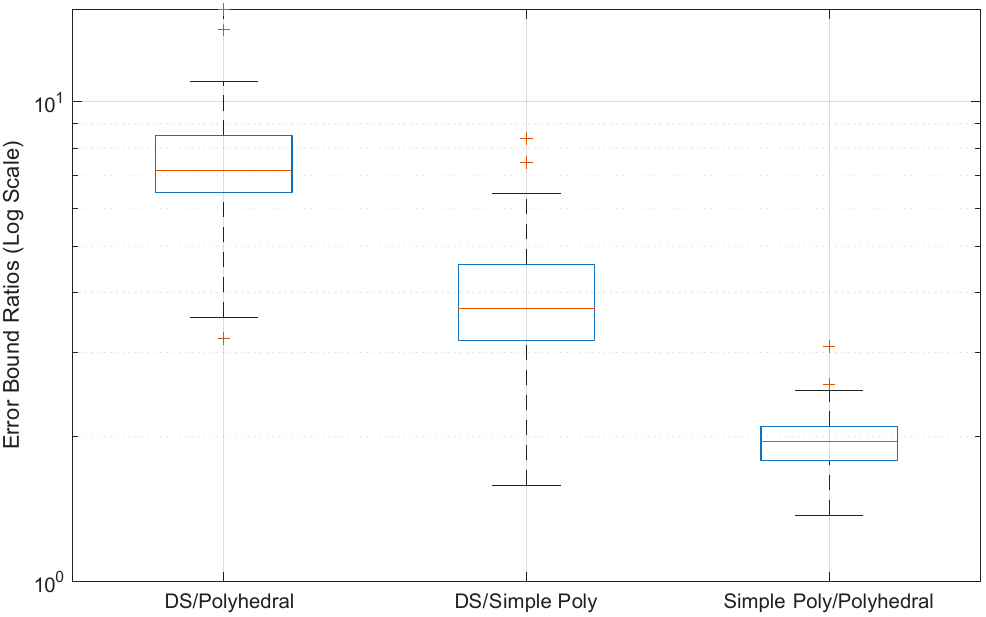}
\end{tabular}
\caption{\label{fig:lfe12} Upper bounds for risks of estimation of random linear functionals $g^Tx$, sparsity parameter $s=\underline s(A[k])-1$. Left plot: cumulative distributions of the risk bounds; right plot: distributions of the ratios of the risk bounds.}
\end{figure}

\subsection{Sparse signal recovery}
We now present results of ``proof of concept" simulation experiments for the recovery procedures proposed in
Sections \ref{sec:3} and \ref{Sec2b}. In these experiments
\begin{itemize}
\item $\cX=\{x\in\bR^{32}:\,\|x\|_\infty\leq10\}$;
\item a $28\times 32$ matrix $A$ is drawn at random from the Gaussian ensemble and then normalized to have columns of unit Euclidean length;
this matrix turns out to be provably $4$-good;
\item $C=B=I$, the unit matrix, the error of recovery of $Bx=x$ was measured in $\|\cdot\|_2$ in the first series of experiments and in $\|\cdot\|_1$ in the second series;
\item we have computed estimates described in Sections \ref{sec:3} and \ref{Sec2b} for the values $s=3$ and $s=5$ of the sparsity parameter. We set the reliability parameter $\epsilon=0.05$ assuming Gaussian observation scheme with $\sigma=0.01$.
\end{itemize}
The results of experiments are summarized in Figures \ref{fig21}, \ref{fig22} for $s=3$ and in Figures \ref{fig23}, \ref{fig24} for $s=5$. Each boxplot represents the actual recovery errors in the series of 1000 experiments with randomly generated signals of prescribed sparsity; horizontal red bars represent the corresponding risk bounds
\par
As a benchmark, we present performance data for the Dantzig Selector (DS)---that is, the polyhedral estimate with
$H$ obtained from $A$  by scaling the columns to unit $\pi_{\epsilon/n}$-norm---together with the upper risk bound $\max_\ell\ov\Opt_\ell[H]$ (see Lemma \ref{prop:2rbound}). For comparison: under the circumstances, the theoretical upper bound on the  $(\epsilon=0.05,\,\|\cdot\|_2)$-risk of recovering {\sl all}
signals from $\cX$ (including nonsparse ones) via the optimal polyhedral estimate is  91.709 (see \cite[Section 5.1]{PUP}).
\aic{
\begin{table}
{\scriptsize$$
\begin{array}{||c||c|c|c||c||c||c||c|c|c||c||}
\cline{1-5}
\cline{7-11}
\cline{1-5}
\cline{7-11}
\hbox{Estimate}&\hbox{median}&\hbox{mean}&\max&\hbox{Risk}&&\hbox{Estimate}&\hbox{median}&\hbox{mean}&\max&\hbox{Risk}\\
\cline{1-5}
\cline{7-11}

\hbox{DS}&0.064 & 0.072 &0.213 &  8.155&&\hbox{DS}&0.108 & 0.114& 0.314 &53.44\\
\cline{1-5}
\cline{7-11}
\hbox{Prop.\ }\ref{bxinf3}&
& & &  1.956&&\hbox{Prop.\ }\ref{bxinf3}& & & &62.16  \\
\cline{1-5}
\cline{7-11}
\hbox{Prop.\ } \ref{prop:reduce1}&
0.098 & 0.102& 0.155 &  1.520&&\hbox{Prop.\ } \ref{prop:reduce1}&
0.167&  0.168& 0.250 &63.25\\
\cline{1-5}
\cline{7-11}
\hbox{Prop.\ } \ref{prop:Gred}& 0.109 & 0.112 &0.170 &  1.080&&\hbox{Prop.\ } \ref{prop:Gred}&0.167 & 0.168& 0.250 &63.25\\
\cline{1-5}
\cline{7-11}
\hbox{Obs.\ }
 \ref{obsnew}& 0.070 & 0.072& 0.139  & 1.405&&\hbox{Obs.\ }
 \ref{obsnew}&0.122 & 0.129& 0.300 &50.40\\
\cline{1-5}
\cline{7-11}
\hbox{Prop.\ } \ref{summary}&0.072&  0.075& 0.132 &  0.979&&\hbox{Prop.\ } \ref{summary}&0.124 & 0.129 &0.268&9.312\\
\cline{1-5}
\cline{7-11}
\multicolumn{5}{c}{s=3}&\multicolumn{1}{c}{}&\multicolumn{5}{c}{s=5}\\
\end{array}
$$}
\caption{\label{tables3} Recovery errors for $\|\cdot\|_2$-recovery, data over 100 simulations, and theoretical upper bounds on $(0.05,s,\|\cdot\|_2)$-risk.}
\end{table}
 {\crd Arik, why no bound "by Proposition 4.3"? Propositions 4.5 and 4.6 both describe "reduced complexity estimate". What is the difference between the two?}
\begin{table}
{\scriptsize$$
\begin{array}{||c||c|c|c||c||c||c||c|c|c||c||}
\cline{1-5}
\cline{7-11}
\cline{1-5}
\cline{7-11}
\hbox{Estimate}&\hbox{median}&\hbox{mean}&\max&\hbox{Risk}&&\hbox{Estimate}&\hbox{median}&\hbox{mean}&\max&\hbox{Risk}\\

\cline{1-5}
\cline{7-11}
\hbox{DS}&  0.106& 0.129& 0.465 &32.79&&\hbox{DS}&       0.211 & 0.272 &0.960 &287.9\\
\cline{1-5}
\cline{7-11}

\hbox{Prop.\ } \ref{prop:reduce1}&
0.164 & 0.170& 0.266 &3.722&&\hbox{Prop.\ } \ref{prop:reduce1}&
0.202&  0.223& 0.539&200.0\\
\cline{1-5}
\cline{7-11}
\hbox{Prop.\ } \ref{prop:Gred}&0.184 &0.188 &0.289&3.722&&\hbox{Prop.\ } \ref{prop:Gred}&0.212&0.218& 0.333& 200.0\\
\cline{1-5}
\cline{7-11}
\hbox{Obs. }
 \ref{obsnew}& 0.133 & 0.140& 0.324 &6.666&&\hbox{Obs.\ }
 \ref{obsnew}& 0.096 & 0.100 &0.210 &267.4\\
\cline{1-5}
\cline{7-11}
\hbox{Prop.\ } \ref{summary}& 0.129 & 0.131& 0.257 &5.246&&\hbox{Prop.\ } \ref{summary}&0.233 & 0.248& 0.582  & 48.28\\
\cline{1-5}
\cline{7-11}
\multicolumn{5}{c}{s=3}&\multicolumn{1}{c}{}&\multicolumn{5}{c}{s=5}\\
\end{array}
$$}

\caption{\label{tables4} Recovery errors for $\|\cdot\|_1$-recovery, data over 100 simulations, and theoretical upper bounds on $(0.05,s,\|\cdot\|_1)$-risk.}
\end{table}
}
{
\begin{figure}[h]
\begin{center}
\includegraphics[width=0.8\textwidth]{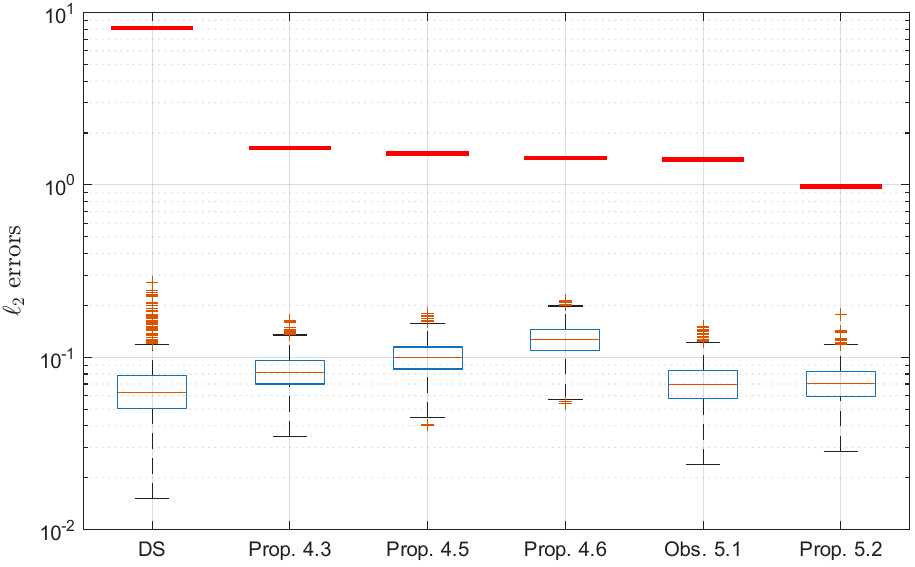}
\end{center}
\caption{\label{fig21} $\ell_2$-estimation errors and upper risk bounds for recovery of random $3$-sparse ($s=3$)signals using routines from Sections \ref{sec:3} and \ref{Sec2b}. Red bars represent the corresponding risk bounds.}
\end{figure}
\begin{figure}[h]
\begin{center}
\includegraphics[width=0.8\textwidth]{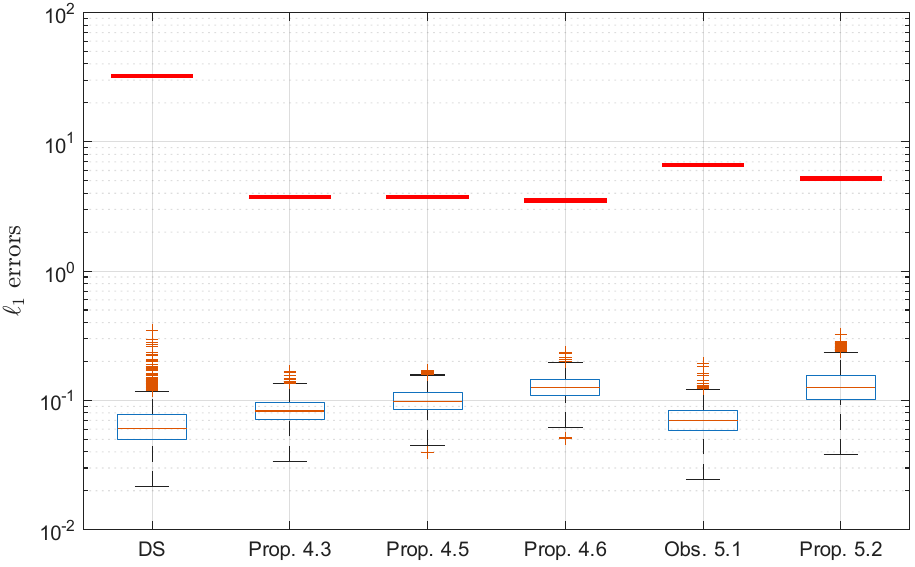}
\end{center}
\caption{\label{fig22} $\ell_1$-estimation errors and upper risk bounds for recovery of random $3$-sparse ($s=3$) signals using routines from Sections \ref{sec:3} and \ref{Sec2b}}
\end{figure}
\begin{figure}[h]
\begin{center}
\includegraphics[width=0.8\textwidth]{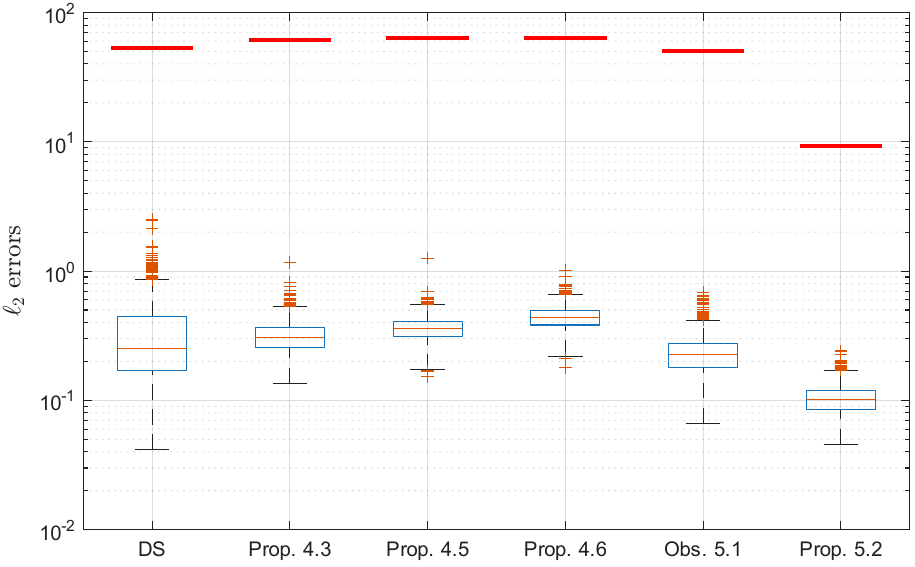}
\end{center}
\caption{\label{fig23} $\ell_2$-estimation errors and upper risk bounds for recovery of random $5$-sparse  ($s=5$) signals using routines from Sections \ref{sec:3} and \ref{Sec2b}.}
\end{figure}
\begin{figure}[h]
\begin{center}
\includegraphics[width=0.8\textwidth]{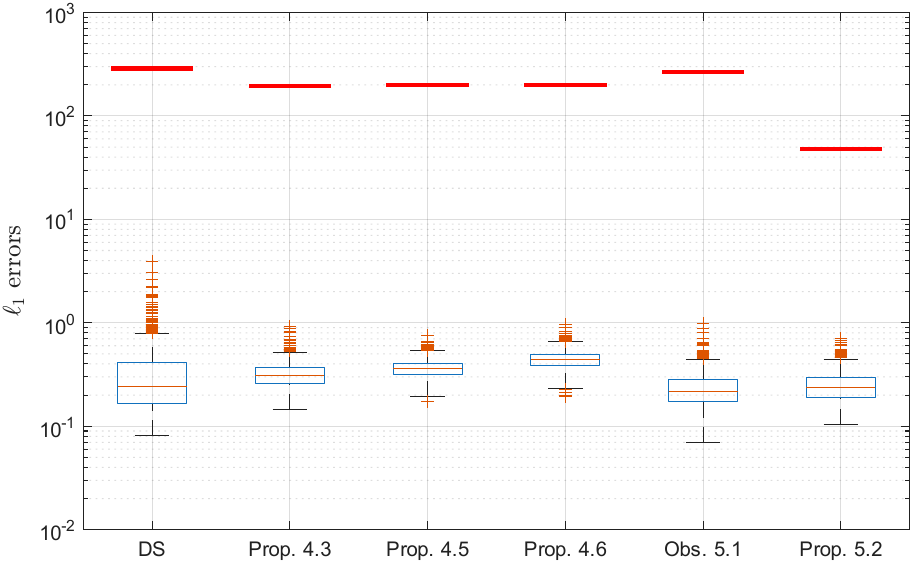}
\end{center}
\caption{\label{fig24} $\ell_1$-estimation errors and upper risk bounds for recovery of random $5$-sparse ($s=5$) signals using routines from Sections \ref{sec:3} and \ref{Sec2b}}
\end{figure}

}

\subsection{Testing sparse hypotheses}
We consider Gaussian observation scheme with observation noise $\xi\sim\cN(0,I_n)$ and
\[C=A=I_n,\;\cX=\{x\in\bR^n_+: \,n\leq \sum_ix_i\leq 2n\}, \;\cY=\{y\in\bR^n_+:\,y_i\leq 2,\,i\leq n\}.
\]
In the present situation, sets $\cX$ and $\cY$ have a massive intersection containing the box $\{x:1\leq x_i\leq 2\,\forall i\}$. Thus, there is no nontrivial test deciding upon the hypotheses $x\in\cX$ and $x\in\cY$ however large the number  $\cK$ of observations may be. The situation changes dramatically when passing to sparse signals: assuming $x$ to be $s$-sparse, one has \[
\cX_i\subset\left\{x\in\bR^n_+ :\,\sum_kx_k\geq n\right\}
\] and
\[\cY_j\subset\left\{y\in\bR^n_+:\,\sum_ky_k\leq s\|y\|_\infty\leq 2s\right\}.
\]
When $s<2n$, one has $\Opt^\delta_{\cX_i,\cY_j}>0$ for all $i,j$ already when $\cK=1$.
Immediate computation shows that when $\epsilon=0.01$, $n=100$, and $s=10$, only one ``individual observation" (\ref{ind}) is  sufficient to ensure that ``good case" holds, allowing  to decide on $\mX^s$ vs. $\mY^s$ with risk $\leq 0.01$.
When $s$ is increased to $40$, $\cK=17$ individual observations suffice to achieve the same risk of test.
When $n=10\,000$ and $s=4\,900$, $\cK=26$ individual observations are enough, and  a subsequent drop in $\epsilon$ from 1e-2 to 1e-4 increases $\cK$ to 35.
This is fully supported by numerical evidence: in a series of 1\,000 simulations with $\epsilon=1e-4$, $n=10\,000$, $s=4\,900$, and $\cK=35$, not a single wrong decision was observed.\footnote{The reader could ask how ``computation-friendly" could be a procedure which requires computing $(2n)^2=4{\rm e}8$ separators $h^\delta_{\cX_i,\cY_j}$ and then to evaluate the corresponding linear forms on the available observation. As it happens, our toy problem is permutationally symmetric in  signals obeying the hypotheses in question; as a result, it suffices to compute just one  separator (it takes seconds).
Then it takes  about 2--3 sec to apply the resulting test to the observation.}
\appendix
\section{Miscellaneous proofs}

We start with the following well-known statement.
\begin{lemma}\label{simple1} Let $y,y'\in \bR^n$ with $s$-sparse $y$ and $\|y'\|_1\leq \|y\|_1$. Then for $z=y'-y$ and $1\leq p\leq\infty$ it holds
\begin{equation}\label{simple}
\|z\|_p\leq 2^{1/p}\|z\|_{s,p}
\end{equation}
whence also
\begin{equation}\label{simplea}
\|z\|_p\leq (2s)^{1/p}\|z\|_\infty.
\end{equation}
\end{lemma}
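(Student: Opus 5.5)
Let $S$ be the support of $y$, with $|S|\leq s$, and let $\bar S$ be its complement. First I will show that the mass of $z$ off $S$ is controlled by its mass on $S$. Since $y'=y+z$ and $y$ vanishes on $\bar S$,
\[
\|y'\|_1=\sum_{i\in S}|y_i+z_i|+\sum_{i\in\bar S}|z_i|\geq \|y\|_1-\|z_S\|_1+\|z_{\bar S}\|_1 .
\]
Combined with $\|y'\|_1\leq\|y\|_1$, this gives the cone condition $\|z_{\bar S}\|_1\leq\|z_S\|_1$. In particular $\|z\|_1\leq 2\|z_S\|_1\leq 2\|z\|_{s,1}$, which is (\ref{simple}) for $p=1$.

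For general $p$, the plan is to pass from this $\ell_1$ cone condition to $\ell_p$. Let $T$ be the index set of the $s$ largest in magnitude entries of $z$, so that $\|z\|_{s,p}=\|z_T\|_p$. Let $\tau=\min_{i\in T}|z_i|$, so every entry outside $T$ has magnitude at most $\tau$. Because $T$ collects the largest entries and $|S|\leq s=|T|$, we have $\|z_S\|_1\leq\|z_T\|_1$. Hence
\[
\|z_{\bar T}\|_1=\|z\|_1-\|z_T\|_1\leq 2\|z_S\|_1-\|z_T\|_1\leq\|z_T\|_1 .
\]
For $p<\infty$ it then follows that
\[
\|z_{\bar T}\|_p^p=\sum_{i\notin T}|z_i|^p\leq\tau^{p-1}\|z_{\bar T}\|_1\leq\tau^{p-1}\|z_T\|_1\leq\sum_{i\in T}|z_i|^{p-1}|z_i|=\|z_T\|_p^p,
\]
where the last inequality uses $|z_i|\geq\tau$ for $i\in T$. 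Therefore $\|z\|_p^p=\|z_T\|_p^p+\|z_{\bar T}\|_p^p\leq 2\|z\|_{s,p}^p$, which is (\ref{simple}). For $p=\infty$ the claim is trivial, since $\|z\|_\infty=\|z\|_{s,\infty}$ and $2^{1/\infty}=1$.

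Finally, (\ref{simplea}) follows directly because $\|z\|_{s,p}\leq s^{1/p}\|z\|_\infty$.

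The main obstacle is the step $\|z_{\bar T}\|_p^p\leq\|z_T\|_p^p$, that is, transferring the $\ell_1$ cone condition to $\ell_p$. I handle it with the threshold $\tau$ separating $T$ from its complement: this makes the weights $\tau^{p-1}$ act uniformly, so the $\ell_1$ comparison between $\bar T$ and $T$ carries over to $\ell_p$ without loss.
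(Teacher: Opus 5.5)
Your proof is correct and follows essentially the paper's argument. You first derive the $\ell_1$ cone inequality $\|z_{\bar S}\|_1\leq\|z_S\|_1$ on the support of $y$, then transfer it to $\ell_p$ by bounding the entries outside the top-$s$ set by a threshold. The only difference is that the paper uses the average $s^{-1}\|z\|_{s,1}$ of the top-$s$ magnitudes as the threshold and then invokes the power-mean inequality $s^{1-p}\|z\|_{s,1}^p\leq\|z\|_{s,p}^p$, whereas your threshold $\tau=\min_{i\in T}|z_i|$ allows a termwise comparison that avoids that inequality.
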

{\bf Proof.} Indeed, let us first check that (\ref{simple}) holds true for $p=1$. Let $I$ be the set of indices of nonzero entries in $y$.
We have
\[
\sum_{i\not\in I}|z_i|=\sum_{i\not\in I}|y^\prime_i|
=\|y'\|_1-\sum_{i\in I}|y^\prime_i|\leq \|y\|_1-\sum_{i\in I}|y^\prime_i|=\sum_{i\in I}|y_i|-\sum_{i\in I}|y^\prime_i|\leq\sum_{i\in I}|z_i| .
\]
Hence,
\be
\|z\|_1=\sum_{i\not\in I}|z_i|+\sum_{i\in I}|z_i|\leq 2\sum_{i\in I}|z_i|\leq 2\|z\|_{s,1},
\ee{z1}
 as claimed. Now let $1<p<\infty$ (the case of $p=\infty$ is trivial). Assuming  w.l.o.g. that $|z_1|\leq|z_2|\leq...
\leq |z_n|$, we have for $i\leq n-s$
\[|z_i|\leq |z_{n-s+1}|\leq s^{-1}\|z\|_{s,1},
\] and also $\sum_{i=1}^{n-s}|z_i|\leq \|z\|_{s,1}$ by \rf{z1}. Thus,
$$
\sum_{i=1}^{n-s}|z_i|^p\leq \left[{\max}_{i\leq n-s}|z_i|\right]^{p-1}
{\sum}_{i=1}^{n-s}|z_i|\leq \|z\|_{s,1}^ps^{1-p}=\left[{\sum}_{i=n-s+1}^n|z_i|\right]^ps^{1-p}
\leq {\sum}_{i=n-s+1}^n|z_i|^p,
$$
implying that
\[\|z\|_p^p\leq 2{\sum}_{i=n-s+1}^n|z_i|^p=2\|z\|_{s,p}^p.\fqed\]

\begin{lemma}\label{lem:a1} Let $\xi_i\sim\SG(\mu_i,\rho^2_i I_m)$ (i.e., sub-Gaussian with parameters $\mu_i$ and $\rho^2_i I_m$) random vectors, $0\leq \rho_i\leq \sigma$,  and let $x$ be a probabilistic vector. Let also $\mu=\mu(x):=\sum_j\mu_jx_j$ and let $\varrho$ 
satisfy
\be
\forall(x\in\cX,f\in\bR^m):\quad \sum_ix_i\exp\left\{\tfrac{2}3[f^T(\mu(x)-\mu_i)]^2\right\}\leq \exp\left\{\half\varrho^2f^Tf\right\},
\ee{rho0}
e.g., $\varrho={2\over\sqrt{3}}\max_{i,j}\|\mu_i-\mu_j\|_2$.
Then the mixture  $\xi_x$ of random variables $\xi_i-\mu$ with coefficients $x_i$ is sub-Gaussian with parameters $0$ and $[\sigma^2+\varrho^2]I_m$.
\end{lemma}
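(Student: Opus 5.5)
We need to bound the moment generating function of the mixture directly. Fix $x\in\cX$ and $f\in\bR^m$. By construction, $\xi_x$ is distributed as $\xi_\iota-\mu$, where $\iota$ takes value $i$ with probability $x_i$ and $\xi_i\sim\SG(\mu_i,\rho_i^2I_m)$. Conditioning on $\iota$ gives
\[
\bE\{e^{f^T\xi_x}\}=\sum_i x_i\,\bE\{e^{f^T(\xi_i-\mu)}\}\leq \sum_i x_i \exp\{f^T(\mu_i-\mu)+\half\rho_i^2 f^Tf\}\leq e^{\half\sigma^2f^Tf}\sum_ix_i e^{a_i},
\]
with $a_i:=f^T(\mu_i-\mu)$. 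Note that $\sum_i x_ia_i=0$ because $\mu=\sum_jx_j\mu_j$. It therefore suffices to prove
\[
\sum_ix_ie^{a_i}\leq \exp\{\half\varrho^2f^Tf\}.
\]

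The key step is the scalar inequality
\[
e^{a}\leq a+e^{\frac23a^2}\qquad\forall a\in\bR.
\]
Granting it, summing with weights $x_i$ kills the linear term, since $\sum_i x_ia_i=0$. This leaves $\sum_ix_i e^{a_i}\leq\sum_ix_ie^{\frac23 a_i^2}$, which is exactly the left-hand side of \rf{rho0}. Assumption \rf{rho0} then bounds it by $\exp\{\half\varrho^2f^Tf\}$, completing the proof.

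The main obstacle is verifying the scalar inequality. For $|a|$ small, compare Taylor series: the left side is $1+a+a^2/2+a^3/6+\dots$ and the right side is $1+a+\tfrac23a^2+\dots$, and the margin $a^2/6$ dominates the cubic term when $|a|$ is modest. For large $a>0$, $e^{\frac23a^2}$ eventually dominates $e^a$ (once $a\geq 3/2$ directly). For $a<0$, write $e^a-a\leq$ something like $1+a^2/2\leq e^{\frac23 a^2}$; indeed $e^a-a-1\leq a^2/2$ for $a\leq0$. For $a\in(0,3/2)$, I would check $g(a)=e^{\frac23a^2}+a-e^a\geq0$, for instance via the bound $e^a\leq 1+a+\frac{a^2}{2}e^{a}$ together with $e^{\frac23a^2}\geq 1+\frac23a^2$. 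This reduces to $\frac12 e^{a}\leq \frac23$ only for small $a$, so the middle range may need a finer split or a numerical/calculus check. This is the delicate constant that dictates the $\tfrac23$ in \rf{rho0}.

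Finally, the example $\varrho=\frac2{\sqrt3}\max_{i,j}\|\mu_i-\mu_j\|_2$ is handled as follows. We have $|a_i|=|f^T(\mu_i-\mu)|\leq \|f\|_2\max_j\|\mu_i-\mu_j\|_2=:\|f\|_2D$, since $\mu$ is a convex combination of the $\mu_j$. Hence $\sum_ix_ie^{\frac23a_i^2}\leq e^{\frac23D^2f^Tf}=e^{\half\varrho^2f^Tf}$ with $\varrho^2=\frac43D^2$, so \rf{rho0} holds for this choice of $\varrho$.
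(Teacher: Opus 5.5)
Your proposal is the paper's proof step for step. You condition on the mixture index, apply the sub-Gaussian bound with $\rho_i\le\sigma$, and use the scalar inequality $e^{z}\le z+e^{\frac23 z^2}$. The linear terms then cancel via $\sum_i x_i\mu_i=\mu$, and you finish with (\ref{rho0}). Your check of the example $\varrho=\frac{2}{\sqrt3}\max_{i,j}\|\mu_i-\mu_j\|_2$ through $\|\mu_i-\mu\|_2\le\max_j\|\mu_i-\mu_j\|_2$ is also the paper's.

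The paper simply asserts the scalar inequality without proof. The only loose end in your write-up is the one you flag yourself, the range $a\in(\ln\frac43,\frac32)$. Your argument for $a\le 0$ is fine, and all $a\ge0$ can be handled at once with no case split. By AM--GM, $a^{2j+1}\le\frac12\bigl(a^{2j}+a^{2j+2}\bigr)$ for $a\ge0$, so
\[
e^a-a=1+\sum_{k\ge2}\frac{a^k}{k!}\le 1+\sum_{j\ge1}c_j a^{2j},\qquad c_1=\frac12+\frac1{12}=\frac7{12},\quad c_j=\frac1{(2j)!}+\frac1{2(2j-1)!}+\frac1{2(2j+1)!}\;\;(j\ge2).
\]
Compare these coefficients with those of $e^{\frac23a^2}$. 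For $j=1$ you have $c_1=\frac7{12}\le\frac23$. For $j\ge2$ you have $c_j<\frac1{(2j-1)!}\le\frac{(2/3)^j}{j!}$. The last inequality is $(j+1)(j+2)\cdots(2j-1)\ge(3/2)^j$, which holds at $j=2$ and is preserved under $j\to j+1$, since the left side gains the factor $\frac{2j(2j+1)}{j+1}\ge\frac32$. Hence the series is termwise dominated by $\sum_{j\ge0}\frac{(2/3)^j}{j!}a^{2j}=e^{\frac23a^2}$, which gives $e^a\le a+e^{\frac23a^2}$ on $[0,\infty)$.
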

{\bf Proof.} Indeed, let $f\in\bR^m$. Then
\begin{align*}
\bE\{\exp\{f^T\xi_x\}\}&=\sum_ix_i\bE\{\exp\{f^T[\xi_i-\mu]\}\}\\
&=\sum_ix_i\exp\{f^T[\mu_i-\mu]\}\bE\{\exp\{f^T[\xi_i-\mu_i]\}\}\\
\hbox{[because $\xi_i\sim\SG(\mu_i,\rho^2_iI_m)$]\;\;}&\leq \sum_ix_i\exp\{ f^T[\mu_i-\mu]\} \exp\{\half\rho_i^2f^Tf\}\\
\hbox{[due to $\exp\{z\}\leq z+\exp\{\frac{2}3z^2\}\;\forall z$]\;\;} &\leq \exp\left\{\half\sigma^2f^Tf\right\}\sum_ix_i\left[ f^T[\mu_i-\mu]+\exp\{\tfrac{2}3[f^T[\mu-\mu_i]]^2\}\right]\\
&=\exp\left\{\half\sigma^2\|f\|_2^2\right\}\bigg[\sum_ix_i\exp\{\tfrac{2}3[f^T[\mu-\mu_i]]^2\}\bigg]
\end{align*}
due to $\mu=\sum_ix_i\mu_i$ and $\sum_ix_i=1$. On the other hand,
\begin{align*}
\sum_ix_i\exp\{\tfrac{2}3[f^T[\mu-\mu_i]]^2\}&=\sum_ix_i\exp\{\tfrac{2}3f^T[\mu-\mu_i][\mu-\mu_i]^Tf\}\\
&\leq\max_i\exp\{\tfrac{2}3\max_if^T[\mu-\mu_i][\mu-\mu_i]^Tf\}\\
&\leq\max_i\exp\{\tfrac{2}3\|\mu-\mu_i\|_2^2\|f\|_2^2\}\\
\hbox{\ [as $\mu\in\mathrm{Conv}\{\mu_1,...,\mu_n\}$]\;\;}&\leq\exp\{\tfrac{2}3\max_{i,j}\|\mu_i-\mu_j\|_2^2\|f\|_2^2\}.
\end{align*}
Thus, \rf{rho0} holds with
\[
\varrho
={2\over\sqrt{3}}\max_{i,j}\|\mu_i-\mu_j\|_2.\fqed
\]
\nocite{linform,juditsky2020polyhedral,PUP,JuNeATM}
 \end{document}